\newtheorem{theorem}{Theorem}[section]
\newtheorem{lemma}[theorem]{Lemma}
\newtheorem{definition}[theorem]{Definition}
\newtheorem{corollary}[theorem]{Corollary}
\newcommand{\total}{omnipredictor\xspace}
\newcommand{\Total}{Omnipredictor\xspace}
\newcommand{\totals}{omnipredictors\xspace}
\newcommand{\Totals}{Omnipredictors\xspace}
\newcommand{\wh}{\widehat}
\newcommand{\wt}{\widetilde}
\newcommand{\eps}{\epsilon}
\newcommand{\R}{\mathbb{R}}
\newcommand{\D}{\mathcal{D}}
\newcommand{\X}{\mathcal{X}}
\newcommand{\Y}{\mathcal{Y}}
\newcommand{\mC}{\mathcal{C}}
\newcommand{\mH}{\mathcal{H}}
\newcommand{\mS}{\mathcal{S}}
\newcommand{\mF}{\mathcal{F}}
\newcommand{\mL}{\mathcal{L}}
\newcommand{\mP}{\mathcal{P}}
\newcommand{\mI}{\mathcal{I}}
\newcommand{\mLC}{\mathrm{Lin}_\mathcal{C}}
\newcommand{\Split}{\mathrm{Split}}
\newcommand{\Merge}{\mathrm{Merge}}
\newcommand{\bll}{\bar{\ell}}
\renewcommand{\i}{\mathbf{i}}
\renewcommand{\j}{\mathbf{j}}
\newcommand{\y}{\mathbf{y}}
\newcommand{\x}{\mathbf{x}}
\newcommand{\z}{\mathbf{z}}
\newcommand{\rgta}{\rightarrow}
\newcommand{\lt}{\left}
\newcommand{\rt}{\right}
\newcommand{\zo}{\ensuremath{\{0,1\}}}
\newcommand{\izo}{\ensuremath{[0,1]}}
\newcommand{\eat}[1]{}
\newcommand{\onenorm}[1]{\left\lVert#1\right\rVert_1}
\newcommand{\infnorm}[1]{\left\lVert#1\right\rVert_\infty}
\newcommand{\macc}{multiaccuracy\xspace}
\newcommand{\Mcab}{Multicalibration\xspace}
\newcommand{\mcab}{multicalibration\xspace}
\newcommand{\mcbd}{approximately multicalibrated\xspace}
\newcommand{\mcbn}{approximate multicalibration\xspace}
\newcommand{\Mcbn}{Approximate multicalibration\xspace}
\newcommand{\smcbd}{multicalibrated\xspace}
\newcommand{\B}[1]{\mathrm{Ber(#1)}}
\newcommand{\Hspc}{\mathrm{Thr_{\mathcal{C}}}}
\renewcommand{\varepsilon}{\epsilon}
\renewcommand{\tilde}{\wt}
\renewcommand{\hat}{\wh}
\renewcommand{\eps}{\epsilon}
\newcommand{\abs}[1]{\ensuremath \Bigl\lvert #1 \Bigr\rvert}
\newcommand{\pmo}{\ensuremath{ \{\pm 1\} }}
\newcommand{\sD}{\ensuremath{(\x, \y) \sim \D}}
\newcommand{\err}{\mathrm{err}}
\newcommand{\opt}{\mathrm{OPT}}
\newcommand{\ind}[1]{\ensuremath{\mathbbm{1}(#1)}}
\newcommand{\defeq}{{:=}}
\DeclareMathOperator*{\Var}{{\bf {Var}}}
\DeclareMathOperator*{\CoVar}{{\bf {Cov}}}
\DeclareMathOperator*{\cor}{{\bf {cor}}}
\DeclareMathOperator{\poly}{poly}
\DeclareMathOperator*{\E}{\mathbf{E}}
\DeclareMathOperator*{\argmin}{arg\,min}
\DeclareMathOperator{\clip}{clip}
\newcommand{\etal}{{\it et al.}}
\newcommand{\UW}[1]{{\color{green}[Udi: #1]}}
\newcommand{\PG}[1]{{\color{red}[Parikshit: #1]}}
\newcommand{\AK}[1]{{\color{blue}[Adum: \textit{#1}]}}
\newcommand{\sign}{\mathbbm{1}^{\geq 0}}
\begin{document}
\newcommand{\theTitle}{\Totals\ }

{
\author{Parikshit Gopalan\footnote{Email: \texttt{pgopalan@vmware.com}}\\
VMware Research
\and Adam Tauman Kalai\footnote{Email: \texttt{noreply@microsoft.com}}\\
Microsoft Research
\and Omer Reingold\footnote{Most of the work performed while visiting VMware Research. Research supported in part by
NSF Award IIS-1908774, The Simons Foundation collaboration on the theory of algorithmic fairness and The Sloan Foundation grant G-2020-13941. Email: \texttt{reingold@stanford.edu}}\\
Stanford University
\and Vatsal Sharan\footnote{Most of the work performed while at MIT. Email: \texttt{vsharan@usc.edu}}\\
USC
\and Udi Wieder\footnote{Email: \texttt{uwieder@vmware.com}}\\
VMware Research
}
}
\title{\theTitle}
\date{}

\clearpage
\maketitle

\begin{abstract}

Loss minimization is a dominant paradigm in machine learning, where a predictor is trained to minimize some loss function that depends on an uncertain event (e.g., “will it rain tomorrow?”). Different loss functions imply different learning algorithms and, at times, very different predictors. While widespread and appealing, a clear drawback of this approach is that the loss function may not be known at the time of learning, requiring the algorithm to use a best-guess loss function. Alternatively, the same classifier may be used to inform multiple decisions, which correspond to multiple loss functions, requiring multiple learning algorithms to be run on the same data. We suggest a rigorous new paradigm for loss minimization in machine learning where the loss function can be ignored at the time of learning and only be taken into account when deciding an action. 

We introduce the notion of an (${\cal L},\mC$)-\total, which could be used to optimize any loss in a family ${\cal L}$. Once the loss function is set, the outputs of the predictor can be post-processed (a simple univariate data-independent transformation of individual predictions) to do well compared with any hypothesis from the class $\cal C$. The post processing is essentially what one would perform if the outputs of the predictor were true probabilities of the uncertain events. 
In a sense, \totals\ extract all the predictive power from the class $\mC$, irrespective of the loss function in $\cal L$. 

We show that such ``loss-oblivious" learning is feasible through a connection to multicalibration, a  notion  introduced in the context of algorithmic fairness. A multicalibrated predictor doesn’t aim to minimize some loss function, but rather to make calibrated predictions, even when conditioned on inputs lying in certain sets $c$ belonging to a family $\mC$ which is weakly learnable.  We show that a $\mC$-multicalibrated predictor is also an (${\cal L},\mC$)-\total, where $\cal L$ contains all convex loss functions with some mild Lipschitz conditions. The predictors are even \totals\ with respect to sparse linear combinations of functions in $\mC$. As a corollary, we deduce that distribution-specific weak agnostic learning is complete for a large class of loss minimization tasks.

In addition, we show how multicalibration can be viewed as a solution concept for agnostic boosting, shedding new light on past results. Finally, we transfer our insights back to the context of algorithmic fairness by providing \totals\ for multi-group loss minimization.

\end{abstract}
\thispagestyle{empty}
\newpage
\tableofcontents
\thispagestyle{empty}
\newpage
\setcounter{page}{1}

\section{Introduction}

In machine learning, it is well-known that the best classifier may depend heavily on the choice of a loss function, and therefore correctly modeling the loss function is crucial for success in many applications. Modern machine learning libraries such as PyTorch, Tensorflow, and scikit-learn each offer a choice of over a dozen loss functions. However, this poses a challenge in applications where the loss is not known in advance or multiple losses may be used. For motivation, suppose you are training a binary classifier to predict whether a person has a certain medical condition ($y=1$), such as COVID-19 or some heart condition, given their attributes $x$. The cost for misclassification may vary dramatically, depending on the application. For an infectious disease, the question of whether an individual should be allowed to go out for a stroll is different than whether a person should be allowed to go to work in a nursing home. Similarly, deciding on whether to advise a daily dose of aspirin carries very different risks than recommending cardiac catheterization. Furthermore, medical interventions that may be developed in the future may carry yet other, unforeseen, risks and benefits that may require retraining with a different loss function.


We adopt the following problem setup: we are given a distribution $\D$ on $\X \times \Y$ where $\X$ is the domain and $\Y$ is the set of labels (for example $\Y$ can be $\zo$ or $[0,1]$). On each $x$, we take an action $t(x) \in \R$, and suffer loss $\ell(y,t(x))$ which depends on the label $y$ and the action $t(x)$. We will refer to the function $t:\X \rgta \R$ which maps points in the domain to actions as the hypothesis. Our goal is to find a hypothesis that minimizes $\E_\D[\ell(\y, t(\x))]$ in comparison to some reference class of hypotheses.\footnote{Such a loss function that is the expectation of a loss for individual examples is called a decomposable loss in the literature.} We will refer to a function $f$ which maps $\X$ to probability distributions over $\Y$ as a predictor. The goal of a predictor is to model the conditional distribution of labels $\y|\x =x$ for every point in the domain.

\eat{We will refer to a function $f: \X \rgta [0,1]$ as a predictor. A predictor's goal is to model the conditional probabilities $\E[$
}

A classic example of different optimal hypotheses for different loss functions is the $\ell_2$ vs.\ $\ell_1$ losses which are minimized by the mean and median respectively. Consider a joint distribution $\D$ on $\X \times [0,1]$, where $\x \in \X$ is a set of attributes (given) and $\y \in [0,1]$ is an outcome. 
Consider an $x$ such that the corresponding $y$ is uniform in $[0.8,1]$ with probability $0.6$ and 0 with probability $0.4$. In this case, to minimize the expected $\ell_2$ loss $\ell(y, t) = (y-t)^2$ you would set $t(x)=0.45$, whereas to minimize the expected $\ell_1$ loss $\ell(y, t) = |y-t|$ you would set $t(x)=0.83$. Not only is $t$ different for the two losses, you cannot learn one from the other. In other words, learning to minimize the $\ell_2$ loss looses information that is necessary to minimize the $\ell_1$ loss and vice versa.

The phenomenon that there is no simple way to get one loss-minimizing predictor from another is not unique to $\ell_2$ vs.\ $\ell_1$ losses. Consider the distribution illustrated in Figure \ref{fig:1}, which is known as a nested halfspace \cite{nested07}.  Consider a common and simple family of loss functions where $\ell(y, t) = c_y |y-t|$ where $y \in \{0, 1\}$ and $c_0, c_1 \geq 0$ are the costs of false positives and false negatives respectively. Even for linear classification, as the ratio $c_0/c_1$ varies, a different direction is optimal.  The standard ML approach of minimizing a given loss would require separate classifiers for each loss. There is no clear way to infer the optimal classifier for one set of costs from the classifier for another; applying standard post-processing techniques, such as Platt Calibration \cite{Platt99probabilisticoutputs} or Isotonic Regression \cite{isotonic01}, to the predictions so that  $\Pr[y=1|t=z] \approx z$ will not fix the issue since the optimal direction is different.

\subsection{\Totals: one predictor to rule them all}

This paper advocates a new paradigm for loss minimization: train a single predictor that could later be used for minimizing a wide range of loss functions, without having to further look at the data. Why should such predictors exist, and are they computationally tractable? One source of optimism is that the {\em ground-truth} predictor $f^*$ does allow exactly that. 

First consider the case of Boolean labels, and let $\D$ denote a distribution on $\X \times \zo$ (our results also apply to real-valued outcomes). We define $f^*(x) = \E_{\y \sim \D} [\y|x] \in [0,1]$ to be the conditional expectation of the label for $\x$. The value $t(x)$ which minimizes the loss $\E_{\y|x}[\ell(\y,t(x))]$  depends only on $f^*(x)$. Furthermore, as long as $\ell$ is smooth and easy to compute, $t$ can easily be computed from $f^*$. We denote the univariate post-processing function that optimizes loss $\ell$ given true probabilities by $k_{\ell}^*: [0,1] \rgta \R$. So for example for $\ell_2(y, t)= (y -t)^2$, we have $k_{\ell_2}^*(p)=p$ and
for $\ell_1(y,t) = |y -t|$, $k_{\ell_1}^*(p)= \ind{p \geq 1/2}$.

\begin{SCfigure}
    \includegraphics[width=2.5in]{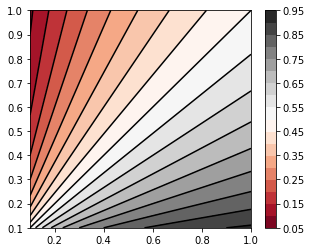}
    \caption{Binary classification with target function $\Pr[y=1|x] = \frac{x_1}{x_1+x_2}$ for $x \in [0.1,1]^2$. As can be seen from the level sets, the direction of the optimal linear classifier varies depending on the cost of false positives and negatives. This example is learned to near optimal loss for any loss with fixed costs of false-positives and false-negatives by an omnipredictor for the class ${\cal C}=\{x_1,x_2\}$.\label{fig:1}}
\end{SCfigure}

For every loss function $\ell$, the composition of $f^*$ and $k_{\ell}^*$ minimizes the loss $\ell$, even conditioned on complete knowledge of $\D$. The connection between perfect predictors and choosing the optimal action is well understood and plays an important role in the Statistics literature on proper scoring rules and forecasting (cf.\ \cite{Schervish}). But learning $f^*$ from samples from $D$ is information-theoretically and computationally impossible in general. The natural approach is to learn a model $f$ for $f^*$ and then compose $k_\ell^*$ with $f$. Common instantiations of this approach (such as using logistic regression to model $f$) do not yield particularly strong guarantees in the realistic non-realizable setting where $f^*$ does not come from the class of model distributions (see the further discussion in Section \ref{sec:related}). {\bf Our main conceptual contribution is to introduce the notion of  \Totals}, which provide a framework to derive strong rigorous guarantees using this composition approach even in the non-realizable setting.   

The goal of an \total\  is to learn a predictor $f$ that could replace $f^*$ for the purpose of minimizing any loss from a class $\mL$ compared to some hypothesis class $\mC$. For a family $\mL$ of loss functions, and a family $\mC$ of hypotheses $c: \X \rgta \R$, we introduce the notion of an {\bf (${\cal L},\mC$)-\total}, which is a predictor $f:\X \rgta \izo$ with the property that for every loss function $\ell\in{\cal L}$, there is a post-processing function $k_{\ell}$ such that the expected loss of the composition $k_{\ell} \circ f$ measured using $\ell$ is almost as small as that of the best hypothesis $c\in \mC$.

\subsection{\Totals for convex loss minimization}

\Totals can replace perfect predictors for the sake of minimizing loss in $\cal L$ compared with the class $\mC$. In this sense they {\em extract all the predictive power} of $\mC$ for such tasks. The key questions are of efficiency and simplicity: how strong a learning primitive do we need to assume to get an \total for $\mL,\mC$, and how complex is the predictor. The main result of this paper is that {\bf one can efficiently learn simple \totals for broad classes of loss functions and hypotheses, from weak learning primitives}.

\paragraph{Our main result.} We show that for any hypothesis class $\mC$, a {\em weak agnostic learner} for $\mC$ is sufficient to efficiently compute an $(\mL, \mC)$-\total where $\mL$ consists of all decomposable convex loss functions obeying mild Lipshcitz conditions; it includes popular loss functions such as the $\ell_p$ losses for all $p$, the exponential loss and the logistic loss. Weak agnostic learnability captures a common modeling assumption in practice, and is a well-studied notion in the computational learning literature \cite{SBD2, KalaiMV08, kk09, feldman2009distribution}. The weak agnostic learning assumption says that if there is a hypothesis in $\mC$ that labels the data reasonably well (say with $0$-$1$ loss of $0.7$), then we can efficiently find one that has a non-trivial advantage over random guessing (say with $0$-$1$ loss of $0.51$). In essence, our main result derives strong optimality bounds for a broad and powerful class of loss functions starting from a weak optimality condition for the $0$-$1$ loss.

Perhaps surprisingly, our results are obtained not via the machinery of convex optimization, but by drawing a connection to work on fairness in machine learning, specifically the notion of \mcab \cite{hkrr2018}. \Mcab is a notion motivated by the goal of preventing unfair treatment of protected sub-populations in prediction; it does not explicitly consider loss minimization. 
We draw a connection to \totals using  a covariance-based recasting of the notion of \mcab, that clarifies the connection of \mcab\ to the literature  on boosting \cite{kalai2005boosting, Kalai04, kk09}. A multicalibrated predictor satisfying this definition can be computed by a branching program, building on existing work in the literature on boosting \cite{MansourM2002, kalai2005boosting, Kalai04} and multicalibration \cite{gopalan2021multicalibrated}. This new connection shows that the well-known boosting by branching programs algorithms \cite{ KearnsM99, MansourM2002} yield multicalibrated predictors and can  in fact be used to derive strong guarantees for a broad family of convex loss functions.   

The post-processing function $k_{\ell}$ used in our positive results is essentially $k^*_{\ell}$ with small modifications.  As the example in Figure \ref{fig:1} demonstrates, even in natural cases, an (${\cal L},\mC$)-\total cannot be a function in $\mC$; in other words, the learning task we solve is inherently not proper. 

\paragraph{\Totals for larger classes.} An advantage of our covariance-based notion of \mcab is that it is closed under linear combinations. We use this to show that any \smcbd\ predictor  for $\mC$ is in fact an $( \mL,\mLC)$-\total where $\mLC$ consists of linear combinations of functions in $\mC$.  We give negative results for slightly larger classes, showing  that a \smcbd\ predictor  for $\mC$   is not necessarily an \total for the class $\Hspc$ which consist of thresholds of functions in $\mC$. Similarly, it is need not be an \total for the class $\mC$ but with non-convex loss functions. This shows that the connection between \mcab and \totals that we present is fairly tight.

\paragraph{\Totals for multi-group loss minimization}

A very recent line of research defines multi-group notions of loss minimization~\cite{BlumLyk20,RothblumYon21}.\footnote{The notion of loss in~\cite{RothblumYon21} is more general than in this paper and includes global functions of loss rather than the expectation of loss on individual elements. See further comparison in Section~\ref{sec:related}.} 
Multi-group loss minimization is well motivated from the point of view of fairness as it guarantees that no sub-population's loss is sacrificed for the sake of global loss minimization. Say we have a collection of sub-populations $\cal T$ and hypotheses $\cal P$ and seek to take actions such that for every set $T \in \cal T$,  our actions can compare with the best hypothesis in $P \in \cal P$ for that sub-population $T$. Indeed, one may wish to vary the loss function for various subgroups: say in a medical scenario where different age-groups are known to react differently to the same treatment. 

We derive strong multi-group loss minimization guarantees using the closure of \mcab under conditioning on subsets in $\mC$. We show that in the scenario above, a \smcbd predictor for $\mathcal{T} \times \mathcal{P} = \{ T \cdot P, T \in \mathcal{T}, P \in \mathcal{P}\}$ gives an $(\mL, \mP)$-\total for every sub-population $T \in \cal T$. Hence given a sub-population $T \in \cal T$ and a loss $\ell \in \mL$, the predictions of the \total  can be post-processed to be competitive with the best hypothesis from $\cal P$ for that loss function $\ell$ and sub-population $T$.

\paragraph{\Totals for real-valued labels.} We extend the notion of \totals\ to the setting where the labels come from an arbitrary subset $\Y \subseteq \R$. Our primary interest is in multi-class prediction where $\Y = [k]$ and the bounded real-valued setting where $\Y = \izo$. We show that \totals can be learned in this setting, for similar families of  loss functions, again assuming weak learnability of $\mC$. We also show a stronger bound for the $\ell_2$ loss than what the general theorem implies. 

\subsection{\Mcab and agnostic boosting}

One of our contributions in this work is to formalize and leverage the connections between \mcab\ and the literature on agnostic boosting. We propose a covariance-based recasting of the notion of \mcab, inspired by the literature  on boosting \cite{Kalai04, kk09}. We show that this definition has several advantages, for instance it implies some general closure properties for \mcab. In the other direction, our work suggests {\bf \mcab as a solution concept for agnostic boosting}. By specializing our main result on \totals to the $\ell_1$ loss,  we derive a new proof of the classic result of \cite{KKMS08} on agnostic learning. We elaborate on these connections in this subsection.

\paragraph{A covariance-based formulation of \mcab.} 
Calibration has been well-studied in the statistics literature in the context of forecasting \cite{Dawid}. It was introduced to the algorithmic fairness literature by \cite{KleinbergMR17}. In the setting of Boolean labels, we are given a distribution $\D$ on $\X \times \zo$ of labelled examples, and wish to learn a predictor $f: \X \rgta [0,1]$, where $f(x)$ is our model for $\E_\D[\y|x]$. The predictor $f$ is (approximately) calibrated  if for every value $v$ in its range we have that $\Pr[y=1|f(x)=v] \approx v$. This means that the prediction $f(x)$ can be interpreted as a probability that is correct in expectation over individuals in the same level set of $f$. By itself, calibration is a very weak property, both in terms of fairness as well as in terms of accuracy. 
\eat{Consider a set $T\subseteq \X$ for which half of the times $y=1$ and half of the times $y=0$. A predictor $p$ that $\forall x\in T$ sets $p(x)=1/2$ is calibrated on $T$ but has very little predictive power. Furthermore, if $T$ is a protected set, such $p$ may be the source of unfair discrimination towards those members of $T$ that see a positive outcome (or a negative outcome, depending on context). }
This motivated \cite{hkrr2018} to introduce the notion of multicalibration that asks for $f$ to be calibrated on a rich collection of subgroups $\mC$ rather than just a few protected sets.

We draw a connection to \totals by introducing a covariance-based recasting of the notion of \mcab, that clarifies the connection to the literature  on boosting 
\cite{Kalai04, kalai2005boosting, kk09}. Prior definitions consider \mcab for a family $\mC$ of sets or equivalently Boolean functions $c: \X \rgta \zo$, whereas we allow arbitrary real-valued functions. Rather than work with predictors, we define \mcab for partitions, inspired by the recent work of \cite{gopalan2021multicalibrated} in the unsupervised setting. A partition $\mS = \{S_1, \ldots, S_m \}$ of the domain is a collection of disjoint subsets whose union is $\X$. Intuitively, we want these sets to be (approximate) level sets for $f^*$. We let $\D_i$ denote the distribution $\D$ conditioned on $\x \in S_i$.  The  partition  $\mS$ gives a {\em canonical predictor} $f^\mS$ where for each $x \in S_i$, we predict $f^\mS(x) = \E_{\D_i}[\y]$. In analogy to boosting (see, e.g., \cite{Kalai04, kalai2005boosting, kk09}), we phrase  \mcab  in terms of the covariance\footnote{Recall that $\CoVar[\z_1\z_2] = \E[\z_1\z_2] - \E[\z_1]\E[\z_2]$} between $c(\x)$ and $\y$ conditioned on each state $S_i$ of the partition. We  say that $\mS$ is $\alpha$-\smcbd for $\mC$ if for every $i \in [m]$ and $c \in \mC$,  
\begin{align}
\label{eq:m-cal-int}
    \abs{\CoVar_{\D_i}[c(\x), \y]} \leq \alpha. 
\end{align}
In reality, we will weaken the definition to only hold in expectation under $\D$ rather than require it for every state of the partition, but we ignore this distinction for now. While our definition is formulated differently, we show that it matches the original definition when $\mC$ consists of Boolean functions (see Lemma \ref{lem:eq-boolean} and the discussion in Appendix \ref{app:def-mcab}). This lets us adapt existing algorithms in the literature \cite{MansourM2002, Kalai04, gopalan2021multicalibrated} to give an efficient procedure to compute multicalibrated partitions, assuming  a weak agnostic learner for the class $\mC$. Working with covariance which is bilinear lets us derive powerful closure properties for \mcab. For instance, if $f$ is multicalibrated with respect to $\mC$ it is also multicalibrated (with some deterioration in parameters) with respect to the class $\mLC$ of (sparse) linear combinations of $\mC$. We also show that conditioning on sets in $\mC$ preserves \mcab.

\paragraph{Agnostic boosting from \mcab.}

The problem of agnostically learning a class $\mC$ is, given samples from a distribution $\D$ on $\X \times \zo$, find a binary classifier $f:\X \rgta \zo$ whose classification error aka $0$-$1$ loss defined as $\err(f) = \Pr_{\D}[f(\x) \neq \y]$ is not much larger than that of the best classifier from $\mC$. The boosting approach to agnostic learning is to start from a weak agnostic learner, which only guarantees some non-trivial correlation with the labels, and boost it to obtain a classifier that agnostically learns $\mC$ \cite{SBD2, KalaiMV08, feldman2009distribution}.

Our work suggests \mcab as a solution concept for agnostic boosting. Indeed, our definition of \mcab based on covariance parallels that of \cite{Kalai04, kalai2005boosting}, who use covariance as a splitting criterion. Hence when the algorithms of \cite{MansourM2002, Kalai04, kalai2005boosting} terminate, they have found a \smcbd partition. Viewed in this light, our results show that these algorithms give a broad and powerful guarantee beyond just  $0$-$1$ loss: they are competitive with sparse linear combinations over $\mC$ in optimizing a large family of convex, Lipschitz loss functions (with a simple post-processing step). While AdaBoost or Logistic regression are known to minimize the exponential and logistic loss respectively over sparse linear combinations of $\mC$ \cite{SSbook}, no similar result was known for algorithms based on branching programs \cite{MansourM2002, Kalai04, kalai2005boosting}. 

Let us now focus on $0$-$1$ loss. Since $0$-$1$ loss for Boolean functions equals $\ell_1$ loss, our results apply to it. We show that for any \smcbd partiton $\mS$, the  predictor $k_{\ell_1} \circ f^\mS$ is competitive not just with the best classifier in $\mC$, but with the best classifier in the larger class $\mH$ of functions that are approximated (in $\ell_1$) by linear combinations of $\mC$. This lets us re-derive the classic result of \cite{KKMS08} on agnostic learning. While not a new result, we feel that our treatment clarifies and unifies existing results (see Theorem \ref{thm:ag} and the comments following it). It strengthens known results on the noise-tolerant boosting abilities of such programs \cite{kalai2005boosting, KalaiMV08}.
 Further, we show examples where $\err(k_{\ell_1} \circ f^\mS)$ is markedly better than any linear combination of $\mC$, showing that \mcab is a stronger solution concept than those considered previously.

\subsection{Technical overview: \Totals from \mcab}

Let $\ell: \zo \times \R \rgta \R$ be a loss function that takes a label $y \in \Y$ and an action $t \in \R$ as arguments. A hypothesis $t: \X \rgta \R$ which prescribes an action for every point in the domain suffers a loss of $E_\D[\ell(\y, t(\x)]$. Our main technical result states that if $\mS$ is \smcbd, then $f^\mS$ is an $(\mL,\mC)$-\total where $\mL$ consists of all convex losses satisfying some mild Lipschitz conditions. Here for simplicity, we make the stronger assumption that the loss $\ell(y, t)$ is convex and Lipschitz everywhere as a function of $t$. We do not assume anything about the relation between $\ell(0,t)$ and $\ell(1,t)$. 

We sketch how \mcab leads $f^\mS$ to be an \total, emphasizing intuition over rigor. We will argue that the loss $\E_\D[\ell(\y, k^*_\ell(f^\mS(\x))]$ is not much more than $\E_\D[\ell(\y,c(\x))]$ for any $c \in \mC$. 
We fix a state $S_i \in \mS$ and analyze the loss suffered by $c(x)$ under $\D_i$ as follows.
\begin{enumerate}
    \item {\bf Reduction to predicting two values:} In general, $c(x)$ could take on many values under $\D_i$. However, since the goal is to minimize  $\E_{\D_i}[\ell(\y, c(\x))]$, we can {\em pretend} that $c$ takes only two values, $\E_{\D_i|\y =0}[c(\x)]$ whenever $\y =0$ and $\E_{\D_i|\y = 1}[c(\x)]$ whenever $\y =1$ (we say pretend since the actions taken can only depend on $x$ and not on $y$). By the {\em convexity} of the loss functions, this can only reduce the expected loss. 
    
    \item {\bf Reduction to predicting one value: } A consequence of \mcab, which follows from the definition of covariance, is that  conditioning on the label $\y =b$  does not change the expectation of $c(\x)$ much. Formally for $b \in \zo$, 
    \begin{align}
    \Pr_{\D_i}[\y = b]\lt|\E_{\D_i|\y = b}[c(\x)] - \E_{\D_i}[c(\x)]\rt| \leq \alpha. \label{eq:multicalibration_corollary_int}
    \end{align}
    Since the loss functions $\ell(b, t)$ are {\em Lipschitz} in $t$ for $b \in \zo$, we can replace $\E_{\D_i|\y = b}[c(\x)]$ with $\E_{\D_i}[c(\x)]$ with only a small increase in the loss. At this point, we have reduced to the case where $c$ predicts the constant value $\E_{\D_i}[c(\x)]$ under $\D_i$. 
    
    \item {\bf The best value: } Let $\E_{\D_i}[\y] = p_i$, thus $\y$ is distributed as a Bernoulli random variable with parameter $p_i$.    
    Thus, the best single value to predict is $k^*_\ell(p_i)$, which is the minimizer of the expected loss $p_i \ell(0, t)  + (1 - p_i)\ell(1, t)$. But we defined $f^\mS(x) = p_i$ for all $x \in S_i$, so $k^*_\ell(p_i) = k^*_\ell(f^\mS(x)$).
\end{enumerate}
We conclude that post-processing the predictions $f^\mS$ by the function $k^*_\ell$ is nearly as good as any $c \in \mC$ for minimizing expected loss under $\D$ for any convex, Lipschitz loss function $\ell$ (up to an additive error that goes to $0$ with $\alpha$). Hence $f^\mS$ is an $(\mL, \mC)$-\total. As a consequence, having a weak agnostic learner for $\mC$ suffices to learn a predictor that can minimize any such loss function competitively to predictors in $\mC$, even without knowing the loss function in advance.

\eat{
Our {\bf main result} is that {\em every $\tilde{f}$ that is multicalibrated with respect to $\mC$ is also $({\cal L},\mC)$-\total, where ${\cal L}$ contains all convex loss functions with some mild Lipshitz condition} (See Theorem~\ref{} for a formal statement). 

\UW{Here we switched from multi calibrated partitions to multi calibrated predictors. Instead of saying every $\tilde{f}$ we should say every canonical predictor of a multi-calibrated partition.}

The main result implies that loss-oblivious loss-minimization through \totals is feasible. In addition, we show the following:
\begin{itemize}
    \item Using the aforementioned closure properties for multicalibration, we obtain that multicalibration with respect to $\mC$ implies \total with respect to sparse linear combinations of functions from $\mC$.
    \item As a corollary of our main result, loss minimization for a wide family of loss functions reduces to distribution-specific weak-agnostic learning.  
    \item We extend our results to real-valued outcomes by generalizing \totals to this context and showing that multicalibration for real-valued outcomes (which can be learned via~\cite{Jung20}) implies \totals.
    \item We look more closely at $\ell_2$ and $\ell_1$ losses and give stronger results than for generic loss functions. In particular, we demonstrate that multicalibration can be strictly better than every function in $\mC$. 
\end{itemize}
\PG{Say more about results in the agnostic setting}}

\subsection{Organization of this paper}
We survey related work in Section \ref{sec:related}, and set up notation in Section \ref{sec:notation}. We define the notion of \totals in Section \ref{sec:omni}. We introduce our notion of \mcab in Section \ref{sec:mcab} and derive closure properties for it in Section \ref{sec:closure}. A detailed discussion of how our definition compares to previous definitions can be found in Appendix \ref{app:def-mcab}. We prove our main result on $(\mL, \mC)$-\totals for binary labels (Theorem \ref{thm:key}) in Section \ref{sec:binary}, along with the  extensions to $\mLC$ (Corollary \ref{cor:lipschitz}), and its application to  multi-group loss minimization (Corollary \ref{cor:sub_pop}). Section \ref{sec:limits} shows that \mcab for $\mC$ does not yield \totals for thresholds of $\mC$ or for non-convex losses. Section \ref{sec:ag} presents applications of our results to agnostic learning, and an example showing that \mcab can give stronger guarantee than $\opt(\mLC)$. We present the extension to the real valued setting in Section \ref{sec:real}, and derive a stronger bound for $\ell_2$ loss in Theorem \ref{thm:ell_2}. We present an algorithm for computing multicalibrated partitions in Section \ref{sec:algo}.

\eat{
The guarantees we present for \totals are reminiscent of agnostic boosting \cite{KearnsM99, MansourM2002, Kalai04} in the sense that they apply to a larger class ${\cal H} \supset {\cal C}$ using a learner for ${\cal C}$. Agnostic boosting guarantees a 0-1 loss $|y-t|$ close to the optimal in ${\cal H}$. In fact, as we point out in Section \ref{sec:boosting}, \total guarantees are stronger. First, they imply optimality with respect to ${\cal H}$ just as agnostic boosting algorithms do. Second, we give a simple example illustrating where \totals guarantee loss below the best in ${\cal H}$. Third, of course \totals apply to more general families of loss functions than 0-1 loss.

Note that noiseless boosting (e.g., AdaBoost \cite{freund1997decision}) and PAC learning, more generally, are not interesting for \totals as in such a ``realizable'' setting one can achieve $\eps$ error, which implies low loss with respect to all bounded loss functions. 
}

\eat{
\subsection{Our Contributions}

The main contribution of this paper is introducing the notion of \totals as an effective method of learning for loss-minimization that doesn't depend on the loss function at the time of learning. We show how to implement \totals through a connection to multicalibrated predictors. We conclude that distribution-specific weak-agnostic learning is sufficient for minimizing every convex loss-function that satisfies some mild Lipschitz conditions.  Beyond these contributions, we make a sequence of additional, technical and conceptual contributions, including:
\begin{itemize}
\item We give a new formulation of multicalibration that identifies with the original definition in the basic setting but generalizes well to real valued functions $c$ (instead of the indicators of sets) and to real-valued outcomes. This characterization also allows us to prove a useful closure properties with respect to sparse linear combinations and with respect to subsets.
\item We give stronger results for two of the most important loss functions: $\ell_2$ and $\ell_1$ (classification loss), showing that at times multicalibration can do strictly better than any function in $\mC$.
\item We demonstrate that multicalibration is a strong solution concept for agnostic boosting by {\bf clarifying/ simplifying/ strengthening} previous work in the area. {\bf be more specific?}
\item We show how to provide \totals that allow for the simultaneous minimization of the loss function on a large collection of sub-groups, without knowing the exact loss functions a-priori.
\end{itemize}
}

\section{Related work}
\label{sec:related}

While the notion of an \total is introduced in this work, our definitions draw on two previous lines of work. The first is the notion of \mcab\ for predictors that was defined in the work of Herbert-Johnson \etal\ \cite{hkrr2018}.\footnote{See also \cite{kearns2018}, who in parallel with \cite{hkrr2018} introduced notions of multi-group fairness.} A detailed discussion of how our definitions of \mcab compare to previous definitions appears in Appendix \ref{app:def-mcab}.
The other is work on boosting by branching programs of  Mansour-McAllester \cite{MansourM2002}, which built on Kearns-Mansour \cite{KearnsM99} and the notion of correlation boosting \cite{Kalai04, kalai2005boosting}.

\paragraph{Group fairness and \mcab.}
While \mcab  was introduced with the motivation of algorithmic fairness, it has been shown to be quite useful from the context of accuracy when learning in an heterogeneous environment. This was done both experimentally \cite{kgz,BardaYRGLBBD21} and in real-life implementations~\cite{BardaEtal20}. From a theoretical perspective, it has been shown in \cite{hkrr2018} that post-processing a predictor to make it multicalibrated cannot increase the $\ell_2$ loss. This was extended to showing some optimality results of multicalibrated predictors with respect to $\ell_2$ and even log-loss \cite{GargKR19,KimThesis}. Multicalibrated predictors are also connected to loss minimization through the notion of outcome-indistinguishability \cite{OI} in the work of \cite{RothblumYon21}. Outcome indistinguishability shows that a multicalibrated predictor is indistinguishable from the true probabilities predictor in a particular technical sense. In~\cite{RothblumYon21} this is used to create a predictor that can be used to minimize a rather general and potentially global notion of loss even when restricted to sub-groups. The proof constructs a family of distinguishers, for a fixed loss function, such that if there exists a subset on which the a predictor doesn't minimize the loss function then one of the predictors can distinguish the predictor from the true-probabilities predictor in the sense of outcome indistinguishability. The main way in which all of these results are different from what we show is that they do not seek to simultaneously allow for the minimization of such a rich family of loss functions. In the case of~\cite{RothblumYon21}, since the result goes through outcome indistinguishability, to minimize a loss with respect to a class $\mC$, a predictor needs to be multicalibrated with respect to a different class that relies on the loss function and incorporates the reduction from multicalibration and outcome indistinguishability. In contrast, our result addresses minimization of a  family of losses that may not be known at the time of learning, and we assume the learnability of $\mC$ alone. Convexity of the losses plays a key role in our upper bounds.

\paragraph{Agnostic boosting.} 
Our work is  closely related to work on boosting via branching programs \cite{KearnsM99, MansourM2002, kalai2005boosting, Kalai04}. Indeed, the splitting criterion used in the work of Kalai \cite{Kalai04} is precisely that $\CoVar[c(\x), \y] \geq \alpha$ for some $c \in \mC$, for the task of learning generalized linear models. The {\em okay learners} in the work of \cite{kalai2005boosting} are also based on having non-trivial Covariance with the target.  Our results show that upon termination, these algorithms yields an \mcbd\ partition; hence we can view \mcab\ as a solution concept for the output of Boosting by Branching Program family of algorithms. It was known that these algorithms have stronger noise tolerance properties than potential based algorithms such as AdaBoost, see \cite{kalai2005boosting, KalaiMV08}. Our results significantly strengthen our understanding of the power of these algorithms, showing that they give guarantees for a broad family of convex loss functions, and not just $0$-$1$ loss. 

\paragraph{Naive instantiations of the composition approach.}
The obvious attempt to building \totals would be to learn a model $f$ for $f^*$ using a model family $\mF$ such as logistic regression, and then compose it with the right post-processing function $k_{\ell}^*$ for a given loss $\ell$. In the context of binary classification, for certain families of non-decomposable accuracy metrics (including $F$-scores and AUC), it is shown in \cite{NatarajanKRD15, DKKN17} that this approach gives the best predictor for the hypothesis class $\mC$ of binary classifiers derived by thresholding models from $\mF$. 

These results can be seen as a form of \totals, but with strong restrictions on $\mL$ and $\mC$. Their results do not apply to the decomposable convex losses we consider; indeed it seems unlikely that the output of logistic regression can give reasonable guarantees for say the exponential loss or squared loss, even with post-processing. More importantly,  our results hold for arbitrary classes $\mC$ that are weakly agnostically learnable. For any such class, we show how to construct a model $f$ which is an \total. In contrast, their results prove optimality for a rather limited hypothesis class $\mC$ derived from the model family $\mF$.

\paragraph{Condtional density estimation}
For real-valued $y \in \R$, an \total solves the problem of \textit{Conditional Density Estimation} (CDE) \cite{hansen1994autoregressive}. While CDE is recognized as an important problem in practice and a number of CDE algorithms have been proposed, it has received little attention in the computational learning theory literature. The notion of \total is related to the statistical notion of a \textit{sufficient statistic}, which is a statistic that captures all relevant information about a distribution. 

\paragraph{Surrogate loss functions for classification} There is a large literature in statistics which shows that a convex surrogate loss function (with certain properties) can be used instead of the hard to optimize 0-1 loss, and any hypothesis $c \in \mC$ which minimizes the surrogate loss will also minimize the original 0-1 loss with respect to $\mC$ \cite{lugosi2004bayes,steinwart2005consistency,bartlett2006convexity}. There are also similar results for the multi-class 0-1 loss \cite{tewari2007consistency}, and in the asymmetric setting when the false positive and false negative costs are known \cite{scott2012calibrated}. However, this line of work is quite different from ours, crucially an $(\mathcal{L}, \mC)$-\total optimizes not just for a single loss but for any loss in the family $\mathcal{L}$ (such as different false positive and false negative costs, we recall that as Fig. \ref{fig:1} shows this cannot be achieved with a single hypothesis from $\mC$). 


\section{Notation and Preliminaries}
\label{sec:notation}

Let $\D$ denote a distribution on $\X \times \Y$. The set $\X$ represents points in our space, it could be continuous or discrete. The set $\Y$ represents the labels, we will typically consider $\Y = \zo$, $\Y = [k]$ or $\Y = \izo$. We use  $(\x, \y) \sim \D$ where $\x \in \X, \y \in \Y$ to denote a sample from $\D$.  We use boldface for random variables. For any $S \subset \X$, let $\D(S) = \Pr_{\x \sim \D}[ \x \in S]$. We will use $\y \sim \B{p}$ to denote sampling from the Bernoulli distribution with parameter $p$. 

Let $\mC = \{c: \X \rgta \R\}$ be a collection of real-valued hypotheses on a domain $\X$, which could be continuous or discrete. The hypotheses in $\mC$ should be efficiently computable, and the reader can think of them as monomials, decision trees or neural nets. We will denote
\[ \infnorm{\mC} = \max_{c \in \mC, x \in \X}|c(x)|. \]

A loss function $\ell$ takes a label $y \in \Y$, an action $t \in \R$ and returns a loss value $\ell(y,t)$.  Common examples are the $\ell_p$ losses $\ell_p(y, t) = |y -t|^p$ and logistic loss $\ell(y, t) =  \log(1 + \exp(-yt))$. The problem of minimizing a loss function $\ell$ is to learn a hypotheses $h: \X \rgta \R$ such that the expected loss $\ell_\D(h) \defeq \E_\D[\ell(\y, h(\x))]$ is small. Let $\mL = \{\ell: \Y \times \R \rgta \R\}$ denote a collection of loss functions.

A partition $\mS = \{S_1, \ldots, S_m \}$ of the  domain $\X$, is a collection of disjoint subsets whose union equals $\X$, we refer to $m$ as its size. We refer to each $S_i$ as a state in the partition. Given  a partition $\mS = \{S_1, \ldots, S_m \}$ of the  domain $\X$, define the conditional distribution $\D_i$ over $S_i \times \Y$ as $\D_i = \D|\x \in S_i$. 

A (binary) predictor is a function $f:\X \rgta \izo$, where $f(x)$ is interpreted as the probability conditioned on $x$ that $\y=1$. We define the ground truth predictor as $f^*(x) := {\E_\D[\y|\x =x]}$.  For general label sets $\Y$,  let $\mP(\Y)$ denote the space of probability distributions on $\Y$. Define the ground truth predictor $f^*: \X \rgta \mP(\Y)$ where $f^*(x)$ is the distribution of $\y|x$. A predictor is a function $f:\X \rgta \mP(\Y)$ which is intended to be an approximation of $f^*$.

 We denote by $g \circ h$ the composition of functions.
 
 \subsection{Nice loss functions}
We say $\ell:\Y \times \R \rightarrow \R$ is a convex loss function,  if $\ell(y, t)$ is a convex function of $t$ for every $y \in \Y$. Note that in the binary setting, our formulation allows for binary classification with different false-positive/negative costs, e.g.,  $\ell(y, t) = c_y |t-y|$ where $c_{0}$ and $c_1$ are the different costs. 
A function $f:\R\rightarrow \R$ is said to be $B$-Lipschitz over interval $I$ if $|f(t)-f(t')|\leq B|t-t'|$ for all $t, t' \in I$. We say the function is $B$-Lipschitz if the condition holds for $I =\R$. While assuming the loss is $B$-Lipschitz is sufficient for us, we can work with a weaker notion that only requires the Lipschitz property on a sufficiently large interval. This weaker notion covers most commonly used loss functions such as the exponential loss that are not Lipschitz everywhere. Also, we will define loss functions in the setting of labels that come from $\Y$. In this section though, we will focus on the case $\Y = \zo$.

\begin{definition}
\label{def:nice-loss}
For $B, \eps >0$, a convex loss function $\ell: \Y \times \R \rgta \R$ is $(B, \eps)$-nice if there is a closed interval $I = I_\ell \subseteq \R$  satisfying:
\begin{enumerate}
\item {\bf ($B$-Lipschitzness)} For all $y$, $\ell(y, t)$ is $B$-Lipschitz in $t$ over $I_{\ell}$: 
\[ \forall y \in \Y, \ t, t' \in I_{\ell},  |\ell(y, t) - \ell(y, t')| \leq B |t - t'|.\] 
\item {\bf ($\eps$-optimality)} For $y \in \Y$, $I_{\ell}$ contains an $\eps$-optimal minimizer of $\ell(y,t)$: 
\[ \inf_{t \in I_{\ell}}\ell(y, t) \leq \inf_{t \in \R} \ell(y, t)+\eps.\] 
\end{enumerate}
Let $\mL(B, \eps)$ be the set of all $(B, \eps)$-nice functions.
\end{definition}
 Observe that if a function is $B$-Lipschitz on $\R$, then it is $(B, 0)$-nice with $I_\ell =\R$. 
 
 For a closed interval $I = [c,d]$ define the function
\begin{align*}
    \clip(t, I) = \begin{cases}
    c \ \text{if} \ t \leq c\\
    t \ \text{if} \ c \leq t \leq d\\
    d \ \text{if} \ d \leq  t.
    \end{cases}
\end{align*}
We use some simple facts about this function without proof, the  first is a simple consequence of $\eps$-optimality and convexity.
\begin{lemma}
\label{lem:tech}
If $\ell$ is $(B, \eps)$-nice, then $\ell(y,\clip(t, I_\ell)) \leq \ell(y,t)  + \eps$. The function $\clip(t,I_\ell)$ is $1$-Lipschitz as a function of $t$.
\end{lemma}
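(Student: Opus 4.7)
The plan is to prove the two claims separately. For the inequality $\ell(y, \clip(t, I_\ell)) \leq \ell(y, t) + \eps$, the key idea is to combine convexity of $\ell(y, \cdot)$ with the $\eps$-optimality property to express the clipped value as a convex combination of $t$ and an $\eps$-optimal point. The Lipschitz statement is standard and follows because clipping is the Euclidean projection onto a closed interval.

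For the main inequality, write $I_\ell = [c, d]$ and split into three cases. If $t \in [c, d]$, then $\clip(t, I_\ell) = t$ and the inequality is trivial. Suppose $t < c$, so $\clip(t, I_\ell) = c$. By $\eps$-optimality, there exists $t^* \in I_\ell$ with $\ell(y, t^*) \leq \inf_s \ell(y, s) + \eps$, and in particular $\ell(y, t^*) \leq \ell(y, t) + \eps$. Since $t < c \leq t^*$, the point $c$ lies on the segment between $t$ and $t^*$, so $c = \lambda t + (1 - \lambda) t^*$ for some $\lambda \in [0,1]$. Convexity of $\ell(y, \cdot)$ then gives
\[
\ell(y, c) \leq \lambda \ell(y, t) + (1 - \lambda) \ell(y, t^*) \leq \lambda \ell(y, t) + (1 - \lambda)(\ell(y, t) + \eps) = \ell(y, t) + (1 - \lambda)\eps \leq \ell(y, t) + \eps.
\]
The case $t > d$ is symmetric: from $t^* \leq d < t$ the point $d$ sits on the segment between $t^*$ and $t$, and the same convex-combination estimate yields $\ell(y, d) \leq \ell(y, t) + \eps$.

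For the $1$-Lipschitz claim, I would observe that $\clip(t, [c,d]) = \max(c, \min(d, t))$ is a composition of two $1$-Lipschitz maps, or equivalently the orthogonal projection of $t$ onto the closed convex set $[c, d]$, which is always $1$-Lipschitz. A direct case analysis on where $t_1$ and $t_2$ fall relative to $[c,d]$ also works in a few lines. There is no real obstacle in any of this; the only point requiring care is that $t^*$ must lie on the correct side of $c$ (respectively $d$) for the convex-combination representation to be valid, which is guaranteed by $t^* \in I_\ell = [c, d]$.
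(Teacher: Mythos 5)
Your proof is correct and uses exactly the two ingredients the paper signals for this lemma: the paper states it without proof, remarking that it is ``a simple consequence of $\eps$-optimality and convexity,'' and your convex-combination argument is the natural way to make that rigorous. The only point worth a footnote is the implicit assumption that the infimum over $I_\ell$ is attained at some $t^*$; this holds when $I_\ell$ is a bounded closed interval (since a convex function on $\R$ is continuous), and in the unbounded cases the relevant endpoint of $I_\ell$ is infinite so that $\clip(t,I_\ell)=t$ on that side, or else one runs the same argument with an approximating sequence $t^*_n$ and lets the slack go to zero.
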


Here are a few examples of nice loss functions:
\begin{itemize}
    \item Binary classification with different false-positive/negative costs, e.g.,  $\ell(y, t) = \kappa_y |y-t|$ where $\kappa_0 \neq \kappa_1$ are the different costs. Here $I_\ell =[0,1], B =\max(\kappa_0, \kappa_1), \eps=0$.
    \item The $\ell_p$ losses for $p \geq 1$:
    \[ \ell_p(y, t) \defeq |y-t|^p \]
    take $I_\ell=[0,1], B=p, \eps=0$. 
    \item The exponential loss $\ell(y, t) = e^{(1 - 2y) t}$. Here, for any $\eps > 0$, we can take $I_\ell=[-\ln (1/\eps), \ln (1/\eps)]$ and $B = 1/\eps$.
    \item The logistic loss $\ell(y, t) = \log(1 + \exp((1 -2y)t))$. Here we take $I_\ell = \R$, $B =1$ and $\eps = 0$. 
    \item The hinge loss $\ell(y, t)=\max(0,1+(1-2y)t)$. Here we take $I_\ell = \R$, $B =1$ and $\eps = 0$. 
\end{itemize}
It is worth noting that only for exponential loss did we need $\eps > 0$. 

\section{\Totals}
\label{sec:omni}

In this section, we define our notion of \Totals. Our  definitions are simpler for the case of binary labels where $\Y = \zo$, hence we present that case first.
Recall that for a predictor $h$, $\ell_\D(h)$ is the expected loss of $h$ under $\D$. 

\begin{definition}[\Total]
\label{def:omni}
    Let $\mC$ be  family of functions on $\X$, and let $\mL$ be a family of loss functions. The predictor $f:\X \rgta \izo$ is an $(\mL, \mC, \delta)$-\total if for every $\ell \in \mL$ there exists a function $k:\izo \rgta \R$ so that
    \[ \ell_\D(k \circ f) \leq \min_{c \in \mC} \ell_\D(c) + \delta. \]
\end{definition}

The definition states that for every loss $\ell$, there is a simple (univariate) transformation $k$ of the predictions $f$, such that the composition $k \circ f$ has loss comparable to the best hypothesis $c \in \mC$, which is chosen tailored to the loss $\ell$.

Setting aside efficiency considerations, it is easy to show that $f^*$ is an \total for every $\mC, \mL$.

\begin{lemma}
    For every $\mC, \mL$, the ground-truth predictor $f^*$ is an $(\mL, \mC, 0)$-\total.
\end{lemma}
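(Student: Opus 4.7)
The plan is to take $k \defeq k^*_\ell$ to be a pointwise minimizer of the Bernoulli-conditional expected loss: for each $p \in [0,1]$, define $k^*_\ell(p) \in \argmin_{t \in \R}\, [p\,\ell(1,t) + (1-p)\,\ell(0,t)]$, breaking ties arbitrarily. Since the conditional law of $\y$ given $\x = x$ is Bernoulli with parameter $f^*(x)$, the value $k^*_\ell(f^*(x))$ is by construction a minimizer of $\E_{\y \mid \x = x}[\ell(\y, t)]$ over $t \in \R$. I will assume the argmin is attained for every $p$, which holds for the nice loss families later in the paper; otherwise one picks $k(p)$ to realize the infimum within any tolerance $\eta$ and lets $\eta \to 0$.

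With $k$ in hand, the proof is essentially one application of the tower rule. First, I would record the pointwise inequality: for every $x \in \X$ and every function $c: \X \rgta \R$, the scalar $c(x)$ is just one specific value of $t$, so
$$\E_{\y \mid \x = x}\bigl[\ell(\y, k^*_\ell(f^*(x)))\bigr] \;\leq\; \E_{\y \mid \x = x}\bigl[\ell(\y, c(x))\bigr].$$
Second, I would take the expectation over $\x \sim \D$ on both sides and apply the tower rule, obtaining
$$\ell_\D(k^*_\ell \circ f^*) \;=\; \E_\x\Bigl[\E_{\y \mid \x}\bigl[\ell(\y, k^*_\ell(f^*(\x)))\bigr]\Bigr] \;\leq\; \E_\x\Bigl[\E_{\y \mid \x}\bigl[\ell(\y, c(\x))\bigr]\Bigr] \;=\; \ell_\D(c).$$
This inequality holds for every $c \in \mC$ (indeed for every function $c: \X \rgta \R$), so taking the minimum over $c \in \mC$ gives $\ell_\D(k \circ f^*) \leq \min_{c \in \mC} \ell_\D(c)$, which is exactly the $(\mL, \mC, 0)$-\total condition.

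There is no real obstacle: the argument uses no learnability or structural assumption on either $\mC$ or $\mL$, reflecting the fact that $f^*$ is information-theoretically optimal against any comparator class on any decomposable loss. The only mild subtlety is the existence of the pointwise minimizer defining $k$, which is handled by the limiting argument above. The nontrivial content of the paper lies elsewhere, namely in showing that efficiently learnable approximations of $f^*$ (multicalibrated predictors) inherit this universality up to a small additive loss.
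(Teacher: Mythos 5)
Your proof is correct and follows essentially the same route as the paper: define $k^*_\ell$ as the pointwise Bernoulli minimizer, observe that $\y\mid\x=x$ is $\B{f^*(x)}$ so $k^*_\ell(f^*(x))$ is optimal pointwise, and average over $\x$. The only addition is your explicit caveat about existence of the argmin (handled by an $\eta$-approximate minimizer limiting argument), which the paper glosses over by simply saying ties are broken arbitrarily.
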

\begin{proof}
By the definition of \totals, given an arbitrary loss function $\ell: \zo \times \R \rgta \R$, our goal is to find $k^*_\ell:\zo \rgta \R$ so that 
\begin{align}
\label{eq:f*}
\ell_\D(k^*_\ell \circ f) \leq \min_{c:\X \rgta \R} \ell_\D(c). 
\end{align}
 
 Define the function $k^*_\ell: \izo \rgta \R$ which minimizes expected loss under the Bernoulli distribution:
\begin{align}
\label{eq:def-k*}
    k_\ell^*(p) = \argmin_{t \in \R} \E_{\y \sim \B{p}}[\ell(\y,t)] = \argmin_{t \in \R} p\ell(1, t) +  (1 - p)\ell(0, t).
\end{align} 
If there are multiple minima we break ties arbitrarily. Conditioned on $\x =x$, $\y \sim \B{f^*(x)}$, so $k_\ell^*(f^*(x)) \in \R$ is the value that minimizes the expected loss. Hence for every $x \in \X$,
\[ \E_{\D|\x =x}[\ell(\y, k_\ell^*(f^*(x))] \leq \E_{\D|\x =x}[\ell(\y, c(x))].\]
Equation \eqref{eq:f*} follows by averaging over all values of $x$.
\end{proof}

Note that the function $k^*_\ell$ depends on $\ell$ but is independent of the distribution $\D$. For instance, for the $\ell_1$ loss, $\ell_1(y, t) = |y -t|$, we have $k_{\ell_1}^*(p) = \ind{p \geq 1/2}$. For the $\ell_2$ loss  $\ell_2(y,t) = (y -t)^2$, we have $k_{\ell_2}^*(p) = p$. While Definition \ref{def:omni} does not place any restrictions on the post-processing function $k$, in our upper bounds, we will choose $k$ which is very close to the function $k^*$ above. Our upper bounds will be efficient for (convex) functions $\ell$ such that a good approximation to $k^*$ can be be approximated efficiently. Our lower bounds will hold for arbitrary $k$.  

Finally, a natural family of predictors arising from partitions plays a key role in our results. 
\begin{definition}
\label{def:canonical}
     Given a partition $\mS$ of $\X$ of size $m$, let $\E_{\D_i}[\y] = p_i \in [0,1]$ for $i \in [m]$. The canonical predictor for $\mS$ is $f^\mS(x) = p_i$ for all $x \in S_i$. 
\end{definition}
The canonical predictor simply predicts the expected label in each state of the partition. Since $f^\mS$ is constant within each state of the partition, it can be viewed as a function $f^\mS: \mS \rgta \R$. This view will be useful in our results.

\paragraph{\Totals for general $\Y$.}

Consider the setting where we are given a distribution $\D$ on $\X \times \Y$ for $\Y \subseteq \R$, hence the labels can take on real values. We are primarily interested in the real-valued setting of $\Y = \izo$, and the multi-class setting where $\Y =[l]$. 

Given a state $S_i$ in a partition $\mS$, let   $P_i \in \mP(\Y)$ denote the distribution of $\y$ under $\D_i$. The canonical predictor $f^\mS: \X \rgta \mP(\Y)$ is given by $f^\mS(x) = P_i$ for all $x \in S_i$. 

We now define the notion of an \total. The main difference from the binary case is that the predictor now predicts a distribution in $\mP(\Y)$, so the post-processing function $k$ maps {\em distributions} to real values.

\begin{definition}
\label{def:omni-real}
    Let $\mC$ be  family of functions on $\X$, and let $\mL$ be a family of loss functions $\ell: \Y \times \R \rgta \R$. The predictor $f:\X \rgta \mP(\Y)$ is an $(\mL, \mC, \delta)$-\total if for every $\ell \in \mL$ there exists a function $k:\mP(\Y) \rgta \R$ so that
    \[ \ell_\D(k \circ f) \leq \min_{c \in \mC} \ell_\D(c) + \delta. \]
\end{definition}

\eat{
An equivalent definition can be given in terms of the conditional expectations.
\begin{definition}
\label{def:m-cal-01}
The partition $\mS$ of $\X$ is $\alpha$-\smcbd for $\mC, \D$ if for every $i \in [m]$ such that $\Var_{
\D_i}[y] > 0$ and for every $c \in \mC$,
\begin{align}
\label{eq:m-cal-01}
    \abs{\E_{\D_i|\y =1}[c(\x)]  - \E_{\D_i|\y = 0}[c(\x)]}\leq \frac{\alpha}{\Var_{\D_i}[y]} . 
\end{align}
\end{definition}

Having $\Var_{\D_i}[\y] > 0$ ensures that both the conditional distributions $\D_i|\y =1$ and $\D_i|\y = 0$ are well defined. For  states where $\y$ is constant and $\Var_{\D_i}[\y] =0$, we say that the \mcab\ condition holds trivially.

\begin{lemma}
    Definitions \ref{def:m-cal} and \ref{def:m-cal-01} are equivalent. 
\end{lemma}
\begin{proof}
    First consider states where $\y \sim \D_i$ is constant, hence $\Var_{\D_i}[\y] = 0$. Such states are trivially \mcbd by Definition \ref{def:m-cal-01}. They also satisfy Equation \eqref{eq:m-cal} since
    \[ \CoVar_{\D_i}[c(\x), \y] = \E_{\D_i}[c(\x)(\y - \E_{\D_i}[\y])] = 0.\]

    Now consider states where $\y$ is not constant.
    We can simplify Equation \eqref{eq:m-cal} by putting $p_i = \E_{\D_i}[\y]$, and rewriting it as    
    \begin{align}
    \label{eq:m-cal2}
        \abs{\CoVar_{\D_i}[c(\x), \y]} = \abs{\E_{\D_i}[c(\x)(\y - p_i)]}  \leq \alpha .
    \end{align}

    We can sample $(\x, \y)$ from $\D_i$ in two steps, we first sample $\y \in \zo$ where $\Pr[\y =1] = p_i$, and then sample $\x \sim D_i|\y$.
    Hence we can write
    \begin{align*}
        \E_{(\x, \y) \sim \D_i}[c(\x) (\y - p_i)] &= p_i\E_{\x \sim D_i|\y = 1}[c(\x) (1 - p_i)] + (1 -p_i) \E_{\x \sim D_i|\y = 0}[c(\x) ( -p_i)]\\
        &= p_i(1 -p _i) \left(\E_{\D_i|\y =1}[c(\x)]  - \E_{\D_i|\y = 0}[c(\x)]\right)\\
        &= \Var_{\D_i}[\y] \left(\E_{\D_i|\y =1}[c(\x)]  - \E_{\D_i|\y = 0}[c(\x)]\right)
    \end{align*}
    Hence Equation \eqref{eq:m-cal2} and \eqref{eq:m-cal-01} are equivalent assuming $\Var_{\D_i}[\y] \neq 0$.
\end{proof}

\subsection{A relaxed definition}
}

\section{\Mcab}
\label{sec:mcab}

In this section, we present our definitions of multicalibration. We first consider the case of binary labels, and then extend it to the multi-class and real-valued settings.  To streamline the presentation, routine proofs  are deferred to Appendix \ref{app:mcab}. Given real-valued random variables $\z_1, \z_2$ from a joint distribution $\D$, we define
\begin{align*}
     \CoVar_\D[\z_1, \z_2]  &=  \E_\D[\z_1\z_2] - \E_\D[\z_1]\E_\D[\z_2] = \E_\D\left[\z_1(\z_2 - \E_\D[\z_2])\right]. 
\end{align*}
We will use the fact that covariance is bilinear. 
\eat{linear in each argument, hence
\begin{align*}
     \CoVar_\D[\lambda\z_1 + \lambda'\z_1', \z_2]  &= \lambda \CoVar_\D[\z_1, \z_2]  + \lambda'\CoVar_\D[\z_1', \z_2]  
\end{align*}
    }
The following identity will be useful for Boolean $\y$.

\begin{corollary}
\label{cor:covar}
For random variables $(\z, \y) \sim \D$ where $\y \in \zo$,
\begin{align}
\label{eq:covar-1}    
\CoVar_\D[\z, \y] = \Pr_{\D}[\y = 1]\lt(\E_{\D|\y = 1}[\z] - \E_{\D}[\z] \rt) = \Pr_{\D}[\y = 0]\lt(\E_{\D}[\z] - \E_{\D|\y = 0}[\z]\rt).
\end{align}
\end{corollary}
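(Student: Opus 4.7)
The plan is to start from the centered form of covariance, $\CoVar_\D[\z,\y] = \E_\D[\z(\y - \E_\D[\y])]$, set $p = \Pr_\D[\y=1] = \E_\D[\y]$, and then apply the law of total expectation conditioning on the value of $\y \in \zo$. Both equalities will then follow by straightforward algebra; there is no real obstacle here, as this is essentially a bookkeeping identity that rewrites a covariance in terms of conditional means.

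For the first equality, I would split $\E_\D[\z\y]$ according to the value of $\y$. Since $\y$ is $\zo$-valued, only the $\y=1$ branch contributes:
\[
\E_\D[\z\y] = p \cdot \E_{\D|\y=1}[\z\y] + (1-p)\cdot \E_{\D|\y=0}[\z\y] = p\, \E_{\D|\y=1}[\z].
\]
Combining with $\E_\D[\z]\E_\D[\y] = p\, \E_\D[\z]$ gives $\CoVar_\D[\z,\y] = p\,(\E_{\D|\y=1}[\z] - \E_\D[\z])$, which is the first claimed equality.

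For the second equality, I would use the law of total expectation on $\z$ itself:
\[
\E_\D[\z] = p\, \E_{\D|\y=1}[\z] + (1-p)\, \E_{\D|\y=0}[\z].
\]
Substituting to eliminate $\E_{\D|\y=1}[\z]$ in the expression $p\,(\E_{\D|\y=1}[\z] - \E_\D[\z])$ yields
\[
p\,\E_{\D|\y=1}[\z] - p\,\E_\D[\z] = \big(\E_\D[\z] - (1-p)\,\E_{\D|\y=0}[\z]\big) - p\,\E_\D[\z] = (1-p)\,\big(\E_\D[\z] - \E_{\D|\y=0}[\z]\big),
\]
which, since $1 - p = \Pr_\D[\y = 0]$, matches the second claimed expression and completes the proof.
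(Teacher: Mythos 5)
Your proof is correct. For the first equality you take exactly the paper's route: since $\y$ is $\zo$-valued, $\E_\D[\y]=\Pr_\D[\y=1]$ and $\E_\D[\z\y]=\Pr_\D[\y=1]\,\E_{\D|\y=1}[\z]$, which plugged into the definition of covariance gives the claim. For the second equality you diverge slightly from the paper: the paper applies the first equality to $\y'=1-\y$ and uses $\CoVar_\D[\z,1-\y]=-\CoVar_\D[\z,\y]$, a small symmetry trick, whereas you instead use the law of total expectation on $\z$ to eliminate $\E_{\D|\y=1}[\z]$ and simplify algebraically. Both derivations are one or two lines and entirely equivalent in content; the paper's version is marginally slicker in that it avoids rewriting the tower identity, while yours is perhaps more transparent to a reader who would rather not track a sign flip. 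Either is a fine proof of this bookkeeping identity.
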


\subsection{\Mcab via covariance}

In this section, we define \mcab for the binary labels setting where $\Y = \zo$. We build on a recent line of work \cite{hkrr2018, Jung20, kgz, gopalan2021multicalibrated}. A detailed discussion of how our definitions compare to previous definitions is presented in Appendix \ref{app:def-mcab}. The following definition is  a generalization of the notion of $\alpha$-\mcab to real-valued $c$, and in the setting of partitions. 

\begin{definition}
\label{def:m-cal}
Let $\D$ be a distribution on $\X \times \zo$. The partition $\mS$ of $\X$ is $\alpha$-\smcbd for $\mC, \D$ if for every $i \in [m]$ and $c \in \mC$, the conditional distribution $\D_i = \D|\x \in S_i$ satisfies
\begin{align}
\label{eq:m-cal}
    \abs{\CoVar_{\D_i}[c(\x), \y]} \leq \alpha. 
\end{align}
\end{definition}
A consequence of this definition is that for each $\D_i$, conditioning on $\y$ does not change the expectation of $c(\x)$ by much. 
Formally, by Equation \eqref{eq:covar-1}, for $i \in [m]$ and $b \in \zo$, 
\begin{align}
    \Pr_{\D_i}[\y = b]\lt|\E_{\D_i|\y = b}[c(\x)] - \E_{\D_i}[c(\x)]\rt| \leq \alpha. \label{eq:multicalibration_corollary}
\end{align}
\eat{In contrast, \cite{hkrr2018} consider  $c: \X \rgta \zo$ which can be thought of as the indicator of a set, and require that for each $\D_i$, conditioning on being in the set does not change the expectation of $\y$. }

 Definition \ref{def:m-cal} requires a bound on the covariance for every distribution $\D_i$. This might be hard to achieve if $\D(S_i)$ is tiny, and hence we hardly see samples from $\D_i$ when sampling from $\D$. This motivated a relaxed definition called $(\alpha, \beta)$-\mcab in \cite{hkrr2018, gopalan2021multicalibrated}.  We propose a different definition for which it is also easy to achieve sample efficiency. Rather than requiring the covariance be small for every $i$, we only require it to be small on average. Let $\i \sim \D$ denote sampling (the index of) a set from the partition $\mS$ according to $\D$ so that $\Pr[\i = i] = \D(S_i)$.

\begin{definition}
\label{def:m-cal-1}
The partition $\mS$ of $\X$ is $\alpha$-\mcbd for $\mC, \D$ if for every  $c \in \mC$, 
\begin{align}
\label{eq:m-cal-new}
    \E_{\i \sim \D}\abs{\CoVar_{\D_\i}[c(\x), \y]} \leq \alpha. 
\end{align}
\end{definition}

The next lemma shows that  \mcbn\ implies closeness to (strict) \mcab\ under the distribution $\D$. The proof is by applying Markov's inequality to Definition \ref{def:m-cal-1}.

\begin{lemma}
    If  $\mS$ is $\alpha$-\mcbd for $\mC, \D$, then for every $c \in \mC$ and $\beta \in [0,1]$ 
    \begin{align}
        \Pr_{\i \sim \D}\left[\abs{\CoVar_{\D_{\i}}[c(\x), \y} \geq \frac{\alpha}{\beta}\right] \leq \beta.
    \end{align}
\end{lemma}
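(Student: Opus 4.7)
The plan is to apply Markov's inequality directly to the nonnegative random variable $Z(\i) \defeq \abs{\CoVar_{\D_\i}[c(\x), \y]}$, where $\i \sim \D$ indexes the states of the partition. By Definition~\ref{def:m-cal-1}, $\alpha$-\mcbn\ of $\mS$ for $\mC, \D$ gives us the hypothesis
\[
\E_{\i \sim \D}[Z(\i)] = \E_{\i \sim \D}\abs{\CoVar_{\D_\i}[c(\x), \y]} \leq \alpha
\]
for every $c \in \mC$, so it suffices to control the upper tail of $Z$.

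Fix $c \in \mC$ and $\beta \in [0,1]$. Since $Z(\i) \geq 0$, Markov's inequality gives
\[
\Pr_{\i \sim \D}\left[ Z(\i) \geq \frac{\alpha}{\beta} \right] \leq \frac{\E_{\i \sim \D}[Z(\i)]}{\alpha/\beta} = \frac{\beta \cdot \E_{\i \sim \D}[Z(\i)]}{\alpha} \leq \frac{\beta \cdot \alpha}{\alpha} = \beta,
\]
which is exactly the claimed inequality. The (degenerate) case $\alpha = 0$ can be handled separately by noting that then $Z(\i) = 0$ almost surely under $\i \sim \D$, so the event in question has probability $0$ regardless of the choice of $\beta$.

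There is essentially no obstacle here: the lemma is a one-line Markov bound, and the only subtlety is bookkeeping at the boundary $\alpha = 0$ (or $\beta = 0$, where the inequality is vacuous since the right-hand-side threshold $\alpha/\beta$ is infinite and the probability is trivially at most $\beta$). The content of the lemma is simply translating the average-case guarantee of Definition~\ref{def:m-cal-1} into a high-probability statement, justifying that \mcbn\ implies a per-state covariance bound for all but a $\beta$-fraction of the mass under $\D$.
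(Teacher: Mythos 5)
Your proof is correct and matches the paper's approach exactly: the paper states that the lemma follows by applying Markov's inequality to Definition~\ref{def:m-cal-1}, which is precisely what you do. Your brief handling of the degenerate cases $\alpha = 0$ and $\beta = 0$ is a nice bit of extra care but not a departure from the paper's argument.
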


This lemma shows that being $\alpha\beta$-\mcbd\  is closely related to the notion of $(\alpha, \beta)$-\mcab\ in \cite{hkrr2018, gopalan2021multicalibrated}, which roughly says that the $\alpha$-\mcab\ condition holds for all but a $\beta$ fraction of the space $\X$. Conversely, one can show that $(\alpha, \beta)$-\mcab\ gives $(\alpha + \beta \infnorm{\mC})$-\mcbn. We find the single parameter notion of $\alpha$-\mcbn\ more elegant. It is also easy to achieve sample efficiency, since as the next lemma shows, it only requires strong conditional guarantees for large states.

\begin{lemma}
\label{lem:small}
    Let the partition $\mS$ be such that for all $i \in [m]$ where
    \begin{align}
    \label{eq:small}
        \D(S_i) \geq \frac{\alpha}{2m\infnorm{\mC}}
    \end{align}
    it holds that for every $c \in \mC$, 
    \begin{align}
    \label{eq:small-cov}
        \abs{\CoVar_{\D_i}[c(\x), \y]} \leq \frac{\alpha}{2}.
    \end{align}
    Then $\mS$ is $\alpha$-\mcbd for $\mC, \D$.
\end{lemma}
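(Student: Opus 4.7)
The plan is a straightforward split-and-bound argument. Fix any $c \in \mC$. I partition the indices $[m]$ into a set $L$ of ``large'' states satisfying $\D(S_i) \geq \alpha/(2m\infnorm{\mC})$ and a set $S$ of ``small'' states where this fails. I then bound the two contributions to $\E_{\i \sim \D}|\CoVar_{\D_\i}[c(\x),\y]| = \sum_i \D(S_i)\,|\CoVar_{\D_i}[c(\x),\y]|$ separately.

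For the large states, the hypothesis \eqref{eq:small-cov} gives $|\CoVar_{\D_i}[c(\x),\y]| \leq \alpha/2$ for each $i \in L$, so
\[
\sum_{i \in L} \D(S_i)\,|\CoVar_{\D_i}[c(\x),\y]| \;\leq\; \frac{\alpha}{2}\sum_{i \in L}\D(S_i) \;\leq\; \frac{\alpha}{2}.
\]
For the small states I need only the trivial a priori bound on the covariance. Using $\y \in \zo$, I have $|\y - \E_{\D_i}[\y]| \leq 1$ pointwise, so
\[
|\CoVar_{\D_i}[c(\x),\y]| \;=\; \left|\E_{\D_i}\bigl[c(\x)(\y - \E_{\D_i}[\y])\bigr]\right| \;\leq\; \infnorm{\mC}.
\]
Combining this with $\D(S_i) < \alpha/(2m\infnorm{\mC})$ and $|S| \leq m$,
\[
\sum_{i \in S} \D(S_i)\,|\CoVar_{\D_i}[c(\x),\y]| \;\leq\; \infnorm{\mC}\sum_{i \in S}\D(S_i) \;\leq\; \infnorm{\mC}\cdot m \cdot \frac{\alpha}{2m\infnorm{\mC}} \;=\; \frac{\alpha}{2}.
\]
Adding the two contributions yields $\E_{\i \sim \D}|\CoVar_{\D_\i}[c(\x),\y]| \leq \alpha$, which is exactly Definition~\ref{def:m-cal-1}.

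There is no real obstacle: the argument is purely a bookkeeping split between states whose probability mass is large enough that the conditional guarantee is meaningful, and states whose total mass is so small that even the worst-case covariance bound $\infnorm{\mC}$ cannot contribute more than $\alpha/2$ in aggregate. The only thing to be slightly careful about is the a priori covariance bound; using $\y \in \zo$ so that $|\y - \E[\y]| \leq 1$ keeps the bound at $\infnorm{\mC}$ and makes the threshold in \eqref{eq:small} come out exactly right.
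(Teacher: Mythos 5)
Your proof is correct and takes essentially the same approach as the paper: partition states into those satisfying \eqref{eq:small} and those that don't, bound the large-state contribution by $\alpha/2$ using hypothesis \eqref{eq:small-cov}, and bound the small-state contribution by $\alpha/2$ using the a priori bound $|\CoVar_{\D_i}[c(\x),\y]| \leq \infnorm{\mC}$ together with $|[m]| \leq m$. Your justification of the a priori covariance bound via $|\y - \E_{\D_i}[\y]|\leq 1$ is if anything a touch more careful than the paper's wording, but the argument is the same.
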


\Mcbn implies that for an {\em average} state in the partition, conditioning on the label does not change the expectation of $c(x)$ much. The proof follows by plugging Equation \eqref{eq:covar-1} in the definition of \mcbn.

\begin{corollary}
\label{cor:bin}
If $\mS$ is $\alpha$-\mcbd for $\mC, \D$, then for $c \in \mC$ and $b \in \zo$,
\begin{align}
    \E_{\i \sim \D}\lt[\Pr_{\D_i}[\y = b]\lt|\E_{\D_i|\y = b}[c(\x)] - \E_{\D_i}[c(\x)]\rt|\rt] \leq \alpha.
\end{align}
\end{corollary}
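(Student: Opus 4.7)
The plan is to reduce the claim pointwise to Equation \eqref{eq:covar-1} applied to each conditional distribution $\D_i$, and then take expectations to invoke Definition \ref{def:m-cal-1}. There is no real obstacle here; the content of the corollary is essentially a rewriting of the definition of multicalibration using the covariance identity already stated in Corollary \ref{cor:covar}.

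Concretely, I would fix $c \in \mC$ and $b \in \zo$, and apply Corollary \ref{cor:covar} with $\z = c(\x)$ and $\D$ replaced by $\D_i$. This yields, after taking absolute values on both sides (and noting that $\Pr_{\D_i}[\y = b]$ is a nonnegative scalar, hence may be pulled outside of the absolute value),
\begin{equation*}
\Pr_{\D_i}[\y = b]\,\bigl|\E_{\D_i|\y = b}[c(\x)] - \E_{\D_i}[c(\x)]\bigr| \;=\; \bigl|\CoVar_{\D_i}[c(\x), \y]\bigr|,
\end{equation*}
an identity valid for each state $i \in [m]$ and each $b \in \zo$ separately.

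Now I would take the expectation over $\i \sim \D$ of both sides. Linearity of expectation gives
\begin{equation*}
\E_{\i \sim \D}\Bigl[\Pr_{\D_\i}[\y = b]\,\bigl|\E_{\D_\i|\y = b}[c(\x)] - \E_{\D_\i}[c(\x)]\bigr|\Bigr] \;=\; \E_{\i \sim \D}\bigl[\,|\CoVar_{\D_\i}[c(\x), \y]|\,\bigr],
\end{equation*}
and by Definition \ref{def:m-cal-1} the right-hand side is at most $\alpha$, since $\mS$ is $\alpha$-multicalibrated for $\mC, \D$ and $c \in \mC$. This completes the argument, and the bound holds uniformly over the choice of $b \in \zo$ because the identity above held for either value of $b$. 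No additional case analysis (e.g., handling states where $\Var_{\D_i}[\y] = 0$) is needed, because in such degenerate states both sides of the pointwise identity are zero automatically.
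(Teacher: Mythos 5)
Your proof is correct and is essentially identical to the paper's, which likewise invokes Equation \eqref{eq:covar-1} pointwise on each $\D_i$, takes absolute values, and then takes the expectation over $\i \sim \D$ to apply Definition \ref{def:m-cal-1}. The only difference is that you spell out the intermediate steps (and the degenerate-state remark) that the paper compresses into a single displayed line.
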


\paragraph{Extension to the multi-class setting}

In the multi-class setting $\Y =[l]$, so that $l =2$ is exactly the Boolean case considered above. We use $\ind{\y =j}$ to denote the indicator of the event that the label is $j$. The following definition generalizes Definition \ref{def:m-cal-1}:

\begin{definition}
\label{def:m-cal-k}
Let $\D$ be a distribution on $\X \times [l]$ where $l \geq 2$. The partition $\mS$ of $\X$ is $\alpha$-\mcbd for $\mC, \D$ if for every $c \in \mC$ and $j \in [l]$, it holds that 
\begin{align}
\label{eq:m-cal-k}
    \E_{\i \sim \D}\lt[\abs{\CoVar_{\D_i}[c(\x), \ind{\y = j}]}\rt] \leq \alpha. 
\end{align}
\end{definition}

\paragraph{Extension to the bounded real-valued case.}

We now consider the setting where $\Y$ is a bounded interval, by scaling we may consider $\Y = [0,1]$. For  interval $J = [v,w] \subset \Y$, let $\ind{\y \in J}$ be the indicator of the event that $\y \in J$. 

\begin{definition}
\label{def:m-cal-real}
Let $\D$ be a distribution on $\X \times [0,1]$. The partition $\mS$ of $\X$ is $\alpha$-\mcbd for $\mC, \D$ if for every $c \in \mC$ and interval $J \subseteq  [0,1]$, it holds that 
\begin{align}
\label{eq:m-cal-real}
    \E_{\i \sim \D}\lt[\abs{\CoVar_{\D_i}[c(\x), \ind{\y \in J}]}\rt] \leq \alpha. 
\end{align}
\end{definition}

\paragraph{Computational efficiency}

Given these new definitions, a natural question is about the computational complexity of computing multicalibrated partitions. Following \cite{hkrr2018}, this task can be accomplished efficiently given a weak agnostic learner for the class $\mC$. We present a  formal statement of this result for the multi-class setting  in Theorem \ref{thm:alg-multi}. The multi-class setting includes the Boolean labels setting as a special case. For the purposes of omniprediction, we show in Section \ref{sec:real} that the bounded real-valued setting reduces to the multi-class setting.  We defer these results to Section \ref{sec:algo} since we do not consider them to be the main contribution of this work, several of the ideas used in the algorithm and its analysis are present in previous work, they are presented here for completeness.

\eat{
For computing such a partition, we show in Section \ref{sec:real} that it can be reduced to the multi-class case for $l = B/\alpha$.
}

\subsection{Some closure properties of \mcab }
\label{sec:closure}

In this section, we prove that \mcbn\ is closed under two natural operations on the class $\mC$ and the distribution $\D$: 
\begin{enumerate}
\item {\bf Linear combinations of $\mC$: } We take sparse linear combinations of the functions $c \in \mC$.
\item {\bf Conditioning $\D$ on a subset: } We condition the distribution $\D$ on a subset $\X' \subseteq \X$ whose indicator lies in the set $\mC$.  
\end{enumerate}
Again we prove these for the case $\Y = \zo$, but the extension to arbitrary $\Y$ is routine.

\paragraph{Multi-calibration under linear combinations.}

We will typically start with an \mcbd partition for a {\em base} class of bounded or even Boolean functions, such as decision trees or coordinate functions. Since our definition allows the functions $c$ to be real-valued and possibly unbounded, we can consider functions arising from linear combinations over this base class. The motivation for this comes from boosting algorithms like AdaBoost or Logistic Regression, where we take a base class of weak learners, and then construct a strong learner which is a linear combination of the weak learners \cite{freund1997decision, boostingBook}.

We will denote by $\mLC$ the set of all linear functions over $\mC$. 
We associate the vector $w = (w_0, w_1, \ldots)$ with the function $g_w \in \mLC$ defined by $g_w(x) = w_0 + \sum_{j}w_j c_j(x)$ where $c_j \in \mC$ and denote $\|w\|_1 = \sum_{j \geq 1} w_j$ (note we have excluded $w_0$). Let
\begin{equation}\label{eq:mLC}
\mLC(W) = \{ g_w \in \mLC: \|w\|_1 \leq W \} 
\end{equation}
be the set of all {\em $W$-sparse} linear combinations. 
The following simple claim shows that \mcab\ is {\em closed} under taking sparse linear combinations. The parameter $\alpha$ degrades with the sparsity. The proof follows from linearity of covariance, and is given in Appendix \ref{app:mcab}.

\begin{lemma}
\label{lem:linear}
For any $W > 0$, if $\mS$ is $\alpha$-\mcbd for $\mC, \D$, then it is $\alpha W$-\mcbd for $\mLC(W), \D$. 
\end{lemma}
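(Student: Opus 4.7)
The plan is to use the bilinearity of covariance, combined with the triangle inequality, to reduce the covariance of a linear combination to a weighted sum of the covariances of its constituents. Since each constituent covariance is controlled by the hypothesis that $\mS$ is $\alpha$-\mcbd for $\mC$, the sparsity bound $\|w\|_1 \leq W$ on the coefficients will give exactly the $\alpha W$ bound we want.

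More concretely, I would fix an arbitrary $g_w \in \mLC(W)$, writing $g_w(x) = w_0 + \sum_{j \geq 1} w_j c_j(x)$ with $c_j \in \mC$ and $\sum_{j \geq 1} |w_j| \leq W$. For any state $S_i$, bilinearity of covariance (noting that constants have zero covariance with $\y$) yields
\begin{align*}
    \CoVar_{\D_i}[g_w(\x), \y] = \sum_{j \geq 1} w_j \, \CoVar_{\D_i}[c_j(\x), \y].
\end{align*}
Applying the triangle inequality gives $|\CoVar_{\D_i}[g_w(\x), \y]| \leq \sum_{j \geq 1} |w_j| \cdot |\CoVar_{\D_i}[c_j(\x), \y]|$.

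Taking expectations over $\i \sim \D$, swapping the (finite) sum and the expectation, and applying the hypothesis $\E_{\i \sim \D}|\CoVar_{\D_\i}[c_j(\x), \y]| \leq \alpha$ to each $c_j \in \mC$ yields
\begin{align*}
    \E_{\i \sim \D}\abs{\CoVar_{\D_\i}[g_w(\x), \y]} \leq \sum_{j \geq 1} |w_j| \cdot \alpha \leq \alpha W,
\end{align*}
which is exactly the $\alpha W$-\mcbn\ condition for $g_w$. Since $g_w \in \mLC(W)$ was arbitrary, this completes the proof.

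There is essentially no obstacle here—the whole point of formulating \mcbn\ via covariance (rather than via conditional expectations, as in prior definitions) is that covariance is a bilinear form, which makes closure under linear combinations immediate. The only mild subtlety is to remember to exclude $w_0$ from the sum (since additive constants drop out of covariance), which is consistent with the convention in the definition of $\mLC(W)$ where $w_0$ is explicitly not counted in $\|w\|_1$.
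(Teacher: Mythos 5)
Your proof is correct and matches the paper's argument exactly: both expand $\CoVar_{\D_i}[g_w(\x),\y]$ by bilinearity (dropping the constant $w_0$), apply the triangle inequality, swap the sum over $j$ with the average over states $i$, and invoke the $\alpha$-\mcbn\ hypothesis for each $c_j$ together with $\|w\|_1 \leq W$. The paper simply writes $\E_{\i\sim\D}$ out explicitly as $\sum_{i\in[m]}\D(S_i)$; there is no substantive difference.
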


\paragraph{Multi-calibration for sub-populations.}

Let $T \subseteq \X$ be a sub-population such that its indicator function belongs to $\mC$. Let $\D'$ denote the distribution $\D|\x \in T$, where $\D'(x) = \D(x)/\D(T)\; \forall x \in T$. Let $\mS' = \{S_i \cap T\}$ be the partition of $T$ induced by $\mS$.  We will use $\D'_i$ for the distribution $\D'|\x \in S'_i$ (which is the same as $\D|\x \in S'_i$ since $S'_i \subseteq T$), and will denote $p'_i = \E_{\D'_i}[\y]$. Let $\mC' \subseteq \mC$ denote the subset of functions from $\mC$ that are supported on $T$ (functions that are 0 outside of $T$). Note that $\mC'$ is nonempty, since the indicator of $T$ lies in it. The proof of the following result for the sub-population $T$ in Appendix \ref{app:sub-pop}.

\begin{theorem}
\label{thm:sub-pop}
    If $\mS$ is $\alpha$-\mcbd for $\mC, \D$, then $\mS'$ is $\alpha(1 + \infnorm{\mC'})/\D(\X')$-\mcbd for $\mC', \D'$.
\end{theorem}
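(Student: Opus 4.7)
The plan is to derive a relation between $\CoVar_{\D'_i}[c'(\x),\y]$ and $\CoVar_{\D_i}[c'(\x),\y]$ for each $c' \in \mC'$, and then take expectations over $\i$, using the multicalibration hypothesis applied both to $c' \in \mC$ and to the indicator $\ind{T} \in \mC$.

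First, I would exploit that $c'$ is supported on $T$. Writing $q_i = \D_i(T) = \D(S_i \cap T)/\D(S_i)$, and using $c'(\x)=0$ for $\x \notin T$, I have
\begin{align*}
\E_{\D_i}[c'(\x)\y] &= q_i \E_{\D'_i}[c'(\x)\y], \qquad \E_{\D_i}[c'(\x)] = q_i \E_{\D'_i}[c'(\x)].
\end{align*}
Expanding $\CoVar_{\D_i}[c'(\x),\y] - q_i \CoVar_{\D'_i}[c'(\x),\y]$ using these identities, the $\E[c'(\x)\y]$ terms cancel up to a factor of $q_i$ and one is left with a term proportional to $\E_{\D'_i}[c'(\x)]\bigl(\E_{\D'_i}[\y]-\E_{\D_i}[\y]\bigr)$. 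The key observation is that this last factor is itself a covariance involving $\ind{T}$: a direct calculation (again using $\ind{T}(\x)=0$ off $T$) gives $\CoVar_{\D_i}[\ind{T}(\x),\y] = q_i\bigl(\E_{\D'_i}[\y]-\E_{\D_i}[\y]\bigr)$. Combining these yields the identity
\begin{equation*}
q_i\, \CoVar_{\D'_i}[c'(\x),\y] \;=\; \CoVar_{\D_i}[c'(\x),\y] \;-\; \E_{\D'_i}[c'(\x)]\cdot \CoVar_{\D_i}[\ind{T}(\x),\y].
\end{equation*}
Taking absolute values and using $|\E_{\D'_i}[c'(\x)]|\leq \infnorm{\mC'}$ gives a pointwise bound on each state.

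Next I would take expectations over $\i$, being careful with the change of measure. Since $\Pr_{\i\sim \D'}[\i = i] = \D(S_i)\, q_i/\D(T)$, multiplying the pointwise inequality by $\D(S_i)$ and summing over $i$ conveniently cancels the $q_i$ on the left:
\begin{align*}
\D(T)\cdot \E_{\i\sim\D'}\bigl|\CoVar_{\D'_\i}[c'(\x),\y]\bigr|
&\;\leq\; \sum_i \D(S_i)\bigl|\CoVar_{\D_i}[c'(\x),\y]\bigr|
\;+\; \infnorm{\mC'}\sum_i \D(S_i)\bigl|\CoVar_{\D_i}[\ind{T}(\x),\y]\bigr| \\
&\;=\; \E_{\i\sim\D}\bigl|\CoVar_{\D_\i}[c'(\x),\y]\bigr| \;+\; \infnorm{\mC'}\,\E_{\i\sim\D}\bigl|\CoVar_{\D_\i}[\ind{T}(\x),\y]\bigr|.
\end{align*}
Both terms on the right are bounded by $\alpha$ by applying Definition \ref{def:m-cal-1} to $c' \in \mC$ and $\ind{T}\in \mC$ respectively; dividing by $\D(T)$ (which matches $\D(\X')$ in the theorem statement) yields the claimed bound $\alpha(1+\infnorm{\mC'})/\D(T)$.

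The only subtle step is the algebraic identity relating $\CoVar_{\D'_i}$ to $\CoVar_{\D_i}$; the rest is a matter of bookkeeping under the change of measure from $\D$ to $\D'$. The proof uses $\ind{T}\in \mC$ in an essential way, which is precisely the reason that hypothesis is included in the theorem's setup.
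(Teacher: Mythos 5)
Your proof is correct and is essentially the same argument as the paper's, just phrased directly in terms of covariances rather than via the paper's intermediate reformulation (Lemma~\ref{lem:m-cal-newer}). Your identity
\begin{equation*}
q_i\, \CoVar_{\D'_i}[c'(\x),\y] = \CoVar_{\D_i}[c'(\x),\y] - \E_{\D'_i}[c'(\x)]\cdot\CoVar_{\D_i}[\ind{T}(\x),\y]
\end{equation*}
is, after multiplying through by $\D(S_i)$, exactly the paper's decomposition of $\E_\D[S_i'(\x)c'(\x)(\y-p_i')]$ via the split $\y-p'_i=(\y-p_i)+(p_i-p'_i)$; both arguments then invoke multicalibration once for $c'\in\mC$ and once for $\ind{T}\in\mC$, bound the cross term by $\infnorm{\mC'}$, and divide out $\D(\X')$.
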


\section{\Totals for convex loss minimization}
\label{sec:binary}

In  this section we consider the setting of binary labels where $\Y = \zo$.

\subsection{Post-processing for nice loss functions}

Given an $(B, \eps)$-nice loss function, there is a natural post-processing of the canonical predictor $f^\mS$ that we will analyze. Rather than choose the value $k^*(p) \in \R$ which minimizes expected loss under $\B{p}$, we restrict ourselves to the best value from $I_\ell$. This restriction only costs us $\eps$ by the $\eps$-optimality property. 

\begin{definition}
\label{def:k-ell}
Given a nice loss function $\ell$, define the function $k_\ell: [0,1] \rgta I_\ell$ by
\begin{align}
      \label{eq:def-kp}
    k_\ell(p) = \argmin_{t \in I_\ell} \E_{\y \sim \B{p}} \ell(\y,t).
\end{align} 
Given a partition $\mS$ of $\X$,  define the $\ell$-optimized hypothesis $h^\mS_\ell: \X \rgta I_\ell$ as
\[ h^\mS_\ell(x) = k_\ell \circ f^\mS(x). \]
\end{definition}
Since $\ell$ is convex as a function of $t$, so is 
\[ \E_{\y \sim \B{p}} \ell(\y,t) = p\ell(0,t) + (1- p)\ell(1, t).\]
Hence computing $k_\ell$ is a one-dimensional convex minimization problem, a classical problem with several known algorithms \cite{boydBook}. Being able to compute an $\eps'$-approximate solution suffices for us, we can  absorb the $\eps'$ term into the error $\eps$, and pretend that $\ell$ is $(B, \eps + \eps')$-nice instead.

We can view the hypothesis $h^\mS_\ell$ as a function mapping $\mS$ to $I_\ell$, since it is constant on each $S_i \in \mS$, and its range is $I_\ell$. A simple consequence of the definition is that it is the best function in this class for minimizing expected loss.

\begin{corollary}
\label{cor:optimal}
For all functions $h: \mS \rgta I_\ell$, $\ell_\D(h^\mS_\ell) \leq \ell_\D(h)$. 
\end{corollary}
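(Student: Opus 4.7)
The plan is to decompose the expected loss as a sum over the states of the partition and then invoke the pointwise minimality built into the definition of $k_\ell$. Concretely, fix any $h: \mS \rgta I_\ell$ and let $t_i = h(S_i) \in I_\ell$ for each $i \in [m]$. By the law of total expectation,
\[
\ell_\D(h) = \sum_{i=1}^m \D(S_i)\,\E_{\D_i}\!\lt[\ell(\y, t_i)\rt].
\]
Since $\y \sim \D_i$ is a Bernoulli random variable with parameter $p_i = \E_{\D_i}[\y]$, the inner expectation equals $\E_{\y \sim \B{p_i}}[\ell(\y, t_i)] = p_i\,\ell(1,t_i) + (1-p_i)\,\ell(0,t_i)$.

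Now apply the same decomposition to $h^\mS_\ell$, which on $S_i$ takes the constant value $k_\ell(p_i) \in I_\ell$. By the defining property \eqref{eq:def-kp} of $k_\ell$, for each $i$ we have
\[
\E_{\y \sim \B{p_i}}\!\lt[\ell(\y, k_\ell(p_i))\rt] = \min_{t \in I_\ell} \E_{\y \sim \B{p_i}}\!\lt[\ell(\y, t)\rt] \leq \E_{\y \sim \B{p_i}}\!\lt[\ell(\y, t_i)\rt].
\]
Multiplying by $\D(S_i) \geq 0$ and summing over $i$ yields $\ell_\D(h^\mS_\ell) \leq \ell_\D(h)$, as claimed.

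There is essentially no obstacle here: the statement is immediate once one recognizes that a function $h: \mS \rgta I_\ell$ is simply a choice of one value $t_i \in I_\ell$ per state, and that $k_\ell(p_i)$ is by construction the optimal such choice for state $S_i$. The only thing worth emphasizing in the write-up is that the minimization decouples across states, so minimizing each term separately also minimizes the weighted sum.
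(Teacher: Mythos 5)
Your proof is correct and takes essentially the same approach as the paper: both decompose the expected loss over the states of the partition (the paper by conditioning on $\i = i$, you by writing the weighted sum explicitly), note that $\y \sim \D_i$ is $\B{p_i}$, and then invoke the pointwise minimality of $k_\ell(p_i)$ over $I_\ell$ from Definition \ref{def:k-ell} to conclude term by term.
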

\begin{proof}
    We sample $\i \sim \D$ and then $\x, \y \sim \D_\i$ and show that the inequality holds conditioned on every choice of $\i =i$. Since $\y \sim D_i$ is distributed as $\B{p_i}$,     
    \[ \ell_{\D_i}(h^\mS_\ell) =  \E_{\B{p_i}}[\ell(\y, k_\ell(p_i))]  \leq \E_{\B{p_i}}[\ell(\y, h(S_i))] = \ell_{\D_i}(h) \]
    where the inequality is by Definition \ref{def:k-ell}.
\end{proof}

\subsection{Loss minimization through \Mcab}
\label{sec:main}

Our main result in this section is the following theorem.  
\begin{theorem}
\label{thm:key}
Let $\D$ be a distribution on $\X \times \zo$, $\mC$ be a family of real-valued functions on $\X$ and $\mL(B,\eps)$ be the family of all $(B, \eps)$-nice loss functions.  If the partition $\mS$ is $\alpha$-\mcbd\ for $\mC, \D$, then the canonical predictor $f^\mS$ is an $(\mL, \mC, 2\alpha B + \eps)$-\total.
\end{theorem}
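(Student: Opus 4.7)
The plan is to bound the loss state-by-state and then integrate using multicalibration. Fix any $c \in \mC$ and any $(B,\eps)$-nice loss $\ell \in \mL(B,\eps)$; I aim to show $\ell_\D(k_\ell \circ f^\mS) \le \ell_\D(c) + 2\alpha B + \eps$. Within a single state $S_i$, write $p_i = \E_{\D_i}[\y]$, $\bar c_i = \E_{\D_i}[c(\x)]$, and $\bar c_i^b = \E_{\D_i|\y=b}[c(\x)]$ for $b \in \zo$. The first step is to use convexity of $\ell$ in its second argument. Decomposing by the value of $\y$ and applying Jensen's inequality in each branch gives
\[
\E_{\D_i}[\ell(\y, c(\x))] \;=\; \sum_{b\in\zo}\Pr_{\D_i}[\y=b]\,\E_{\D_i|\y=b}[\ell(b,c(\x))] \;\ge\; \sum_{b\in\zo}\Pr_{\D_i}[\y=b]\,\ell(b, \bar c_i^b).
\]

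The second step is to replace the two conditional expectations $\bar c_i^b$ by the single unconditional expectation $\bar c_i$, at a cost controlled by multicalibration. The subtlety, which will be the main technical obstacle, is that $\bar c_i^b$ and $\bar c_i$ need not lie in $I_\ell$, so I cannot directly invoke the $B$-Lipschitz property. To circumvent this I clip both values into $I_\ell$: writing $\tilde c_i^b = \clip(\bar c_i^b, I_\ell)$ and $\tilde c_i = \clip(\bar c_i, I_\ell)$, Lemma~\ref{lem:tech} yields $\ell(b, \bar c_i^b) \ge \ell(b, \tilde c_i^b) - \eps$, and since $\clip(\cdot, I_\ell)$ is $1$-Lipschitz while $\ell(b,\cdot)$ is $B$-Lipschitz on $I_\ell$, we get $\ell(b,\tilde c_i^b) \ge \ell(b,\tilde c_i) - B|\bar c_i^b - \bar c_i|$. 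Combining,
\[
\E_{\D_i}[\ell(\y, c(\x))] \;\ge\; \sum_{b}\Pr_{\D_i}[\y=b]\,\ell(b,\tilde c_i) \;-\; B\sum_b \Pr_{\D_i}[\y=b]\bigl|\bar c_i^b - \bar c_i\bigr| \;-\; \eps.
\]

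The third step uses that the leading term is the expected loss at a constant value $\tilde c_i \in I_\ell$ under $\y \sim \B{p_i}$; by Definition~\ref{def:k-ell} of $k_\ell(p_i)$ as the minimizer over $I_\ell$ of this exact quantity (equivalently, by Corollary~\ref{cor:optimal} applied to the partition-constant function $h(S_i) = \tilde c_i$), it is at least $\E_{\y\sim\B{p_i}}[\ell(\y, k_\ell(p_i))] = \E_{\D_i}[\ell(\y, k_\ell \circ f^\mS(\x))]$ since $f^\mS(x) = p_i$ on $S_i$.

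Finally, I take expectation over $\i \sim \D$. The only term left to control is $\E_{\i\sim\D}\sum_b \Pr_{\D_\i}[\y=b]\,|\bar c_\i^b - \bar c_\i|$. By Corollary~\ref{cor:bin} (which is the direct consequence of $\alpha$-multicalibration expressed through covariance and Corollary~\ref{cor:covar}), each of the two $b$-summands is bounded by $\alpha$, so the total is at most $2\alpha$. Assembling the three inequalities yields $\ell_\D(c) \ge \ell_\D(k_\ell \circ f^\mS) - 2\alpha B - \eps$, which rearranges to the claimed bound. Since this holds for every $c \in \mC$, $f^\mS$ is indeed an $(\mL, \mC, 2\alpha B + \eps)$-\total. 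The crux of the argument is the clip-then-Lipschitz maneuver in step two; once that is set up correctly, the combination of convexity, multicalibration, and the definition of $k_\ell$ does the rest.
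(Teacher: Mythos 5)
Your proof is correct and follows essentially the same route as the paper: Jensen's inequality via convexity, the clip-then-Lipschitz step using Lemma~\ref{lem:tech} and $1$-Lipschitzness of $\clip$, the optimality of $k_\ell(p_i)$ over $I_\ell$, and Corollary~\ref{cor:bin} to convert multicalibration into the $2\alpha$ bound. The only cosmetic difference is that the paper isolates the intermediate partition-constant hypothesis $\hat{c}$ in Lemma~\ref{lem:better} and then invokes Corollary~\ref{cor:optimal}, whereas you inline both steps into a single inequality chain state-by-state.
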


The following lemma is the key ingredient in our result. Informally it says that $c$ has limited distinguishing power within each state of the partition. Specifically, that $c(x)$ is not much better at minimizing a loss than the function obtained by taking its conditional expectation within each state of the partition $\mS$. 
\begin{lemma}
\label{lem:better}
Given $\ell \in \mL$ and $c \in \mC$, define the predictor $\hat{c}: \mS \rgta I_\ell$ by
\[ \hat{c}(x)  = \clip\lt(\E_{\D_i}[c(x)], I_\ell\rt) \ \text{for} \ x \in S_i. \]
We have
\begin{align}
     \ell_\D(\hat{c}) \leq \ell_\D(c) + 2\alpha B + \eps.
\end{align}
\end{lemma}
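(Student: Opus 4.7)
The plan is to fix a state $S_i$ and work conditionally on $\i=i$, introducing three key quantities: the two conditional means $c_b = \E_{\D_i \mid \y = b}[c(\x)]$ for $b \in \{0,1\}$, the unconditional mean $c_* = \E_{\D_i}[c(\x)]$, and the clipped value $\tilde{c}_i = \clip(c_*, I_\ell)$, which is what $\hat{c}$ outputs on $S_i$. The goal is to upper bound $\ell_{\D_i}(\hat{c})$ by $\ell_{\D_i}(c)$ plus an error that averages to $2\alpha B + \eps$ over $\i \sim \D$.

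The first step is the convexity / Jensen reduction corresponding to the ``reduction to two values'' in the technical overview. Since $\ell(b, \cdot)$ is convex for each $b \in \zo$, conditioning on $\y = b$ first and applying Jensen gives
\begin{equation*}
\ell_{\D_i}(c) = \sum_{b \in \zo} \Pr_{\D_i}[\y = b]\, \E_{\D_i \mid \y = b}[\ell(b, c(\x))] \geq \sum_{b \in \zo} \Pr_{\D_i}[\y = b]\, \ell(b, c_b).
\end{equation*}

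Next, I transfer from $(c_0, c_1)$ to the single value $\tilde{c}_i$. By the $1$-Lipschitz property of $\clip(\cdot, I_\ell)$ we have $|\tilde{c}_i - \clip(c_b, I_\ell)| \leq |c_* - c_b|$, and since both $\tilde{c}_i$ and $\clip(c_b, I_\ell)$ lie in $I_\ell$ the $B$-Lipschitzness of $\ell(b,\cdot)$ on $I_\ell$ yields $\ell(b, \tilde{c}_i) \leq \ell(b, \clip(c_b, I_\ell)) + B|c_* - c_b|$. The $\eps$-optimality clause of Lemma \ref{lem:tech} then gives $\ell(b, \clip(c_b, I_\ell)) \leq \ell(b, c_b) + \eps$, so altogether
\begin{equation*}
\ell(b, \tilde{c}_i) \leq \ell(b, c_b) + B\,|c_* - c_b| + \eps.
\end{equation*}
Weighting by $\Pr_{\D_i}[\y = b]$, summing over $b$, and combining with the Jensen bound gives
\begin{equation*}
\ell_{\D_i}(\hat{c}) \leq \ell_{\D_i}(c) + B\sum_{b \in \zo} \Pr_{\D_i}[\y = b]\,|c_* - c_b| + \eps.
\end{equation*}

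Finally, I take expectation over $\i \sim \D$. The $\eps$ term is unchanged, and by Corollary \ref{cor:bin} (which is exactly the statement that $\E_{\i \sim \D}[\Pr_{\D_\i}[\y = b]\,|c_b - c_*|] \leq \alpha$ for each $b$), the middle term is bounded by $B \cdot 2\alpha$. This yields $\ell_\D(\hat{c}) \leq \ell_\D(c) + 2\alpha B + \eps$ as required. I do not anticipate a serious obstacle beyond bookkeeping; the one subtle point is that the $B$-Lipschitz hypothesis only holds on $I_\ell$, which forces the appearance of the clip and the $\eps$ penalty from Lemma \ref{lem:tech}, but this is exactly what the definition of $(B,\eps)$-nice was set up to accommodate.
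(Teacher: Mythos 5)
Your proposal is correct and follows essentially the same line of reasoning as the paper's own proof: apply Jensen within each state conditioned on the label, pass to the clipped value via the $1$-Lipschitzness of $\clip$ and $B$-Lipschitzness of $\ell$ on $I_\ell$, absorb the clipping error into $\eps$ via Lemma \ref{lem:tech}, and finally invoke the multicalibration bound (Corollary \ref{cor:bin}) on the two conditional-mean deviations to obtain the $2\alpha B$ term. The only cosmetic difference is that you separate the per-state bound and average over $\i \sim \D$ at the end, whereas the paper carries $\E_{\i\sim\D}$ through every line.
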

\begin{proof}
    By the convexity of $\ell$ we have
    \begin{align}
    \label{eq:use-conv}
    \ell_\D(c) = \E_{\D}[\ell(\y, c(\x))] = \E_{\i \sim \D}\E_{\y \sim \D_\i}\E_{\x \sim \D_\i|\y}[\ell(\y, c(\x)] \geq \E_{\i \sim \D}\E_{\y \sim \D_\i}\lt[\ell\lt(\y, \E_{\x \sim \D_\i|\y}[c(\x)]\rt)\rt].
    \end{align}
    By Lemma \ref{lem:tech}, 
    \[ 
    \ell\lt(\y, \E_{\x \sim \D_i|\y}[c(\x)]\rt) \geq \ell\lt(\y, \clip\lt(\E_{\x \sim \D_i|\y}[c(\x)], I_{\ell}\rt)\rt) - \eps.\] 
    Plugging this into Equation \eqref{eq:use-conv}, we get
    \begin{align}
    \label{eq:loss-1}
        \ell_\D(c) \geq \E_{\i \sim \D}\E_{\y \sim \D_\i}\lt[\ell\lt(\y, \clip\lt(\E_{\x \sim \D_i|\y}[c(\x)], I_{\ell}\rt)\rt)\rt] - \eps.
    \end{align}
    From the definition of $\hat{c}$,
    \begin{align}
    \label{eq:loss-2}
    \ell_\D(\hat{c}) = \E_{\D}[\ell(\y, \hat{c}(\x))] = \E_{\i \sim \D}\E_{\y \sim \D_\i}\lt[\ell\lt(\y, \clip\lt(\E_{\x \sim \D_i}[c(\x)], I_{\ell}\rt)\rt)\rt].     
    \end{align}
    Subtracting Equation \eqref{eq:loss-1} from Equation \eqref{eq:loss-2} we get
    \begin{align}
        \ell_\D(\hat{c})  - \ell_\D(c) \leq &
        \E_{\i \sim \D}\E_{\y \sim \D_\i}\lt[\ell\lt(\y, \clip\lt(\E_{\x \sim \D_i}[c(\x)], I_{\ell}\rt)\rt) - \ell\lt(\y, \clip\lt(\E_{\x \sim \D_i|\y}[c(\x)], I_{\ell}\rt)\rt) \rt] + \eps .   \label{eq:loss-3}
    \end{align}
    Since $\ell$ is $B$-Lipschitz on $I_{\ell}$  and $\clip(t,I_{\ell})$ is $1$-Lipschitz as a function of $t$, 
    \begin{align}
    \ell\lt(\y, \clip\lt(\E_{\x \sim \D_i}[c(\x)], I_{\ell}\rt)\rt) &  - \ell\lt(\y, \clip\lt(\E_{\x \sim \D_i|\y}[c(\x)], I_{\ell}\rt)\rt) \notag\\
    &\leq B\abs{\clip\lt(\E_{\x \sim \D_i}[c(\x)], I_{\ell}\rt) - \clip\lt(\E_{\x \sim \D_i|\y}[c(\x)], I_{\ell}\rt)}\notag\\
    &\leq B\abs{ \E_{\x \sim \D_i}[c(\x)] - \E_{\x \sim \D_i|\y}[c(\x)]}.\label{eq:lipschitz+}
    \end{align}
    Plugging this into Equation \eqref{eq:loss-3} gives
    \begin{align}
    \ell_\D(\hat{c})  - \ell_\D(c) - \eps &\leq
        B\E_{\i \sim \D}\E_{\y \sim \D_\i} \lt| \E_{\x \sim \D_i}[c(\x)] - \E_{\x \sim \D_i|\y}[c(\x)]\rt|\label{eq:use-this}\\
        & = B\E_{\i \sim \D}\lt[\sum_{b \in \zo}\Pr_{\D_\i}[\y =b]\lt|\E_{\x \sim \D_i}[c(\x)] - \E_{\x \sim \D_i|\y}[c(\x)]\rt|\rt]\notag\\
        &= B\sum_{b \in \zo}\E_{\i \sim \D}\lt[\Pr_{\D_\i}[\y =b]\lt|\E_{\x \sim \D_i}[c(\x)] - \E_{\x \sim \D_i|\y}[c(\x)]\rt|\rt]\notag\\
        & \leq B \sum_{b \in \zo}\alpha = 2\alpha B\notag,
    \end{align}
    where the last inequality follows by the multicalibration condition (Equation \eqref{eq:multicalibration_corollary}).
\end{proof}

As a consequence we can now prove Theorem \ref{thm:key}. 
\begin{proof}[Proof of Theorem \ref{thm:key}]
Let $h^\mS_\ell= k_\ell \circ f^\mS$ be the $\ell$-optimized hypothesis. It suffices to show that for any $c \in \mC$
    \begin{align}
    \label{eq:key}
        \ell_\D(h^\mS_\ell) \leq \ell_\D(c) + 2\alpha B + \eps. 
    \end{align}     
    For any $c \in \mC$, we have 
    \[ \ell_\D(h^\mS_\ell) \leq \ell_\D(\hat{c}) \leq \ell_\D(c) + 2\alpha B +\eps \]
    where the first inequality is by Corollary \ref{cor:optimal}, which applies since $\hat{c}$ is a function mapping $\mS$ to $I_\ell$. The second is by Lemma \ref{lem:better}.
\end{proof}

Consider the family $\mLC(W)$ of linear combinations over $\mC$ of weight at most $W$. By Lemma \ref{lem:linear}, $\mS$ is $\alpha W$-\mcbd\ for $\mLC(W)$. Applying Theorem \ref{thm:key},  we derive the following corollary.

\begin{corollary}
\label{cor:lipschitz}
    Let $\D$ be a distribution on $\X \times \zo$, $\mLC(W)$ be linear functions in $\mC$ of with $\onenorm{w} \leq W$ (see eq.~\ref{eq:mLC}) and $\mL = \mL(B,\eps)$ be the family of all $(B, \eps)$-nice loss functions.  If the partition $\mS$ is $\alpha$-\mcbd\ for $\mC, \D$, then $f^\mS$ is an $(\mL, \mLC(W), 2\alpha BW + \eps)$-\total.
\end{corollary}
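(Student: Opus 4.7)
The proof is essentially a direct composition of two earlier results, so the plan is short. First I would invoke Lemma \ref{lem:linear} with the given class $\mC$ and sparsity bound $W$: since $\mS$ is $\alpha$-multicalibrated for $\mC,\D$, it is $\alpha W$-multicalibrated for $\mLC(W),\D$. This handles the ``change of class'' step and produces the correct linear scaling in $W$ that appears in the final error bound.

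Second, I would apply Theorem \ref{thm:key} with the class $\mC$ in that statement replaced by $\mLC(W)$ and with the multicalibration parameter $\alpha$ replaced by $\alpha W$. Nothing in Theorem \ref{thm:key} requires the class to consist of bounded or Boolean functions, only that $\mS$ be multicalibrated for it; so the substitution is legitimate. This yields that $f^\mS$ is an $(\mL,\mLC(W),\delta)$-omnipredictor with $\delta = 2(\alpha W)B + \eps = 2\alpha BW + \eps$, which is exactly the stated bound.

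The only thing worth being slightly careful about is that the post-processing function $k_\ell$ produced by the proof of Theorem \ref{thm:key} depends only on $\ell$ (via Definition \ref{def:k-ell}) and not on the class under consideration, so the same $k_\ell \circ f^\mS$ that competes with $\mC$ also competes with $\mLC(W)$; this matches the quantifier structure of Definition \ref{def:omni}. There is no real obstacle here: the work was already done in Lemma \ref{lem:linear} (bilinearity of covariance, which degrades multicalibration by a factor of $W$) and in Theorem \ref{thm:key} (the three-step ``reduce to predicting one value'' argument). Writing up the corollary is then a one-line chaining of these two statements.
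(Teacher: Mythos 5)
Your proposal is correct and matches the paper's argument exactly: the paper also chains Lemma \ref{lem:linear} (to get $\alpha W$-multicalibration for $\mLC(W)$) with Theorem \ref{thm:key} applied to the class $\mLC(W)$. Your extra observation that $k_\ell$ depends only on $\ell$ and not on the competing class is accurate and consistent with Definition \ref{def:k-ell}.
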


To interpret this, assume we have an $(B, \eps)$-nice loss function and we wish to have a predictor that is within $2\eps$ of any function in $\mLC(W)$. Corollary \ref{cor:lipschitz} says that it suffices to have an $\alpha$-\mcbd\ partition where $\alpha = \eps/2BW$. Note that algorithms for computing such partitions have running time which is polynomial in $1/\alpha$, which translates to running time polynomial in $BW/\eps$. 

We derive a corollary for sub-populations follows from Theorem \ref{thm:key} and \ref{thm:sub-pop}. For two families of functions $\mathcal{T}, \mathcal{P}: \X \rgta \R$, we define their product as 
\[ \mathcal{T} \times \mathcal{P} = \{c: c(x) = T(x)P(x), T \in \mathcal{T}, P \in \mathcal{P} \}. \]
Note that in the case when $T \in \mathcal{T}$ is binary-valued, $\mathcal{T} \times \mathcal{P}$ contains the restriction of every $P \in \mathcal{P}$ to the support of $T$.

\begin{corollary}
\label{cor:sub_pop}
    Let $\D$ be a distribution on $\X \times \zo$, $\mathcal{T}$ be a family of binary-valued functions on $\X$,
    $\mathcal{P}$ be a family of real-valued functions and $\mL(B,\eps)$ be the family of all $(B, \eps)$-nice loss functions. Let the partition $\mS$ be $\alpha$-\mcbd\ for $\mathcal{T} \times \mathcal{P}, \D$. For $T \in \mathcal{T}$,  let $\alpha'=\alpha(1 + \infnorm{\mathcal{P}})/\D(T)$. Then the canonical predictor $f^\mS$ is an $(\mL, \mathcal{P}, 2\alpha' B + \eps)$-\total for the sub-population $T$.
\end{corollary}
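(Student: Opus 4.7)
The plan is to combine Theorem~\ref{thm:sub-pop} (closure of \mcab under conditioning on a sub-population whose indicator lies in the class) with Theorem~\ref{thm:key} (the main \total from \mcab result), applied to the conditional distribution $\D' = \D|\x \in T$ and the naturally induced partition and class.

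First I would instantiate Theorem~\ref{thm:sub-pop} with the class $\mC = \mathcal{T} \times \mathcal{P}$ and the sub-population $T \in \mathcal{T}$. Because $T$ is binary-valued, every product $T\cdot P$ with $P \in \mathcal{P}$ is supported on $T$, so the subset $\mC' \subseteq \mC$ of functions supported on $T$ contains $\{T \cdot P : P \in \mathcal{P}\}$; moreover $\infnorm{\mC'} \leq \infnorm{\mathcal{P}}$ since $(T\cdot P)(x) = P(x)$ on $T$ and vanishes elsewhere. Theorem~\ref{thm:sub-pop} then yields that the induced partition $\mS' = \{S_i \cap T\}$ is $\alpha'$-\mcbd for $\mC', \D'$, with $\alpha' = \alpha(1+\infnorm{\mathcal{P}})/\D(T)$, matching the parameter in the statement.

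Second I would apply Theorem~\ref{thm:key} to the restricted system $(\mS', \mC', \D')$. This gives that the canonical predictor $f^{\mS'}$ of the restricted partition admits, for every $(B,\eps)$-nice loss $\ell$, a post-processing $k_\ell$ with
\[ \ell_{\D'}(k_\ell \circ f^{\mS'}) \leq \min_{c \in \mC'} \ell_{\D'}(c) + 2\alpha' B + \eps. \]
Since $(T\cdot P)(x) = P(x)$ for every $x \in T$, the expected loss of $T\cdot P$ under $\D'$ equals that of $P$ under $\D'$, so the minimum over $c \in \mC'$ coincides with the minimum over $P \in \mathcal{P}$. This is exactly the $(\mL, \mathcal{P}, 2\alpha' B + \eps)$-\total bound demanded by Definition~\ref{def:omni}, taken with respect to $\D'$.

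The only remaining loose end is translating the statement from $f^{\mS'}$ (the canonical predictor of the restricted setting, valued $p'_i = \E_{\D'_i}[\y]$ on $S_i \cap T$) to $f^\mS$ (the predictor named in the conclusion, valued $p_i = \E_{\D_i}[\y]$). Both predictors are constant on each nonempty $S_i \cap T$; after merging any states of $\mS$ that share a common value $p_i$ (which preserves $f^\mS$ pointwise and hence all its losses), the map $\phi\colon p_i \mapsto p'_i$ is well-defined, and the post-processing $k_\ell \circ \phi$ applied to $f^\mS$ produces the same function on $T$ as $k_\ell \circ f^{\mS'}$, and therefore attains the same expected loss under $\D'$. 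I do not expect any step to be a real obstacle; the content is just in chaining Theorem~\ref{thm:sub-pop} into Theorem~\ref{thm:key} via the closure structure of the product class $\mathcal{T} \times \mathcal{P}$.
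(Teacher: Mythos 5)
Your derivation follows the paper's intended route exactly: the paper states only that Corollary~\ref{cor:sub_pop} ``follows from Theorem \ref{thm:key} and \ref{thm:sub-pop},'' and your proof does precisely that. Instantiating Theorem~\ref{thm:sub-pop} with $\mC = \mathcal{T}\times\mathcal{P}$, observing $\{T\cdot P : P \in \mathcal{P}\} \subseteq \mC'$ with $\infnorm{\mC'} \leq \infnorm{\mathcal{P}}$ (so the bound from Theorem~\ref{thm:sub-pop} is at most $\alpha'$), applying Theorem~\ref{thm:key} to $(\mS', \mC', \D')$, and identifying $\ell_{\D'}(T\cdot P) = \ell_{\D'}(P)$ are all correct and account for the stated constant.

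The one step that needs filling in is the merge you invoke to translate from $f^{\mS'}$ back to $f^\mS$. You are right that if $S_i, S_j$ share $p_i = p_j$ but $p'_i \ne p'_j$, then no univariate $k$ applied to $f^\mS$ can reproduce $k_\ell \circ f^{\mS'}$ on $T$, so passing to the coarser partition whose states are the level sets of $f^\mS$ is the correct fix. But this means Theorems~\ref{thm:sub-pop} and~\ref{thm:key} must be applied to that \emph{merged} partition, so you need the merged partition to remain $\alpha$-\mcbd\ for $\mathcal{T}\times\mathcal{P}, \D$ --- a fact your write-up relies on but does not verify (you only note that the merge preserves $f^\mS$ pointwise). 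The claim does hold, precisely because you merge only states sharing a common value $q = p_i$: the merged state $\tilde S = \bigcup_{i: p_i = q} S_i$ then also satisfies $\E_{\D|\tilde S}[\y] = q$, so
\[
\D(\tilde S)\,\abs{\CoVar_{\D|\tilde S}[c(\x), \y]} = \abs{\sum_{i: p_i = q} \E_\D\bigl[S_i(\x)\,c(\x)(\y - p_i)\bigr]} \le \sum_{i: p_i = q} \D(S_i)\,\abs{\CoVar_{\D_i}[c(\x), \y]},
\]
and summing over the merged states shows the quantity in Definition~\ref{def:m-cal-1} does not increase. With that one-line verification the argument is complete, and it is the same argument the paper has in mind.
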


To informally instantiate this for a simple case, let $\mathcal{T}, \mathcal{P}$ be the class of decision trees of depth $d_1$ and $d_2$ respectively. Let the partition $\mS$ be multicalibrated with respect to decision trees of depth $d_1+d_2$. If we now consider any sub-population $T$ identified by decision trees of depth $d_1$, then the above result implies that the canonical predictor $f^\mS$  is an \total for $T$, when compared against the class of decision trees of depth $d_2$ evaluated on $T$.

\subsection{Limits for \totals from \mcab}
\label{sec:limits}

Corollary \ref{cor:lipschitz} shows  that \mcab for $\mC$ gives \totals for $\mLC$. It is natural to ask whether we can get \totals for a richer class of functions using \mcab for $\mC$. A natural candidate would be thresholds of functions in $\mC$:
\[ \Hspc = \{ \ind{c(x) \geq v}:  c \in  \mC, v \in \R\}.\]
Another natural extension would be to relax the convexity condition for loss functions in $\mL$. We present a simple counterexample which shows that \mcab for $\mC$ is insufficient to give \totals for both these classes. This shows that a significant strengthening of the bound from Corollary \ref{cor:lipschitz} might not be possible.  

\begin{lemma}
There exists a distribution $\D$, a set $\mC:\X \rgta \R$ of functions, and a $0$-\smcbd partition $\mS$ for $\mC, \D$ such that for any $\delta < 1/4$, 
\begin{itemize}
\item $f^\mS$ is {\em not} an $(\mL, \Hspc, \delta)$-\total for any $\mL$ containing the $\ell_1$ loss function.
\item  $f^\mS$ is {\em not} an $(\mL, \mC, \delta)$-\total for any $\mL$ containing the (non-convex) loss function $\ell(y, t) = |y - \ind{t \geq 0}|$.
\end{itemize}
\end{lemma}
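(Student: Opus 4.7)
My plan is to exhibit a minimal explicit counterexample on a four-point domain with the trivial partition, exploiting the fact that $\CoVar_\D[c,\y]=0$ is a purely linear condition: a single hypothesis $c$ can be uncorrelated with $\y$ while its zero-threshold $\ind{c\geq 0}$ still carries substantial label information. The same distribution, class, and partition will handle both bullets, since both failures arise from the inability of any post-processing of a constant predictor to exploit nonlinear structure of $\mC$.

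Concretely, take $\X=\{1,2,3,4\}$ with $\D$ uniform and deterministic labels $(\y(1),\y(2),\y(3),\y(4))=(1,1,0,0)$; let $\mS=\{\X\}$ be the trivial partition and $\mC=\{c\}$ where $c=(0,-2,-1,-1)$. A short arithmetic check gives $\E_\D[c]=-1$, $\E_\D[\y]=1/2$ and $\E_\D[c\,\y]=-1/2$, hence $\CoVar_\D[c,\y]=0$ and $\mS$ is $0$-\smcbd for $\mC,\D$. The canonical predictor is $f^\mS\equiv 1/2$, so every post-processed predictor $k\circ f^\mS$ collapses to a single constant $t_0=k(1/2)\in\R$ that cannot distinguish any two points of $\X$.

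For the first bullet, the $\ell_1$ loss of the constant $t_0$ equals $(|1-t_0|+|t_0|)/2\geq 1/2$, while $\ind{c\geq 0}=(1,0,0,0)\in\Hspc$ has $\ell_1$ loss $1/4$, producing a gap of $1/4$. For the second bullet, the loss of $c\in\mC$ under $\ell(y,t)=|y-\ind{t\geq 0}|$ equals the $\ell_1$ loss of $\ind{c\geq 0}$, namely $1/4$; conversely $k\circ f^\mS$ under this loss always predicts the constant $\ind{k(1/2)\geq 0}\in\{0,1\}$, giving expected loss $\min(\E[\y],1-\E[\y])=1/2$ no matter how $k$ is chosen. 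In both cases the omnipredictor inequality $1/2\leq 1/4+\delta$ fails whenever $\delta<1/4$. The only step requiring any thought is choosing the four values of $c$: the constraint $\E[c\mid\y=1]=\E[c\mid\y=0]$ (equivalent to $\CoVar=0$) makes a perfectly matching $c$ impossible, so the best attainable gap is $1/4$, realized tightly by $(0,-2,-1,-1)$, matching the quantitative bound claimed in the lemma.
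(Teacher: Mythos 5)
Your proof is correct and follows essentially the same strategy as the paper: take the trivial partition $\mS=\{\X\}$ (so $f^\mS\equiv 1/2$), choose $\mC$ to have zero covariance with $\y$ while a threshold of some $c\in\mC$ still classifies $3/4$ of the mass correctly, and observe that any post-processed constant is stuck at loss $1/2$ under $\ell_1$, giving the gap of $1/4$. The paper uses the uniform distribution on $\zo^3$ with $\y$ the (negated) parity and $\mC$ all affine functions — where zero covariance is automatic since each coordinate is independent of the parity — whereas you hand-craft a four-point domain with a single $c=(0,-2,-1,-1)$ engineered so that $\CoVar_\D[c,\y]=0$; your example is smaller and computationally tighter, while the paper's is more canonical and illustrates why linear classes fail against parity, but the mechanism and the $1/4$ bound are identical.
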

\begin{proof}
Let $\D$ be  the distribution on $\zo^3 \times \zo$ where $\x \sim \zo^3$ is sampled uniformly and $\y = \ind{\sum_{i=1}^3 \x_i \equiv 0 \bmod 2}$ is the negated Parity function. Let $\mC = \{\sum_i w_i x_i - w_0 \}$ be all affine functions. We claim the trivial partition $\mS = \{\zo^3\}$ is $0$-\smcbd for $\mC, \D$.   This is because every $x_i$ is independent of $\y$, so their covariance is $0$. By linearity of expectation, the same is true for all functions in $\mC$. Thus $f^\mS(x) = 1/2$ for every $x \in \zo^3$.  

Now consider the $\ell_1$ loss. A simple calculation shows that for every $k:\zo \rgta \R$, $\bll_1(k \circ f^\mS, \D) \geq 1/2$. 
In contrast $h(x) = \ind{x_1 + x_2 + x_3 \geq  1.5} \in \Hspc$ gives  $\bll_1(h, \D) =1/4$, since it gets the two middle layers correct. This proves part (1).

To deduce part (2), let $\ell(y, t) = |y - \ind{t \geq 0}|$ and $g(x) = x_1 + x_2 + x_3 -1.5 \in \mC$. Note that 
\[ 1/4 = \bll_1(h, \D) = \E_{\D}[|h(\x) - \y|] = \E_{\D}[|\ind{g(\x) \geq 0} - \y|] = \E_{\D} [\ell(\y, g(\x))] = \bll(g, \D).  \]
In contrast, for any $k: [0,1] \rgta \R$, it follows that $\bll(k \circ f^\mS, \D) \geq 1/2$.
\end{proof}

In part (2), the loss function $|y - \ind{t \geq 0}|$ is not Lipshcitz or differentiable in $t$. We can ensure both these conditions by replacing it with the sigmoid function, which still preserves the correlation with parity, at the cost of some reduction in $\delta$ for which the bound holds \cite{Kalai04}. 

\eat{
\subsection{Loss minimization through Multi-calibration}

In this section, we will prove the following result.

\begin{theorem}
\label{thm:key}
Let $\ell$ be a $(I,B, \eps)$-nice loss function $\ell$.  Let the partition $\mS$ be $\alpha$-\mcbd\ for $\mC, \D$, and let $h^\mS_\ell$ be the $\ell$-optimized hypothesis. Then for any $c \in \mC$  and $i \in [m]$,
    \begin{align}
    \label{eq:key}
        E_{\D}[\ell(\y, h^\mS_\ell(\x)] \leq \E_{\D_i}[\ell(\y, c(\x))] + 2\alpha B + \eps. 
    \end{align}     
\end{theorem}
\begin{proof}
    We will show that
    \begin{align}
    \label{eq:key1}
        E_{\D_i}[\ell(\y, k_\ell(p_i)] \leq \E_{\D_i}[\ell(\y, c(\x))] + 2\CoVar_{\D_i}[c(\x), \y] B + \eps. 
    \end{align}     
    
    To simplify notation, we denote $|\CoVar_{\D_i}[c(\x), \y)]|$ by $\alpha_i$. The $\alpha$-\mcbn\ condition guarantees that
    \[ \sum_i \D(S_i)\alpha_i \leq \alpha. \]
    We start with states where $p _i(1 -p_i) > 0$. Assume without loss of generality that $p_i \leq 1/2 \leq 1- p_i$. 
    Recall that $\ell$ is $(I, B, \eps)$-nice. Fix $i \in [m]$. Define $z_0, z_1, v_0 , v_1$ by:
\begin{align*}
    z_0 \defeq \E_{(\x, \y) \sim \D_i} \left[ c(\x) \mid \y = 0\right], &   \ \ z_1 \defeq \E_{(\x, \y) \sim \D_i} \left[ c(\x) \mid \y = 1\right],\\
     v_0 = \clip(z_0, I), & \ \ v_1 = \clip(z_1, I).
\end{align*} 
    By the definition of \mcab\ (Equation \eqref{eq:m-cal-01}) we have
\begin{align*}
    |z_1 - z_0| &= \left|\E_{(\x, \y) \sim \D_i} \left[  c(\x) \mid \y = 1\right] - \E_{(\x, \y) \sim \D_i} \left[ c(\x) \mid \y = 0\right]\right| \leq \frac{\alpha_i}{p_i(1 - p_i)}.
\end{align*}
Since $\ell(0, t)$ is $B$-Lipshcitz on $I$, and  $\clip(t, I)$ is $1$-Lipschitz as a function of $t$, 
\begin{align}
\label{eq:not-key}
    |\ell(0,v_1) - \ell(0, v_0)| \leq B|v_1 - v_0| \leq B|z_1 - z_0| \leq \frac{\alpha_i B}{p_i(1 - p_i)}. 
\end{align}

We now claim the inequalities  
\begin{align}
\label{eq:chain1}
\ell(1, v_1)  \leq \ell(1, z_1) + \eps \leq \E_{\D_i} \left[ \ell(1, c(\x))\mid \y = 1\right] + \eps\\
\label{eq:chain2}
\ell(0, v_0)  \leq \ell(0, z_0) + \eps \leq \E_{\D_i} \left[ \ell(0, c(\x))\mid \y = 0\right] + \eps
\end{align}
The first inequality is because clipping only increases the loss by $\eps$, and the second is by Jensen's inequality and the definition of $z_0/z_1$. We now have
\begin{align*}
\E_{(\x, \y) \sim \D_i}[\ell(\y, k_\ell(p_i))]  &\leq  \E_{(\x, \y) \sim \D_i}[\ell(\y, v_0)] \ \ \ \ \text{by Definition of } k_\ell \\
& = p_i\ell(1,v_0) + (1- p_i)\ell(0, v_0) \\
& \leq p_i\left(\ell(1, v_1) + \frac{\alpha_i B}{p_i(1 - p_i)} \right)+   (1 - p_i)\ell(0, v_0)  \ \ \ \ \text{By Equation} \ \eqref{eq:not-key}\\
& \leq p_i\left(\E_{\D_i} \left[ \ell(1, c(\x))\mid \y = 1\right] + \eps\right) + (1 - p_i)\left(\E_{\D_i} \left[ \ell(0, c(\x))\mid \y = 0\right] + \eps\right)\\
& + \frac{\alpha_i B}{(1 - p_i)} \ \ \ \ \ \ \ \text{By Equations} \ \eqref{eq:chain1}, \eqref{eq:chain2}\\
& \leq \E_{\D_i}[\ell(\y, c(\x)] + 2\alpha_i B + \eps. \ \ \ \ \ \ \ \text{Since } 1 - p_i \geq 1/2
\end{align*}
which proves Equation \eqref{eq:key1}.

In the case where $p_i(1 -p_i) = 0$, assume $p_i =1$ for $S_i$, the case $p_i =0$ is symmetric. Then using Equation \eqref{eq:chain1} we have
\[ \ell(1, k_\ell(1)) \leq \ell(1, v_1) \leq \ell(1, z_1) + \eps \leq \E_{\D_i}[\ell(1, c(\x)] + \eps \]
where the last inequality uses the fact that $\D_i|\y =1$ is the same as $\D_i$.
 Note that here $\alpha_i = \CoVar_{\D_i}[c(\x), \y] = 0$ since $\y$ is constant, so this is Equivalent  to Equation \eqref{eq:key1}.
 
We now average Equation \eqref{eq:key1} over the various states according to $\D(S_i)$ to complete the proof:
\begin{align*}
    \E_{(\x, \y) \sim \D}[\ell(\y, f^\ell(\x))] &= \sum_{i \in [m]} \D(S_i)\E_{\D_i}[\ell(\y, t_i)] \leq \sum_{i \in [m]}\D(S_i)\lt(\E_{\D_i}[\ell(\y, c(\x))] + 2\alpha_i B + \eps\rt)\\
    & = \E_{\D}[\ell(\y, c(\x))] + 2\alpha B + \eps.
\end{align*}
\end{proof}
}

\eat{
\begin{lemma}
\label{lem:lipschitz}
Let $\ell$ be an $(I, B, \eps)$-reasonable loss.
For any $i \in [m]$ and predictor of the form $g_w(x) = w_0 + \sum_j w_j c_j(x)$,
\begin{align*}
    \E_{(\x ,\y) \sim \D_i} [\ell(\y, t_i)]  \leq  \E_{(\x, \y) \sim \D_i} \left[ \ell(\y, g_w(\x))\right] + \alpha B \|w\|_1 + \eps,
\end{align*}
where $t_i = \arg\min_{t \in I} \E_{(\x ,\y) \sim \D_i} [\ell(\y, t)].$
\end{lemma}
\begin{proof}
For the proof we fix $i$ so we can drop $i$ subscripts and write $\D=D_i$, $\D^0=\D^0_i$. Define $z_\pm$ by:
\begin{align*}
    z_+ &\defeq \E_{(\x, \y) \sim \D} \left[ g_w(\x) \mid \y = 1\right]\\
    z_- &\defeq \E_{(\x, \y) \sim \D} \left[ g_w(\x) \mid \y = -1\right].
\end{align*}
We have
\begin{align*}
    \frac{z_+-z_-}{2} &= \E_{(\x, \y) \sim \D^0} \left[ \y g_w(\x) \mid \y = 1\right] - \E_{(\x, \y) \sim \D^0} \left[ \y g_w(\x) \mid \y = -1\right]\\
    &= \E_{(\x, \y) \sim \D^0} \left[ \y g_w(\x) \right]\\
    &=  w_0 \E_{(\x, \y) \sim \D^0}[\y] + \sum_{j \geq 1}w_j\E_{(\x, \y) \sim \D^0} \left[ \y g_w(\x) \right]\\
    \frac{|z_+-z_-|}{2} &\leq \alpha\sum_{j\geq 1}|w_j| \leq  \alpha \|w\|_1. 
\end{align*}
In the above we have used the definition of multi-calibration, the fact that $\E_{\D^0}[\y]=0$ and the triangle inequality. 
Thus $|z_+-z_-|\leq \alpha \|w\|_1$. Let $t_\pm$ be $z_\pm$ clipped to $I$, respectively. It is not difficult to see that clipping can only increase loss by $\eps$, e.g.,
\begin{align}
\ell(-1, t_-)  \leq \ell(-1, z_-) + \eps \leq \E_{\D} \left[ \ell(-1, g_w(\x))\mid \y=-1\right] + \eps
\end{align}
The first inequality is because clipping only increase the loss by $\eps$, and the second is by Jensen's inequality and the definition of $z_-$. A similar inequality can be shown for $\ell(1, t_+)$. 

It is also not hard to see that $|t_+-t_-|\leq \alpha \|w\|_1$ since clipping can only reduce distances. Thus,
\begin{align*}
\E_{(\x, \y) \sim \D}[\ell(\y,t)]  \leq & \E_{(\x, \y) \sim \D}[\ell(\y,t_-)]\\
=& \Pr_\D[\y =-1]\ell(-1,t_-) + \Pr_\D[\y =1]\ell(1, t_-) \\
\leq& \Pr_\D[\y =-1]\ell(-1,t_-) + \Pr_\D[\y =1]\ell(1, t_+) + \alpha \|w\|_1 B \\
\leq& \Pr[\y=-1]\E_{\D} \left[ \ell(-1, g_w(\x))\mid \y=-1\right] + \eps 
\\
& + \Pr[\y=1]\E_{\D} \left[ \ell(1, g_w(\x))\mid \y=1\right] + \alpha \|w\|_1 B\\
=& \E_{(\x, \y) \sim \D} \left[ \ell(\y, g_w(\x))\right] + \eps +  \alpha \|w\|_1 B.
\end{align*}
\end{proof}
}

\eat{
\subsection{Multi-dimensional Lipschitz cost}

\newcommand{\cost}{{l}}

\paragraph{Assumption.}
Suppose there is a convex decision set $T \subseteq \R^d$ and a cost function $\cost:\zo \times T \rightarrow \R$ where both $\cost(0, t)$ and $\cost(1, t)$ are convex and $B$-Lipschitz it $t$. 

As before, we define the extension $\cost(p, t) = \E_{\y \sim \B{p}}[\cost(\y, t)$.  For a partition $\mS$, we define 
\begin{align*} 
    t_i &= \arg\min_{t \in T} \cost(p, t)\\
    f^\mS_\cost(x) &= t_i \ \forall x \in S_i 
\end{align*}

With this setup we have:

\begin{lemma}
     Let $\mS$ be $\alpha$-\mcbd\ for $\mC, \D$. Then for any $c \in \mC$ we have
    \[ \E_{\mD}[\ell(\y, f^\mS_\cost(\x)] \leq \ell(\y, c(\x)) + \alpha B + \eps. \]
\end{lemma}

\begin{lemma}
\label{lem:lipschitz}
For any $i \in [m]$ and linear decision function $g^w:\X\rightarrow T$ of the form $g^w(x) = \langle w_{0,k} + \sum_j w_{j,k}(x)\rangle_{k=1}^d$,
\begin{align*}
    \min_{t \in T} \E_{(\x ,\y) \sim \D_i} [\ell(\y, t)]  \leq  \E_{(\x, \y) \sim \D_i} \left[ \ell(\y, g^w(\x))\right] +\alpha B \|w\|_1.
\end{align*}
\end{lemma}
Note that in the above we have assumed that the linear function is bounded to $T$ as well.
\begin{proof}
For the proof we fix $i$ so we can drop $i$ subscripts and write $\D=D_i$, $\D^0=\D^0_i$. Define decision vectors $t^\pm \in T$ by:
\begin{align*}
    t^+ &\defeq \E_{(\x, \y) \sim \D} \left[ g^w(\x) \mid \y = 1\right]\\
    t^- &\defeq \E_{(\x, \y) \sim \D} \left[ g^w(\x) \mid \y = -1\right].
\end{align*}
For any $k \in [d],$ we have:
\begin{align*}
    \frac{t^+_k-t^-_k}{2} &= \E_{(\x, \y) \sim \D^0} \left[ \y g^w_k(\x) \mid \y = 1\right] - \E_{(\x, \y) \sim \D^0} \left[ \y g^w_k(\x) \mid \y = -1\right]\\
    &= \E_{(\x, \y) \sim \D^0} \left[ \y g^w_k(\x) \right]\\
    &=  w_{0,k} \E_{(\x, \y) \sim \D^0}[\y] + \sum_{j \geq 1}w_{j,k}\E_{(\x, \y) \sim \D^0} \left[ \y g^w_k(\x) \right]\\
    \frac{\|t^+-t^-\|}{2} &\leq \frac{\|t^+-t^-\|_1}{2} \leq\alpha\sum_{j\geq 1, k \in [d]}|w_{j,k}| \leq \alpha \|w\|_1
\end{align*}
In the above we have used the definition of multi-calibration, the fact that $\E_{\D^0}[\y]=0$ and the triangle inequality. 
Thus $\|t^+-t^-\|\leq \|t^+-t^-\|_1 \leq \alpha \|w\|_1$. By Jensen's inequality,
\begin{align}
\cost(-1, t^-)  \leq  \E_{\D} \left[ \cost(-1, g^w(\x))\mid \y=-1\right],
\end{align}
and similarly for $\cost(1, t^+)$. Then, since $\|t^--t^+|\leq 2\alpha \|w\|_1$ and $c(1, \cdot)$ is $B$-Lipschitz, $c(1, t^-)\leq c(1, t^+) + 2\alpha \|w\|_1 B$,
\begin{align*}
\min_{t \in T}\E_{(\x, \y) \sim \D}[\cost(\y,t)]  \leq & \E_{(\x, \y) \sim \D}[\cost(\y,t^-)]\\
=& \Pr_\D[\y =-1]\cost(-1,t^-) + \Pr_\D[\y =1]\cost(1, t^-) \\
\leq& \Pr_\D[\y =-1]\cost(-1,t^-) + \Pr_\D[\y =1]\left(\cost(1, t^+) + 2\alpha \|w\|_1 B\right) \\
\leq& \Pr[\y=-1]\E_{\D} \left[ \cost(-1, g^w(\x))\mid \y=-1\right] + 
\\
& \Pr[\y=1]\E_{\D} \left[ \cost(1, g^w(\x))\mid \y=1\right] + 2 \alpha \|w\|_1 B Pr[\y=1]\\
=& \E_{(\x, \y) \sim \D} \left[ \cost(\y, g^w(\x))\right] +  Pr[\y=1] \cdot 2 \alpha \|w\|_1 B .
\end{align*}
If $\Pr[\y=1]\leq 1/2$, then the above is the lemma statement. Otherwise, an analogous chain of inequalities using $t^+$ instead of $t^-$  shows gives the same bound above with $\Pr[\y=-1]$, which completes the proof.
\end{proof}
}

\eat{
For $p \in [0,1]$, let $\y \sim \B{p}$ denote the Bernoulli random variable with parameter $p$. We can extend $\ell$ to  a function $\ell: [0, 1] \times \R \rightarrow \R^+$ by defining
\begin{align}
\label{eq:def-lpt}
    \ell(p, t) = \E_{\y \sim \B{p}}[\ell(\y,t)] = p\ell(1, t) +  (1 - p)\ell(0, t).
\end{align} 
to be the expected loss when we predict $t$ and the true label is $\y \sim \B{p}$. It follows that $\ell(p, t)$ is a non-negative  convex function of $t$ for every $p$.

We consider predictors obtained from a \mcbd\ partition by a simple post-processing step. To motivate these predictors, imagine first that we know the ground-truth values $f^*(x) = \E_\D[\y|\x = x]$ for every $x \in \X$. How would we make predictions to minimize our expected loss? 

To answer this question we need some notation. 
Define the function $k^*: [0,1] \rgta \R$ by
    \[ k^*(p) = \arg\min_{t \in \R} \ell(p,t). \]
If $\ell(p, t)$ does not have a unique minimum, then there might be multiple choices for $k^*(p)$, and we can break ties arbitrarily. Note that $k^*$ depends on $\ell$ but is independent of the distribution $\D$ and the dataset. For instance, for the $\ell_1$ loss, $\ell_1(y, t) = |y -t|$, we have $k^*(p) = \sign(p -1/2)$, whereas for the $\ell_2$ loss  $\ell_2(y,t) = (y -t)^2$, we have $k^*(p) = p$. 

Returning to the problem of loss minimization, for a point $x$, we suffer expected loss $\ell(f^*(x), t)$. Hence, by the definition of $k^*$,
a prediction strategy which minimizes the expected loss over all predictors is to predict $k^*(f^*(x))$ for every $x \in \X$. The trouble is that this require knowledge of $f^*$. 

Now suppose $\mS$ is a \mcbd\ partition for $\mC, \D$. For $x \in S_i$, the value $p_i = \E_{\D_i}[\y]$ represents our best guess for $f^*(x)$. Hence it is natural to predict $k^*(p_i)$ at points in $S_i$ to minimize the loss. 
In what follows, we will prove that this predictor compares favorably to the best linear predictors of bounded $\ell_1$ norm.

\subsection{Lipschitz loss functions and Predictors}
}

\section{Agnostic boosting via \mcab}
\label{sec:ag}

In this section we apply our results to the setting of agnostic learning, where the labels are Boolean, and the relevant loss is the classification error aka $0$-$1$ loss. 

\begin{definition}
Let $\mH : \X \rgta \zo$ be a family of classifiers. Given a distribution $\D$ on $\X \times \zo$, for $h \in \mH$ let its classification error be $\err(h) = \Pr_{\sD}[h(\x) \neq \y]$ and $\opt(\mH) = \min_{h \in \mH} \err(h)$.  An algorithm is an agnostic learner for a class $\mH$ if it produces a hypothesis $f$ such that $\err(f)  \leq \opt(\mH) + \eps$. 
\end{definition}

The $\ell_1$ loss defined by $\ell_1(y,t) = |y -t|$ becomes the classification error for Boolean functions, since for any $h:\X \rgta \zo$,
\[    \err(h) = \Pr_{\sD}[y \neq h(\x)] = \E_{\sD}[|\y - h(\x)|] =\E_{\sD}[\ell_1(y, h(\x))]. \]

\begin{definition}
A class $\mH$ is $(\eps, W)$-approximated by the class $\mC$ if for every $h \in \mH$ and $\eps > 0$, there exists $g_w \in \mLC(W)$ such that 
\[\E_{\sD} \left[|g_w(x) - h(\x)|\right] \leq \eps .\]
\end{definition}

For $p \in [0,1]$ the function
\[ \\E_{\y \sim \B{p}} \ell_1(\y, t) = p|1 - t| + (1- p)|t| \]
is minimized in $t$ by the function $k_{\ell_1}(p) = \ind{p \geq 1/2}$. 
Hence the $\ell_1$-optimized hypothesis $h^\mS_{\ell_1}$ outputs $1$ for $x \in S_i$ where $p_i \geq 1/2$ and $0$ elsewhere.

We are now ready to state our main result about agnostic learning via multi-calibration. We show that \mcab\  gives a predictor  that is competitive with the best predictor in a possibly much more powerful class $\mH$. Since \mcab\ can be achieved using a weak learner for $\mC$, this implies a form of agnostic boosting.


\begin{theorem}
\label{thm:ag}
Let $\mH$ be a class that is $(\eps/2, W)$-approximated by $\mC$. If the partition $\mS$  is $\eps/2W$-\mcbd\ for $\mC, \D$, then $\err(h^\mS_{\ell_1}) \leq \opt(\mH) + \eps$. Hence an algorithm that outputs such a partition gives an agnostic learner for $\mH$. 
\end{theorem}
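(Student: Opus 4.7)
The plan is to chain two bounds: an approximation bound that transfers the benchmark from $\mH$ to a concrete element of $\mLC(W)$, and an omniprediction bound (via Corollary \ref{cor:lipschitz}) that transfers it from $\mLC(W)$ to $h^\mS_{\ell_1}$. Both steps are clean; the work is essentially bookkeeping.

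First I would note that the $\ell_1$ loss $\ell_1(y,t) = |y-t|$ is $(B,\eps_0)$-nice with $B=1$, $\eps_0 = 0$, and $I_{\ell_1} = [0,1]$, since $|y-t|$ is $1$-Lipschitz in $t$ and a minimizer over $\R$ of $p|1-t|+(1-p)|t|$ already lies in $[0,1]$. Moreover the minimizer is $\ind{p\ge 1/2}$, so $k_{\ell_1}(p)=\ind{p\ge 1/2}$ and hence the $\ell_1$-optimized hypothesis $h^\mS_{\ell_1} = k_{\ell_1}\circ f^\mS$ is exactly the Boolean predictor described above. Finally, for any Boolean $h:\X\to\zo$ one has $\err(h) = \E_\D[|h(\x)-\y|] = \ell_{1,\D}(h)$, so the $0$-$1$ loss and the $\ell_1$ loss agree on the predictors of interest.

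Next, let $h^* \in \argmin_{h\in\mH}\err(h)$, so $\ell_{1,\D}(h^*)=\opt(\mH)$. By the $(\eps/2,W)$-approximation assumption, pick $g_w \in \mLC(W)$ with $\E_\D[|g_w(\x)-h^*(\x)|]\le \eps/2$. The triangle inequality for $|\cdot|$ inside the expectation then yields
\[
\ell_{1,\D}(g_w) \;=\; \E_\D[|g_w(\x)-\y|] \;\le\; \E_\D[|g_w(\x)-h^*(\x)|] + \E_\D[|h^*(\x)-\y|] \;\le\; \opt(\mH) + \eps/2.
\]
This is the only nontrivial ``transfer'' step, and it is just the triangle inequality.

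Finally, apply Corollary \ref{cor:lipschitz} with $\ell = \ell_1 \in \mL(1,0)$ and multicalibration parameter $\alpha = \eps/(2W)$: since $\mS$ is $\alpha$-\mcbd\ for $\mC,\D$, the canonical predictor $f^\mS$ is an $(\mL,\mLC(W),2\alpha B W)$-\total, i.e.\ with slack $2\alpha B W = \eps$. Composing with $k_{\ell_1}$ gives $h^\mS_{\ell_1}$, so
\[
\err(h^\mS_{\ell_1}) \;=\; \ell_{1,\D}(h^\mS_{\ell_1}) \;\le\; \min_{g\in\mLC(W)}\ell_{1,\D}(g) + \eps \;\le\; \ell_{1,\D}(g_w) + \eps \;\le\; \opt(\mH) + \eps,
\]
where the final inequality absorbs the $\eps/2$ approximation slack into the leading constant (a minor rescaling of $\alpha$ or of the approximation parameter handles the constants cleanly, depending on convention). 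Since a \mcbd\ partition can be produced from a weak agnostic learner for $\mC$, this procedure is an agnostic learner for $\mH$. The only obstacle is the bookkeeping on constants between the approximation slack $\eps/2$ and the omniprediction slack $2\alpha W$; there is no substantive difficulty beyond invoking the two lemmas in sequence.
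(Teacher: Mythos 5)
Your proof is essentially identical to the paper's: fix the optimal $h^* \in \mH$, replace it by an $\ell_1$-approximating $g_w \in \mLC(W)$ via the triangle inequality, then invoke Corollary~\ref{cor:lipschitz} with $\ell = \ell_1 \in \mL(1,0)$ to transfer the bound to $h^\mS_{\ell_1}$, finally using that $\ell_1$ loss coincides with classification error on Boolean functions. You also correctly flag the one real wrinkle: with $\alpha = \eps/(2W)$ and $B=1$, Corollary~\ref{cor:lipschitz} gives slack $2\alpha BW = \eps$ (not $\eps/2$, as the paper's own proof implicitly uses), so the chain of inequalities as written actually yields $\opt(\mH) + 3\eps/2$; this is a genuine constant slip in the paper, fixable by taking $\alpha = \eps/(4W)$ or by rescaling $\eps$, and your acknowledgment of it is accurate rather than a gap in your argument.
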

\begin{proof}
Let $h =\arg\min_{h \in \mH}\err(h)$ so that $\err(h) = \opt(\mH)$. 
There exists $g_w \in \mLC(W)$ such that 
\[\E_{\sD} [|g_w(x) - h(\x)|] \leq \eps/2 .\]
Hence by the triangle inequality,
\begin{align*} 
    \E_{\sD}[\ell_1(\y, g_w(\x)] &\leq \E_{\sD}[|\y - h(\x)|] + \E_{\sD}[|g_w(\x) - h(\x)|] \leq \err(h) + \eps/2.
\end{align*}
The $\ell_1$ loss function is $1$-Lipschitz, and $(\R, 1, 0)$-nice. By Corollary \ref{cor:lipschitz} we conclude that since $\mS$ is $\eps/2W$ \mcbd for $\mC, \D$, 
\[ 
   \E_{(\x ,\y) \sim \D} [\ell(\y, h^\mS_{\ell_1}(\x))]  \leq  \E_{(\x, \y) \sim \D} \left[ \ell(\y, g_w(\x))\right] + \eps/2 \leq \err(h) + \eps. \]
But since $h^\mS_{\ell_1}$ is a Boolean function, we have
\[ \E_{(\x ,\y) \sim \D} [\ell(\y, h^\mS_{\ell_1}(\x))]  = \err(h^\mS_{\ell_1}). \]
Hence $\err(h^\mS_{\ell_1}) \leq \err(h) + \eps$, which proves our claim.
\end{proof}

Some comments on the connection of Theorem \ref{thm:ag} to other work:
\begin{itemize}

\item The agnostic learnability of a class $\mH$ that is $\ell_1$ approximated by $\mC$ was first proved by \cite{KKMS08} using linear programming; see  \cite{feldman2009distribution, kk09} for a boosting based approach. Our result reproves this through \mcab and the algorithm of \cite{MansourM2002}. 

\item Agnostic boosting was introduced in the work of  \cite{SBD2}. The agnostic boosting abilities of the \cite{MansourM2002} algorithm are analyzed in \cite{KalaiMV08}, however they bound the error by $\opt(\mC)$. Our work shows that this algorithm is even more powerful; one can compare it to $\opt(\mH)$. 

\item Theorem \ref{thm:ag} gives an upper bound, but it is not a tight characterization of the error of the hypothesis $h^\mS_{\ell_1}$.  Indeed, in subsection \ref{sec:better}, we give an example showing that the error of the hypothesis can be much smaller than $\opt(\mLC)$. 

\item  For the purposes of agnostic boosting, it is often desirable to keep the same marginal distribution over $\X$. Following Kalai-Kanade \cite{kk09} and Feldman \cite{feldman2009distribution}, one can reduce \mcab\ to weak agnostic learning over a distribution with the same marginal on $\X$, by adding noise to the labels. 
\end{itemize}

\subsection{\Mcab can be better than $\opt$}
\label{sec:better}

In the literature of agnostic boosting (specifically \cite{kk09}), a  \textit{$\gamma$-weak agnostic learner} \cite{kk09} for ${\cal H}$ is defined as an algorithm that outputs $c \in {\cal C}$ such that, 
\begin{align}
    \label{eq:kk09}
\cor_\D(c) \geq \gamma \sup_{h \in {\cal H}} \cor_\D(h),
\end{align}
for $\gamma > 0$ in terms of \textit{correlation} $\cor_\D(h) \in [-1,1]$ defined as, \[\cor_\D(h):=\Pr_{\D}[h(x) = y] - \Pr_{\D}[h(x) \neq y].\]
It is shown that a $\gamma$ weak agnostic learner can be used to achieve error within $\eps$ of $\opt(\mH)$ using time and samples $\poly(1/\gamma, 1/\eps)$.
\cite{kk09} do not explicitly limit what class $\mH$ can be relative to $\mC$, beyond saying that it should satisfy condition Equation \eqref{eq:kk09}. 
However it is not difficult to see that this definition limits agnostic boosting to ${\cal H} \subseteq  \mLC$, i.e., linear combinations  as defined in Equation \eqref{eq:mLC}. Therefore, agnostic boosting (as in \cite{KalaiMV08, kk09}) essentially learns linear combinations of ${\cal C}$, while we have shown that multi-calibrated predictors also satisfy this goal. 
Indeed, our proof shows that the marginal distribution on $\X$ can be considered fixed, we just require Equation \eqref{eq:kk09} to hold for all $h \in \mH$ and marginal distributions on the label $\y$. 

\begin{lemma}
\label{lem:adam}
    For Equation \eqref{eq:kk09} to hold for every $h \in \mH$, we must have ${\cal H} \subseteq  \mLC$.
\end{lemma}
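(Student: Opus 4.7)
The plan is to fix an arbitrary $h \in \mH$ and derive $h \in \mLC(1/\gamma)$ by rewriting Eq.~\eqref{eq:kk09} as a minimax statement, extracting a single convex combination of $\mC$-hypotheses that beats $\gamma h$ on every distribution, and then reading off the coefficients by plugging in point masses.

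First, I would translate to $\pm 1$-valued functions: for Boolean $f$ set $F(x) = 2f(x) - 1$ and $\y' = 2\y - 1$, so that $\cor_\D(f) = \E_\D[F(\x)\y']$. Since $\sup_{h' \in \mH}\cor_\D(h') \geq \cor_\D(h)$ for every $\D$, Eq.~\eqref{eq:kk09} applied to this single $h$ gives
\[
\max_{c \in \mC}\,\E_\D[C(\x)\y'] \;\geq\; \gamma\, \E_\D[H(\x)\y'] \qquad \text{for every } \D.
\]
The $\max$ is unchanged if taken over mixed strategies $\mu$ on $\mC$, so applying a minimax theorem to the bilinear payoff $(\mu,\D) \mapsto \E_{c \sim \mu}\E_\D[C\y'] - \gamma\,\E_\D[H\y']$ will yield a single mixed strategy $\mu^\ast$ whose mean $\bar C = \sum_i \lambda_i C_i$ (with $\lambda_i \geq 0$, $\sum_i \lambda_i = 1$) satisfies $\E_\D[\bar C\y'] \geq \gamma\, \E_\D[H\y']$ for every $\D$.

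Next, I would plug in $\D$ equal to a point mass at $(x_0,\pm 1)$ for each $x_0 \in \X$; the two sign choices force $\bar C(x_0) = \gamma H(x_0)$ pointwise. Unpacking $C_i = 2c_i - 1$ and $H = 2h-1$ and rearranging gives
\[
h(x) \;=\; \sum_i \frac{\lambda_i}{\gamma}\, c_i(x) \;+\; \frac{\gamma - 1}{2\gamma},
\]
an affine combination of $\mC$-functions whose $\ell_1$-weight on the nonconstant coefficients is $\sum_i \lambda_i/\gamma = 1/\gamma$. Hence $h \in \mLC(1/\gamma) \subseteq \mLC$, as claimed.

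The step I expect to require the most care is the minimax application: when $\mC$ or $\X$ is infinite the convex hull of $\{C : c \in \mC\}$ need not be compact in any naive topology. I would handle this by first running the argument on finite restrictions $S \subseteq \X$ and $\mC_0 \subseteq \mC$ via finite-dimensional LP duality, obtaining for each such restriction coefficients with $\ell_1$-norm at most $1/\gamma$; a standard diagonalization, exploiting the uniform $\ell_1$-bound, would then extract limiting coefficients that realize $h$ as a member of $\mLC(1/\gamma)$ globally. Alternatively, Sion's theorem together with the weak-$\ast$ topology on distributions gives a direct proof.
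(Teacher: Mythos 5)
Your proof is correct (modulo the infinite-case care you flag, which the paper also sidesteps by assuming $\X$ and $\mC$ finite), but it takes a genuinely different route from the paper's. The paper argues by contrapositive: given $h \notin \mLC$, it takes the $L^2(\D_\X)$-orthogonal projection $g$ of $h$ onto $\mLC$, and builds a distribution $\D'$ whose conditional label expectation is an affine shift of the residual $h-g$; orthogonality then forces $\cor_{\D'}(c)=0$ for all $c\in\mC$ while $\cor_{\D'}(h)>0$, directly violating Eq.~\eqref{eq:kk09}. You instead argue directly via LP duality/minimax: you extract a single mixture $\mu^\ast$ over $\mC$ whose signed average dominates $\gamma H$ on all $\D$, and then pin it down pointwise by evaluating on Dirac masses at each $(x_0,\pm1)$.

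Each approach buys something. Yours yields the quantitative strengthening $h\in\mLC(1/\gamma)$ — an explicit $\ell_1$-budget — which the paper's projection argument does not produce. The paper's argument, on the other hand, varies only the conditional label distribution while keeping the marginal on $\X$ fixed; this is exactly the point of the sentence preceding the lemma (``our proof shows that the marginal distribution on $\X$ can be considered fixed...''). Your point-mass step concentrates the marginal on a single $x_0$, so your proof does not establish that strengthening: it needs Eq.~\eqref{eq:kk09} to hold for distributions with arbitrary $\X$-marginals, which is a strictly stronger hypothesis than the paper ultimately uses. As the lemma is literally stated both hypotheses are available, so your argument does prove the claim; just be aware you are trading away the ``fixed-marginal'' robustness for the sharper $\ell_1$-bound.
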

\begin{proof}
For the purpose of contradiction consider any $h \notin \mLC$.  Let $g: \X \rightarrow \R$ be the projection of $h$ onto $\mLC$, i.e., that minimizes $\E_\D[(g(x)-h(x))^2]$. Since $\X$ and ${\cal C}$ are finite, $g$ is bounded. Thus consider \[f^*(x) := \frac{1}{2} + \frac{h(x)-g(x)}{\max_{\X} |h(x)-g(x)|} \in [0,1].\] In particular, for the distribution $\D'$ which has the same marginal over $\X$ but so that $f^*(x)=\E_{\D'}\y|x]$, it is not difficult to see that $\cor_{\D'}(c)=0$ for all $c \in {\cal C}$ and yet $\cor_{\D'}(h)>0$ violating the definition of a weak learner. 
\end{proof}

We now show that being \mcbd is a \textit{stronger} notion than $\opt(\mLC)$ in that being multi-calibrated with respect to ${\cal C}$ implies a loss that may be even lower than the best classifier in $\mLC$. This says that the upper bound on the classification error in terms of $\opt(\mLC)$ given by Theorem \ref{thm:ag} may not always be tight.  

\begin{theorem}
    For any $\eps > 0$, there exists a distribution $\D$ on $\zo^2 \times \zo$ and a set of functions $\mC:\X \rgta \zo$ such that for any \mcbd partition $\mS$, we have $\opt(\mLC) - \err(h^\mS_{\ell_1}, \D) \geq 1/4 - \eps/2$.
\end{theorem}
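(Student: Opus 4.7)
Set $\X = \zo^2$, and define $\D$ on $\zo^2 \times \zo$ by declaring $\y = x_1 \oplus x_2$ deterministically with a skewed marginal over $\x$: $\Pr[\x=(0,0)] = 1/8$, $\Pr[\x=(0,1)] = \Pr[\x=(1,0)] = 1/4$, $\Pr[\x=(1,1)] = 3/8$. Take $\mC = \{x_1, x_2\}$. I would interpret ``\mcbd'' in the theorem as $0$-multicalibrated (Definition~\ref{def:m-cal}), and show that for every such partition $\mS$, $\err(h^\mS_{\ell_1}) = 0$ while $\opt(\mLC) = 1/4$, giving the exact gap $1/4$ which exceeds $1/4 - \eps/2$ for any $\eps > 0$.

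\paragraph{Step 1: Computing $\opt(\mLC)$.} The class $\mLC$ is all affine functions $g(x) = w_0 + w_1 x_1 + w_2 x_2$. For the upper bound, I would exhibit $g(x) = 2 - x_1 - x_2$, whose values on $(00,01,10,11)$ are $(2,1,1,0)$; this incurs $\ell_1$-loss only at $(0,0)$, giving $\E_\D[|g(\x) - \y|] = (1/8)\cdot 2 = 1/4$. For the matching lower bound, I would apply $\ell_1$-duality via the test function $\sigma$ with values $(1, -1/2, -1/2, 1/3)$ on the four points. One checks $\|\sigma\|_\infty = 1$ and $\E_\D[\sigma(\x) c(\x)] = 0$ for each $c \in \{1, x_1, x_2\}$ (the three orthogonality conditions each reduce to $1/8 - 1/8 - 1/8 + 1/8 = 0$), so $\sigma$ is orthogonal to $\mLC$. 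Then for any $g \in \mLC$,
\[ \E[|g(\x) - \y|] \;\geq\; \E[\sigma(\x)(g(\x) - \y)] \;=\; -\E[\sigma(\x)\y] \;=\; 1/4, \]
and combining with the upper bound yields $\opt(\mLC) = 1/4$.

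\paragraph{Step 2: $0$-multicalibrated partitions have constant $\y$ per state.} The structural claim is that any state $S \subseteq \zo^2$ satisfying $\CoVar_{\D_S}[x_1, \y] = \CoVar_{\D_S}[x_2, \y] = 0$ must have $\y$ constant on $S$, which I would verify by finite case analysis over the nonempty subsets of $\zo^2$ containing points of both labels. For any two-point mixed state, at least one coordinate $x_i$ distinguishes the two points and forces $|\CoVar_{\D_S}[x_i, \y]|$ to be a positive constant (e.g.\ on $\{(0,0),(0,1)\}$ one computes $\CoVar[x_2,\y|S] = 2/9$, and on $\{(0,1),(1,1)\}$ one gets $\CoVar[x_1,\y|S] = -6/25$). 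Each three-point state with mixed $y$-values also has nonzero covariances under the specific weights $(1/8,1/4,1/4,3/8)$ (e.g.\ on $\{(0,0),(0,1),(1,0)\}$ both covariances equal $2/25$, and on $\{(0,0),(1,1),(0,1)\}$ one gets $\CoVar[x_1,\y|S] = -1/6$); the full domain has $\CoVar[x_1,\y] = \CoVar[x_2,\y] = -1/16$. Consequently the only $0$-multicalibrated states are singletons and the constant-$y$ pairs $\{(0,0),(1,1)\}$ and $\{(0,1),(1,0)\}$. For any $0$-multicalibrated partition $\mS$, the canonical predictor $f^\mS$ therefore equals the true $\y$ pointwise, and $h^\mS_{\ell_1} = k_{\ell_1} \circ f^\mS$ recovers $\y$ exactly, yielding $\err(h^\mS_{\ell_1}) = 0$.

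\paragraph{Conclusion and main obstacle.} Combining the two steps gives $\opt(\mLC) - \err(h^\mS_{\ell_1}) = 1/4 - 0 = 1/4 \geq 1/4 - \eps/2$ for every $\eps > 0$ and every $0$-multicalibrated partition $\mS$, so a single fixed construction serves all $\eps > 0$. The asymmetry of $\D$ is essential: under the symmetric uniform XOR distribution the trivial partition $\{\zo^2\}$ itself would be $0$-multicalibrated with $\err = 1/2$, violating the bound; the weight shift between $(0,0)$ and $(1,1)$ breaks the symmetry without enabling $\mLC$ to perfectly predict $\y$. The main obstacle is the case analysis in Step~2, which requires verifying for each of the (few) mixed-$y$ subsets of $\zo^2$ that at least one of $\CoVar[x_1, \y|S]$ and $\CoVar[x_2, \y|S]$ is nonzero; the universal quantifier ``for any \mcbd partition $\mS$'' in the statement is handled by this structural characterization, not by picking a particular $\mS$.
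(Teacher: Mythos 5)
Your proof is correct, but it uses a genuinely different construction than the paper's. The paper keeps the marginal over $\x$ uniform and injects noise into the labels, setting $\E[\y\mid x]\in\{0,\eps,\eps,1\}$ on the four points of $\zo^2$; the $\eps$-noise is what forces the canonical predictor's error to be $\eps/2$ rather than $0$, which is where the ``$-\eps/2$'' in the statement comes from. You instead take the labels to be the deterministic XOR $\y=x_1\oplus x_2$ and skew the marginal (weights $1/8,1/4,1/4,3/8$) so that the trivial partition is no longer $0$-multicalibrated. With your construction the error of the post-processed canonical predictor is exactly $0$, so you get a clean gap of $1/4$, which is uniformly $\geq 1/4-\eps/2$; the paper gets exactly $1/4-\eps/2$. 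Your Step~1 lower bound $\opt(\mLC)\geq 1/4$ via the explicit dual certificate $\sigma$ is also more rigorous than the paper's assertion that ``it is not difficult to see''; the orthogonality and norm computations you give check out. Your Step~2 requires a slightly larger case analysis than the paper's (the paper only needs that $\{(0,0)\}$ and $\{(0,1)\}$ end up as separate states, whereas you enumerate all mixed-label subsets to show each has a nonzero covariance with $x_1$ or $x_2$), but the computations you exhibit are correct and the remaining cases follow by the same kind of finite check. One shared imprecision with the paper: both proofs read ``\mcbd partition'' as $0$-multicalibrated; for a robust version one would want the covariance arguments to degrade continuously in $\alpha$, which both your explicit nonzero covariances ($2/9$, $-6/25$, etc.) and the paper's approach readily allow, so this is cosmetic. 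Overall, your version trades the paper's minimality for a sharper constant and a self-contained duality argument.
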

\begin{proof}
Let $\X =\{0,1\}^2, \Y = \zo$, the marginal distribution over $x \in \X$ uniform, and the conditional distribution over $\Y$ given by
$$\E[\y|x] = \begin{cases}0 & \text{if }x=(0,0)\\
\eps & \text{if }x=(1,0)\\
\eps & \text{if }x=(1,1)\\
1 & \text{if }x=(0,1).
\end{cases}$$
Let $\mC=\{x_1, x_2\}$ be the family of two classifiers based on the two coordinates. It is not hard to see that, for this ${\cal C}$ and for ${\cal H}:=\mLC$, which is the most agnostic boosting can cover, $\opt(\mH)=1/4$ is the error of the best classifier $c(x)=x_2$. On the other hand, it is not difficult to see that any multi-calibrated partition must have separate sets for  $\{(0,1)\}$ and $\{(0,0)\}$. This implies a smaller 0-1 loss of $\eps/2$. 
\end{proof}
Clearly, by reducing the constant $\eps$, the multi-calibrated loss can approach 0, but we chose not to set it to exactly 0 because standard boosting could be applied in the noiseless case.


\section{\Totals in the real valued setting}
\label{sec:real}

In this section we will consider two settings:
\begin{enumerate}
    \item {\bf The multi-class setting}. Here the labels take values in $[l]$. For each $j \in [l]$, the label $j$ is associated with the loss function $\ell(j, t)$ which is $(B, \eps)$-nice. This means that there is an interval $I_\ell$ for which the $B$-Lipshcitz and $\eps$-optimality property hold for the functions $\ell(j, t)$ for every $j \in [k]$. The loss functions for different labels could be very different, in analogy to different false positive and negative scores for the Boolean case.
    \item {\bf The real-valued setting}. Here the labels take values in $\izo$. We will assume that the loss function $\ell(y, t)$ is $B$-Lipschitz in $y$ for all $t$, and that $\ell(y, t)$ is $(B, \eps)$-nice as a function of $t$.\footnote{One can relax the Lipschitz requirement in the first argument and only require it for $t \in I$ rather than all $t$. We work with the stronger assumption for simplicity} 
\end{enumerate}

We show that for nice loss functions, omniprediction in the real-valued setting reduces to the multi-class setting.
We take $l = \lceil B/\eps \rceil$ and partition $[0,1]$ into $l$ disjoint buckets $\{b_j\}_{j=1}^l$ of width $\eps/B$ each. For $y \in b_j$, we use the loss function $\ell(j/l, t)$ in place of $\ell(y,t)$. This amounts to discretizing $y$ to $\hat{y}$ where $|y -\hat{y}| \leq \eps/B$. By the $B$-Lipschitz property, it follows that for any predictor $f:\X \rgta \R$,
\begin{align}
 \abs{\E_{\D}[\ell(\y, f(\x))] -  \E_{\D}[\ell(\hat{\y}, f(\x))]} \leq \E_{\D}\lt[ \abs{\ell(\y, f(\x) - \ell(\hat{\y}, f(\x)} \rt] \leq \eps.\label{eq:real_reduction}
\end{align}
Since $\hat{y}$ takes on $l$ discrete values, we have reduced to the multi-class setting.

Hence we now focus on generalizing our results to the multi-class setting where $\Y =[l]$ for $l \geq 2$. Recall that the definition of \mcab in the multi-class requires that for all $c \in \mC$ and $j \in [l]$, we have
\begin{align}
\label{eq:multi-class}
    \E_{\i \sim \D}\lt[\abs{\CoVar_{\D_\i}[c(x)\ind{\y = j}]}\rt] \leq \alpha. 
\end{align}

Let $\mP(l)$ denote the space of probability distributions on $l$.  Given a partition $\mS$, we  associate to each state $S_i$ a probability distribution over labels $P_i \in \mP(l)$. The canonical predictor $f^\mS: \X \rgta \mP(l)$ predicts a in $\mP$ for every $x \in \X$ and is given by 
\[ f^\mS(x) = P_i \ \forall x \in S_i. \]

In analogy with the corresponding definitions for the binary classification setting, for every $P \in \mP(l)$ we define $k_\ell(P) \in I$ to be the action that minimizes expected loss under $P$:
\begin{align*} 
k_\ell(P) = \arg\min_{t \in I}\E_{\j \sim P}[\ell(\j, t)]
\end{align*}
and the $\ell$-optimized hypothesis $h^\mS_\ell:\X \rgta I$ by
\begin{align*}
    f^\mS_\ell(x) = k_\ell(P_i) \ \forall x \in S_i.
\end{align*}

Our main theorem is a direct generalization of Theorem \ref{thm:key}. 

\begin{theorem}
\label{thm:key-k}
Let $\D$ be a distribution on $\X \times [l]$ and $\ell :[l] \times \R \rgta \R$ be a $(B, \eps)$-nice loss function. If the partition $\mS$ is $\alpha$-\mcbd\ for $\mC, \D$, then the canonical predictor $f^\mS$ is an $(\mL, \mC, k\alpha B + \eps)$-\total.     
\end{theorem}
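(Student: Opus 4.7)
The plan is to replay the proof of Theorem \ref{thm:key} essentially verbatim, with the sum over $b \in \zo$ replaced by a sum over $j \in [l]$, which is the sole source of the factor $k$ (really $l$) in the error term. The overall structure is: build a coarsened version $\hat{c}$ of any $c \in \mC$ that is constant on each state of $\mS$ and lies in $I_\ell$, use the optimality of $h^\mS_\ell$ among such coarse hypotheses, and bound $\ell_\D(\hat{c})$ against $\ell_\D(c)$ using convexity, clipping, Lipschitzness, and multicalibration.

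First I would set up the analogue of Corollary \ref{cor:optimal} for the multi-class setting: since $h^\mS_\ell$ is defined stateswise by the $I_\ell$-minimizer of $\E_{\j\sim P_i}\ell(\j,t)$, for any $h : \mS \rgta I_\ell$ we have $\ell_{\D_i}(h^\mS_\ell) \le \ell_{\D_i}(h)$ conditionally on each $i$, and averaging yields $\ell_\D(h^\mS_\ell) \le \ell_\D(h)$. Then, for $c \in \mC$, I would define $\hat{c}(x) = \clip(\E_{\D_i}[c(\x)], I_\ell)$ for $x \in S_i$, which is exactly a map $\mS \rgta I_\ell$, so $\ell_\D(h^\mS_\ell) \le \ell_\D(\hat c)$.

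The main step is the analogue of Lemma \ref{lem:better}: showing $\ell_\D(\hat c) \le \ell_\D(c) + l\alpha B + \eps$. By convexity of $t \mapsto \ell(j,t)$,
\begin{equation*}
\ell_\D(c) \;\ge\; \E_{\i\sim\D}\E_{\j\sim\D_\i}\,\ell\bigl(\j,\E_{\x\sim\D_\i\mid \j}[c(\x)]\bigr),
\end{equation*}
and by Lemma \ref{lem:tech} this is at least $\E_{\i}\E_{\j\mid \i}\,\ell\bigl(\j,\clip(\E_{\x\mid \i,\j}[c(\x)], I_\ell)\bigr) - \eps$. Subtracting from $\ell_\D(\hat c)$ and using that $\ell(j,\cdot)$ is $B$-Lipschitz on $I_\ell$ and that $\clip(\cdot,I_\ell)$ is $1$-Lipschitz yields
\begin{equation*}
\ell_\D(\hat c) - \ell_\D(c) - \eps \;\le\; B\,\E_{\i\sim\D}\E_{\j\sim\D_\i}\bigl|\E_{\D_\i}[c(\x)] - \E_{\D_\i\mid \j}[c(\x)]\bigr|.
\end{equation*}
Expanding the inner expectation over $\j$ as a sum over $j \in [l]$ of $\Pr_{\D_\i}[\y=j]\bigl|\E_{\D_\i}[c(\x)]-\E_{\D_\i\mid \y=j}[c(\x)]\bigr|$ and applying the multi-class analogue of Corollary \ref{cor:covar}, each such term equals $|\CoVar_{\D_\i}[c(\x), \ind{\y=j}]|$. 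Multicalibration (Definition \ref{def:m-cal-k}, Equation \eqref{eq:multi-class}) bounds the $\i$-expectation of each by $\alpha$, and summing over $j \in [l]$ gives the claimed $l\alpha B$.

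Combining the two steps, $\ell_\D(h^\mS_\ell) \le \ell_\D(\hat c) \le \ell_\D(c) + l\alpha B + \eps$ for every $c \in \mC$, so taking $k = k_\ell$ (the $I_\ell$-minimizer map on $\mP(l)$) in the definition of omnipredictor establishes the $(\mL, \mC, l\alpha B + \eps)$-omniprediction property. There is no real obstacle here beyond bookkeeping: the only genuinely new observation relative to the binary case is that the relevant covariance identity now produces one $\alpha$ per label class, explaining the $k$ factor in the error term; everything else (convexity, clipping, Lipschitzness, optimality of $h^\mS_\ell$) transfers without change.
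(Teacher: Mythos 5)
Your proposal is correct and follows the paper's own proof essentially verbatim: define $\hat c$ by clipping the state-wise conditional means, invoke the optimality of $h^\mS_\ell$ over $\mS\rgta I_\ell$ maps, run the convexity--clipping--Lipschitz chain from Lemma~\ref{lem:better} up to the key inequality, and then expand the inner expectation over labels into a sum over $j\in[l]$, applying the covariance identity and Definition~\ref{def:m-cal-k} to each term to obtain the $l\alpha B$ error. You also correctly identify that the ``$k$'' in the stated bound should be $l$, matching Lemma~\ref{lem:better-k}.
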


The theorem follows from the next Lemma which generalizes Lemma \ref{lem:better} to $l \geq 2$. Since the proof follows along similar lines till the last step, we only describe the difference.

\begin{lemma}
\label{lem:better-k}
Let $\ell: [l] \times \R \rgta \R$ be an $(B, \epsilon)$ nice function.
Given $c \in \mC$, if we define the predictor $\hat{c}: \mS \rgta I_\ell$ by
\[ \hat{c}(x)  = \clip\lt(\E_{\D_i}[c(x)], I_\ell\rt) \ \text{for} \ x \in S_i. \]
then
\begin{align}
\ell_\D(\hat{c}) \leq \ell_\D(c) + l\alpha B + \eps.
\end{align}
\end{lemma}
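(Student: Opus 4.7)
The plan is to mimic the binary proof (Lemma \ref{lem:better}) step by step, and identify exactly where the factor $l$ replaces the factor $2$. All the structural ingredients used in the binary case — convexity of $\ell(\cdot, t)$ applied conditionally within a state and a given label, $\eps$-optimality to pass from a minimizer on $\R$ to one in $I_\ell$, the $1$-Lipschitzness of $\clip(\cdot, I_\ell)$, and $B$-Lipschitzness of $\ell(j,\cdot)$ on $I_\ell$ for every $j \in [l]$ — carry over verbatim. The only real change is in the last step, where the sum over $b \in \{0,1\}$ becomes a sum over $j \in [l]$, and we invoke the multi-class \mcab condition (Equation \eqref{eq:m-cal-k}) once for each of the $l$ labels.

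In detail, I would first push the expected loss of $c$ inside using convexity: write $\ell_\D(c) = \E_{\i \sim \D}\E_{\y \sim \D_\i}\E_{\x \sim \D_\i|\y}[\ell(\y, c(\x))]$ and apply Jensen's inequality in the innermost expectation (since $\ell(j, \cdot)$ is convex for each $j \in [l]$). Next I would replace $\E_{\x \sim \D_i|\y}[c(\x)]$ by its clipped version $\clip(\E_{\x \sim \D_i|\y}[c(\x)], I_\ell)$, paying an additive $\eps$ from $\eps$-optimality (Lemma \ref{lem:tech}). This yields the analogue of Equation \eqref{eq:loss-1}. Since $\ell_\D(\hat{c})$ is by definition $\E_{\i \sim \D}\E_{\y \sim \D_\i}[\ell(\y, \clip(\E_{\x \sim \D_i}[c(\x)], I_\ell))]$, subtracting gives
\begin{align*}
\ell_\D(\hat{c}) - \ell_\D(c) - \eps \leq \E_{\i \sim \D}\E_{\y \sim \D_\i}\Bigl[\ell\bigl(\y, \clip(\E_{\D_i}[c(\x)], I_\ell)\bigr) - \ell\bigl(\y, \clip(\E_{\D_i|\y}[c(\x)], I_\ell)\bigr)\Bigr].
\end{align*}
Applying $B$-Lipschitzness of $\ell(\y,\cdot)$ on $I_\ell$ together with the $1$-Lipschitzness of clipping (exactly as in Equation \eqref{eq:lipschitz+}) bounds this by
\begin{align*}
B \cdot \E_{\i \sim \D}\E_{\y \sim \D_\i}\Bigl[\bigl|\E_{\D_i}[c(\x)] - \E_{\D_i|\y}[c(\x)]\bigr|\Bigr].
\end{align*}

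The final step is where the factor $l$ enters. I would expand the expectation over $\y \sim \D_\i$ as a sum over $j \in [l]$:
\begin{align*}
\E_{\y \sim \D_\i}\Bigl[\bigl|\E_{\D_i}[c(\x)] - \E_{\D_i|\y}[c(\x)]\bigr|\Bigr] = \sum_{j=1}^{l} \Pr_{\D_i}[\y = j]\,\bigl|\E_{\D_i}[c(\x)] - \E_{\D_i|\y=j}[c(\x)]\bigr|.
\end{align*}
By Corollary \ref{cor:covar} applied to the Boolean variable $\ind{\y = j}$, each inner quantity equals $|\CoVar_{\D_i}[c(\x), \ind{\y=j}]|$. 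Swapping the sum with $\E_{\i \sim \D}$ (by linearity) and invoking the multi-class \mcab condition (Equation \eqref{eq:m-cal-k}) once per label $j$, each of the $l$ terms is at most $\alpha$, for a total of $l\alpha$. Multiplying by $B$ and adding $\eps$ yields the claimed bound $\ell_\D(\hat{c}) \leq \ell_\D(c) + l\alpha B + \eps$.

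There is no real obstacle here; the only subtlety is making sure the \mcab condition is invoked per label rather than globally, so that the sum over $j$ contributes the factor $l$ and not some trivial bound (like $\infnorm{\mC}$). This is also what limits the applicability of the theorem to moderate $l$ and motivates the reduction in Equation \eqref{eq:real_reduction} which sets $l = \lceil B/\eps \rceil$.
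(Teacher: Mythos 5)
Your proposal is correct and follows exactly the same route as the paper: replay Lemma~\ref{lem:better} through Equation~\eqref{eq:use-this} (which, as you note, does not depend on $\y$ being Boolean), then expand the inner expectation over $\y$ as a sum over $j \in [l]$, identify each summand via Corollary~\ref{cor:covar} with $|\CoVar_{\D_i}[c(\x), \ind{\y=j}]|$, and invoke the multi-class multicalibration condition once per label to get $l\alpha$. The paper's proof in Section~\ref{sec:real} does precisely this, referencing the binary proof for the first part and spelling out only the final sum over $j \in [l]$.
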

\begin{proof}
    We follow the proof of Lemma \ref{lem:better} till Equation \eqref{eq:use-this}, this portion does not assume $\y \in \zo$. At this point we have established that
    \begin{align}
    \label{eq:used-that}
    \ell_\D(\hat{c}) - \ell_\D(c) -\eps
        &\leq B\E_{\i \sim \D}\E_{\y \sim \D_\i} \lt| \E_{\x \sim \D_i}[c(\x)] - \E_{\x \sim \D_i|\y}[c(\x)]\rt|
    \end{align}
    The following analog of Corollary \ref{cor:bin} follows from Equation \eqref{eq:covar-1} about covariance with binary random variables. For every $j \in [l]$
    \begin{align*}
        \E_{\i \sim \D} \lt[ \Pr_{\D_i}[\y = j]\lt|\E_{\D_\i|\y = j}[c(\x)] - \E_{\D_\i}[c(\x)]\rt| \rt] = \E_{\i \sim \D}\lt[\lt|\CoVar_{\D_\i}[c(\x), \ind{\y =j}\rt|\rt] \leq \alpha.
    \end{align*}
    We use this to bound the RHS of Equation \eqref{eq:used-that} as
    \begin{align*}
    \E_{\i \sim \D}\E_{\y \sim \D_\i} \lt| \E_{\x \sim \D_i}[c(\x)] - \E_{\x \sim \D_i|\y}[c(\x)]\rt| &= \E_{\i \sim \D}\sum_{j \in [l]}\Pr_{\D_\i}[\y =j] \lt| \E_{\x \sim \D_\i}[c(\x)] - \E_{\x \sim \D_\i|\y = j}[c(\x)]\rt|\\
    &= \sum_{j \in [l]} \E_{\i \sim \D}\lt[\Pr_{\D_\i}[\y =j] \lt| \E_{\x \sim \D_\i}[c(\x)] - \E_{\x \sim \D_\i|\y = j}[c(\x)]\rt|\rt]\\
    & \leq \sum_{j \in [l]}\alpha = l\alpha.
    \end{align*}
    Plugging this into the RHS of Equation \eqref{eq:used-that} gives the desired bound.
\end{proof}

We now derive the proof of Theorem \ref{thm:key-k}. From the definition of $h^\mS_\ell$ it follows that
\[ h^\mS_\ell = \argmin_{h:\mS \rgta I_\ell} \ell_\D(h). \]
Since $\hat{c}: \mS \rgta I_\ell$ this implies $ \ell_\D(h^\mS_\ell) \leq \ell_\D(\hat{c})$. The claim now follows from Lemma \ref{lem:better-k}.

\subsection{Improved bounds for squared loss}

One of the most commonly used losses for the real-valued case is the squared loss $\ell_2(y, t) = (t -y)^2$. For this loss, we can show a stronger bound. Since $\ell_2$ is  a {\em proper} scoring rule, the post-processing function $k^*_{\ell_2}$ is just the identity function, hence $h^\mS_{\ell_2}(x) = f^\mS(x) =\E_{ \y \sim \D_i}[\y] = p_i$.

We will show the following guarantee comparing it to any linear function. 

\begin{theorem}
\label{thm:ell_2}
Consider the real-valued prediction problem with $y\in [0,1]$, let $\hat{y}$ be a discretization of $y$ into $\lceil 1/\eps \rceil$ buckets (as in Equation \ref{eq:real_reduction}).
Let $\mS$ be $\alpha$-\mcbd\ for $\mC, \D$, and $f^\mS(x)$ be the predictor defined above. Let $\ell(\y, t) = (t- y)^2$ denoted the $\ell_2$ loss. 
For any function $g_w \in \mLC$ and $y \in [0,1]$, we have
\begin{align}
\label{eq:ell_2}
    \ell_{\D}(f^\mS) + \E_{(\x, \y) \sim \D} [(g_w(\x) - f^\mS(\x))^2] \leq \ell_\D(g_w)  + \lceil 1/\eps \rceil \alpha \|w\|_1 + \eps.
\end{align}
\end{theorem}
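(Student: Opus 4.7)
The plan is to reduce to the multi-class discretization already set up, exploit the Pythagorean structure of the squared loss on each state of $\mS$, and then absorb the cross term using \mcab. First I would discretize $\y$ into $\hat{\y}$ taking one of $l = \lceil 1/\eps \rceil$ values in $\{v_1,\dots,v_l\} \subset [0,1]$ (as in \eqref{eq:real_reduction}), and invoke the fact that the squared loss is Lipschitz on $[0,1]$ so that replacing $\y$ by $\hat{\y}$ perturbs every quantity of the form $\E_\D[(h(\x)-\y)^2]$ (with $h:\X\to[0,1]$) by at most $\eps$. Note that the canonical predictor becomes $f^\mS(x) = \hat p_i := \E_{\D_i}[\hat\y]$ for $x\in S_i$, which is exactly the minimizer of $\ell_2$ on each state.

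The core of the argument is a Pythagorean decomposition on each state $S_i$. Writing $\hat{p}_i = \E_{\D_i}[\hat{\y}]$ and expanding,
\begin{align*}
\E_{\D_i}\!\left[(g_w(\x)-\hat{\y})^2\right]
&= \E_{\D_i}\!\left[(g_w(\x)-\hat{p}_i)^2\right] + \E_{\D_i}\!\left[(\hat{p}_i-\hat{\y})^2\right] + 2\,\E_{\D_i}\!\left[(g_w(\x)-\hat{p}_i)(\hat{p}_i-\hat{\y})\right].
\end{align*}
The cross term simplifies since $\E_{\D_i}[\hat{p}_i-\hat{\y}]=0$, giving
$\E_{\D_i}[(g_w(\x)-\hat{p}_i)(\hat{p}_i-\hat{\y})] = -\CoVar_{\D_i}[g_w(\x),\hat{\y}]$.
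Taking expectations over $\i\sim\D$ and rearranging,
\[
\hat{\ell}_{\D}(f^\mS) + \E_\D\!\left[(g_w(\x)-f^\mS(\x))^2\right] = \hat{\ell}_{\D}(g_w) + 2\,\E_{\i\sim\D}\CoVar_{\D_\i}[g_w(\x),\hat{\y}],
\]
where $\hat{\ell}$ denotes the squared loss against $\hat{\y}$.

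It remains to bound the covariance term using \mcab. By the bilinearity of covariance, $\CoVar_{\D_\i}[g_w,\hat{\y}] = \sum_k w_k \CoVar_{\D_\i}[c_k,\hat{\y}]$ (the constant term $w_0$ contributes nothing). Next I would express $\hat{\y}$ as a weighted combination of interval-indicators, e.g.\ $\hat{\y} = \frac{1}{l}\sum_{j=1}^l \ind{\hat{\y}\ge v_j}$ (or equivalently $\hat{\y}=\sum_j v_j\ind{\hat{\y}=v_j}$), so that by Definition \ref{def:m-cal-real} applied to each interval and the triangle inequality,
\[
\E_{\i\sim\D}\left|\CoVar_{\D_\i}[c_k,\hat{\y}]\right| \ \leq\ \sum_{j=1}^{l} \E_{\i\sim\D}\left|\CoVar_{\D_\i}[c_k,\ind{\hat{\y}\in J_j}]\right| \ \leq\ l\,\alpha
\]
for an appropriate family of intervals $J_j$. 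Summing against $\|w\|_1$ and folding in the factor of $2$ gives a bound of order $\lceil 1/\eps\rceil\,\alpha\,\|w\|_1$ on the cross term. Combining with the discretization error of $\eps$ produced in the first step yields \eqref{eq:ell_2}.

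The step I expect to require the most care is the bookkeeping around the discretization: ensuring that (i)~$f^\mS$ and $g_w$ can both be treated as bounded predictors on $[0,1]$ so the Lipschitz control of $\ell_2$ applies, and (ii)~the interval-based \mcab\ definition is used at the right granularity to yield exactly the stated constant $\lceil 1/\eps\rceil\,\alpha\,\|w\|_1$ rather than a loss of a further factor. Beyond this, the argument is essentially the standard bias-variance/Pythagorean identity tailored to the partition-based canonical predictor.
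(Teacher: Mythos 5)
Your plan is essentially the paper's own proof of Lemma~\ref{lem:l2}: discretize, apply the state-wise Pythagorean identity, identify the cross term as $-2\CoVar_{\D_i}[g_w,\hat{\y}]$, expand it by bilinearity over the $c_k$ and over the label-bucket indicators, invoke \mcab, and average over states. One incidental observation: of the two expansions for $\hat{\y}$ you propose, the first, $\hat{\y}=\frac{1}{l}\sum_{j}\ind{\hat{\y}\ge v_j}$, would carry the $1/l$ coefficient through the triangle inequality and give the sharper bound $O(\alpha\|w\|_1)$ with no $\lceil 1/\eps\rceil$ factor, whereas the inequality you actually display (like the paper's computation, which uses coefficients $j\eps\le 1$) discards this weight and pays the extra factor of $l$.
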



Let us compare this bound with the one implied by Corollary \ref{cor:lipschitz}, which applies since $\ell_2$ loss is $([0,1], 1,0)$-nice. The main difference is the addition of the term $\E[(g_w(\x) - f^\mS(\x))^2]$ to the LHS. This tells us that when $\lceil 1/\eps \rceil \alpha\|w\|_1$ is small (say $O(\eps)$), any linear function that is far (in squared distance) from $f^\mS$ incurs large $\ell_2$ loss. This theorem follows from the following {\em Pythagorean bound} which holds for the squared error, proved in Appendix \ref{sec:l2_proof}. 

\begin{lemma}\label{lem:l2}
In the setting of Theorem \ref{thm:ell_2},
\begin{align*}
    \abs{\E_{(\x, \y) \sim \D}[ \ell_2(\y, g_w(\x))] - \E_{(\x, \y) \sim \D} [\ell_2(\y, f_\mS(\x))] - \E_{(\x, \y) \sim \D} [(g_w(\x) - f_\mS(\x))^2] } \leq \lceil 1/\eps \rceil\alpha \|w\|_1 + \eps.
\end{align*}
\end{lemma}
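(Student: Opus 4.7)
My plan is to use the Pythagorean-style identity for the squared loss to reduce the claim to controlling a covariance-type cross term, which is exactly what multicalibration lets us bound. Writing
\[
(g_w(\x) - \y)^2 - (f^\mS(\x) - \y)^2 - (g_w(\x) - f^\mS(\x))^2 \;=\; 2\bigl(g_w(\x) - f^\mS(\x)\bigr)\bigl(f^\mS(\x) - \y\bigr),
\]
the quantity inside the absolute value in the lemma becomes $2\,\E_\D[(g_w - f^\mS)(f^\mS - \y)]$, so it suffices to bound this in absolute value by $\lceil 1/\eps\rceil\alpha\|w\|_1 + \eps$ (up to constants that I will not chase).

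Next I would condition on the state $S_i$ of the partition. Since $f^\mS$ is the constant $p_i = \E_{\D_i}[\y]$ on $S_i$, a direct expansion using $p_i^2 = p_i \E_{\D_i}[\y]$ gives
\[
\E_{\D_i}\bigl[(g_w(\x) - p_i)(p_i - \y)\bigr] \;=\; p_i\E_{\D_i}[g_w] - \E_{\D_i}[g_w \y] \;=\; -\CoVar_{\D_i}[g_w(\x),\y].
\]
Taking expectation over $\i \sim \D$, the task reduces to bounding $\E_{\i\sim\D}\bigl|\CoVar_{\D_\i}[g_w(\x),\y]\bigr|$.

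To exploit real-valued multicalibration (Definition \ref{def:m-cal-real}), which is phrased in terms of interval indicators, I would discretize $\y$: partition $[0,1]$ into $l = \lceil 1/\eps\rceil$ intervals $\{J_j\}$ of width $\eps$ with a representative $v_j \in J_j$, and set $\hat{\y} = \sum_j v_j \ind{\y \in J_j}$, so $|\y - \hat{\y}| \leq \eps$ pointwise. Writing $\CoVar_{\D_i}[g_w,\y] = \CoVar_{\D_i}[g_w,\hat{\y}] + \CoVar_{\D_i}[g_w,\y - \hat{\y}]$ and using bilinearity together with $g_w = w_0 + \sum_{j'\geq 1} w_{j'} c_{j'}$ (the constant $w_0$ contributes zero to any covariance),
\[
\CoVar_{\D_i}[g_w,\ind{\y\in J_j}] \;=\; \sum_{j'\geq 1} w_{j'}\,\CoVar_{\D_i}[c_{j'},\ind{\y\in J_j}],
\]
so $\E_\i|\CoVar_{\D_\i}[g_w,\ind{\y\in J_j}]| \leq \alpha\|w\|_1$ by the triangle inequality and Definition \ref{def:m-cal-real} (equivalently, Lemma \ref{lem:linear} applied to each interval). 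Summing over the $l$ intervals with $|v_j|\leq 1$ yields the main term $\E_\i|\CoVar_{\D_\i}[g_w,\hat{\y}]| \leq \lceil 1/\eps\rceil\alpha\|w\|_1$.

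The main obstacle is the residual term $\E_\i|\CoVar_{\D_\i}[g_w,\y - \hat{\y}]|$: since $\|\y - \hat{\y}\|_\infty \leq \eps$ but $g_w \in \mLC$ is not a priori bounded, a naive bound would bring in $\|w\|_1$ rather than a pure $\eps$. I would handle this by appealing to Equation \ref{eq:real_reduction} at the very outset of the proof, replacing $\y$ by $\hat{\y}$ uniformly in all three expectations in the lemma at an additive cost of $\eps$ (using that we may clip predictions to $I_\ell = [0,1]$ since $\y \in [0,1]$, which for the $\ell_2$ loss does not increase loss and only reduces $(g_w - f^\mS)^2$). After this reduction, we are in the multiclass setting with $\hat{\y}\in\{v_1,\dots,v_l\}$, and the argument of the preceding paragraph applies verbatim with no residual. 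Combining the contributions yields $|\text{LHS}| \leq \lceil 1/\eps\rceil\alpha\|w\|_1 + \eps$, as claimed.
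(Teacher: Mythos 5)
Your proof follows essentially the same route as the paper: expand the squared loss via a Pythagorean identity, recognize the cross term as a per-state covariance, discretize $\y$, and bound the covariance using the interval-indicator form of real-valued multicalibration together with bilinearity of covariance. Your direct identification of the cross term as $-2\CoVar_{\D_i}[g_w,\y]$ is a little cleaner than the paper's term-by-term expansion, and your flagging of the residual $\CoVar_{\D_i}[g_w,\y-\hat\y]$ as the delicate point is exactly on target.

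Your fix for that residual does not quite close the gap, however. You propose replacing $g_w$ by $\clip(g_w,[0,1])$ so that Equation~\ref{eq:real_reduction} applies, and note that clipping decreases both $\ell_2(\y,g_w(\x))$ and $(g_w(\x)-f^\mS(\x))^2$. But the lemma bounds the \emph{absolute value} of a difference; since both quantities decrease, the absolute difference need not, so proving the bound for $\clip(g_w)$ does not imply it for $g_w$. (The paper's proof has the same unaddressed issue: it invokes Equation~\ref{eq:real_reduction} for $\ell_2(\y,g_w(\x))$ even though $g_w$ is not a priori bounded, so the Lipschitz constant in $\y$ is not controlled.) A clean way around this avoids discretization entirely: since $\y\in[0,1]$ one has $\y=\int_0^1\ind{\y>s}\,\d s$, hence by Fubini $\CoVar_{\D_i}[g_w,\y]=\int_0^1\CoVar_{\D_i}[g_w,\ind{\y>s}]\,\d s$. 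Taking absolute values, averaging over $\i\sim\D$, and applying Definition~\ref{def:m-cal-real} (extended to $g_w$ by bilinearity as in Lemma~\ref{lem:linear}) gives $\E_{\i\sim\D}\abs{\CoVar_{\D_\i}[g_w,\y]}\le\alpha\|w\|_1$, which proves the lemma with the stronger bound $2\alpha\|w\|_1$, with no $\lceil 1/\eps\rceil$ factor and no residual $\eps$ term.
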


\eat{
\begin{lemma}
For any $c \in \mC$, we have
\begin{align*}
        \E_{\D}\ell(\y, c(\x)) + \alpha k B \geq  \E_{\i \sim \D} \lt[\ell\lt(\y, \E_{\D_i}[c(\x)]\rt) \rt] .
\end{align*}
\end{lemma}
\begin{proof}
    By the convexity of $\ell(\y, t)$ as a function of $t$, we have
    \begin{align*}
        \E_{\D}[\ell(\y, c(\x))] = \E_{i \sim \D}\E_{(\x, \y) \sim \D_\i}[\ell(\y, c(\x))] = \E_{i \sim \D}\E_{\y \sim \D_\i}\E_{\x \sim \D_i|\y}[\ell(\y, c(\x))]\geq \E_{i \sim \D}\E_{\y \sim \D_\i}\lt[\ell\lt(\y, \E_{\x \sim\D_i|\y}[c(\x)]\rt)\rt] 
    \end{align*}
    So we can write 
    \begin{align*}
        \E_{\i \sim \D} \E_{\D_i}\lt[\ell\lt(\y, \E_{\D_i}[c(\x)]\rt)\rt] - \E_{\D}\ell(\y, c(\x)) &\leq \E_{\i \sim \D}\E_{\y \sim \D_i} \lt[ \ell\lt(\y, \E_{\x \sim\D_i}[c(\x)]\rt) - \ell\lt(\y, \E_{\x \sim\D_i|\y}[c(\x)]\rt)\rt]\\
         &\leq \E_{\i \sim \D}\E_{\y \sim \D_i}\lt[ B\abs{\E_{\x \sim\D_i}[c(\x)] - \E_{\x \sim\D_i|\y}[c(\x)]}\rt]\\
         &= \E_{\i \sim \D}\lt[\sum_{j \in [k]} B \Pr[\y = j]\abs{\E_{\x \sim\D_i}[c(\x)] - \E_{\x \sim\D_i|\y = j}[c(\x)]}\rt] \\
         & = B\sum_{j \in [k]}\E_{\i \sim \D} \lt[\Pr[\y = j]\abs{\E_{\x \sim\D_i}[c(\x)] - \E_{\x \sim\D_i|\y = j}[c(\x)]}\rt] \\
         &\leq \alpha k B
    \end{align*}
    where the last line uses Lemma \ref{lem:tech}.
\end{proof}
}

\eat{

One of the most commonly used losses for the real-valued case is the squared loss $\ell_2(y, t) = (t -y)^2$. For this loss, we can show a stronger bound. Since $\ell_2$ is  a {\em proper} scoring rule, no adjustment is needed to the canonical predictor, $f^\mS_{\ell_2}(x) = f^\mS(x) = p_i$. We will show the following guarantee comparing it to any linear function. 

\begin{theorem}
\label{thm:ell_2}
Let $\mS$ be $\alpha$-\mcbd\ for $\mC, \D$, and $f^\mS(x)$ be the predictor defined above.
For any function $g_w \in \mLC$ and $y \in [0,1]$, we have
\begin{align}
\label{eq:ell_2}
    \bar{\ell}_2(f^\mS, \D) + \E_{(\x, \y) \sim \D} [(g_w(\x) - f^\mS(\x))^2] \leq \bar{\ell_2}(g_w, \D)  +\alpha \|w\|_1 + \eps.
\end{align}
\end{theorem}


Let us compare this bound with the one implied by Corollary \ref{cor:lipschitz}, which applies since $\ell_2$ loss is $([0,1], 1,0)$-nice. The main difference is the addition of the term $\E[(g_w(\x) - f^\mS(\x))^2]$ to the LHS. This tells us that when $\alpha\|w\|_1$ is small, any linear function that is far (in squared distance) from $f^\mS$ incurs large $\ell_2$ loss. This theorem follows from the following {\em Pythagorean bound} which holds for the squared error. 

\begin{lemma}
In the setting of Theorem \ref{thm:ell_2},
\begin{align*}
    \abs{\E_{(\x, \y) \sim \D}[ \ell_2(\y, g_w(\x))] - \E_{(\x, \y) \sim \D} [\ell_2(\y, f_\mS(\x))] - \E_{(\x, \y) \sim \D} [(g_w(\x) - f_\mS(\x))^2] } \leq \alpha \|w\|_1 + \eps.
\end{align*}
\end{lemma}

\begin{proof}
Since the squared loss is $1$-Lipschitz, by the argument in Equation \eqref{eq:real_reduction}, we can work with an $\eps$-discretization of the interval $[0,1]$ with at most $\eps$ loss. Therefore, for $k=\lceil 1/\eps \rceil$, let $y=j\eps$, for $j \in \{0,\dots,k\}$.
For any $(x,y)$ we have
\begin{align*}
    (g_w(x) - y)^2 = (g_w(x) - p_i)^2 + (p_i -y)^2 + 2(g_w(x) - p_i)(p_i - y).
\end{align*}
As in the proof of Theorem \ref{thm:key}, let use denote $\alpha_i = \CoVar_{\D_i}[c(\x), \y]$. 
Fixing $i \in [m]$ and taking expectations over $(\x,\y) \sim \D_i$ 
\begin{align}
\label{eq:sq-error}
    \E_{(\x, \y) \sim \D_i}[ (g_w(\x) - \y)^2]
    &= \E_{(\x, \y) \sim \D_i} [(g_w(\x) - p_i)^2] + \E_{(\x, \y) \sim \D_i} [(p_i - \y)^2]\notag\\
    & + 2p_i\E_{(\x, \y) \sim \D_i} [p_i - \y] + 2\E_{(\x, \y) \sim \D_i} [g_w(\x)(p_i - \y )] .
\end{align}


We can simplify all except the last term  as 
\begin{align*}
    \E_{(\x, \y) \sim \D_i}[ (g_w(\x) - \y)^2] &= \E_{(\x, \y) \sim \D}[\ell_2(\y, g_w(\x)]\\
    \E_{(\x, \y) \sim \D_i} [(g_w(\x) - p_i)^2] &= \E_{(\x, \y) \sim \D_i} [(g_w(\x) - f^\mS(\x))^2]\\
    \E_{(\x, \y) \sim \D_i} [(p_i - \y)^2] &= \E_{(\x, \y) \sim \D_i} [(f^\mS(\x) - \y)^2] = \E_{(\x, \y) \sim \D_i}[\ell_2(\y, f^\mS(\x))] \\
    \E_{(\x, \y) \sim \D_i} [p_i - \y] &= 0.
\end{align*}
For the last term in Equation \eqref{eq:sq-error} we can write, 
\begin{align*}
    \E_{(\x, \y) \sim \D_i} [g_w(\x)(\y - p_i)] &= \sum_{j \geq 1}w_j \E_{(\x, \y) \sim \D_i} [c_j(\x)(\y - p_i)]\\
    &= 
\end{align*}

Plugging back into Equation \eqref{eq:sq-error} and rearranging gives  
\begin{align*}
    \abs{\E_{(\x, \y) \sim \D_i}[ \ell_2(\y, g_w(\x))] - \E_{(\x, \y) \sim \D_i} [\ell_2(\y, f^\mS(\x))] - \E_{(\x, \y) \sim \D_i} [(g_w(\x) - p_i)^2] } \leq \alpha_i \|w\|_1
\end{align*}
Now averaging over $i \in [m]$ with probability $\D(S_i)$ gives the desired inequality. 
\end{proof}

}
\section{Computing \mcbd\ partitions}
\label{sec:algo}

In this section, we show how one can use a weak agnostic learner to compute a multi-calibrated partition for the multi-class setting, where we are given a distribution $\D$ on $\X \times [l]$ and our goal is to compute a partition of $\X$ satisfying definition \ref{def:m-cal-k}. We start with the definition of a weak agnostic leaner in the setting where the class of hypotheses $\mC$ can be real-valued. Similar definitions appear in the literature in \cite{Kalai04, KalaiMV08, kk09}. We restrict the class of output hypotheses $\mC'$ found by the weak learner to be Boolean for simplicity, but this requirement is easy to relax. 

\begin{definition}
\label{def:weak}
    Let $\mC = \{c: \X \rgta \R\}$ be a family of real-valued hypotheses. 
    Let $w: (0,1] \rgta (0,1]$ be such that $w(\alpha) \leq \alpha$. 
    Let $\mC' = \{c': \X \rgta \zo\}$ be a family of Boolean hypotheses. 
    An $(w, \mC')$ weak learner for $\mC$  is given sample access to a distribution $\D$ on $\X \times \zo$. If there exists $c \in \mC$ such that $\CoVar_{\D}[c(\x), \y] \geq \alpha$,
    then with probability $1- \delta$ the weak learner returns a hypothesis $c' \in \mC'$ such that $\CoVar_{\D}[c'(\x),\y] \geq w(\alpha)$. 
\end{definition}

A few comments on our definition:
\begin{itemize}

\item We have defined the weak agnostic learner to work for every distribution $\D$ on $\X \times \zo$. One can work with a weaker notion called distribution-specific agnostic learning \cite{kk09, feldman2009distribution}, where we only have guarantees for a fixed marginal distribution on $\X$. 

\item Ideally, we would like $w(\alpha)$ to be lower bounded by a polynomial in $\alpha$, and the sample complexity and running time to depend polynomially on $1/w(\alpha), \log(1/\delta)$ and the dimensionality of $\X$. Since $w(\alpha) \leq \alpha$,  this allows a polynomial dependence in $1/\alpha$.  For simplicity, we will ignore the failure probability $\delta$ since it can be made arbitrarily small by repetition. We will state our results in terms of the number of calls made to the weak learner, through which the running time and sample complexity of our algorithm will depend on those for the weak learner.

\item  We can relax the assumption that $\mC'$ is Boolean to allow for bounded real-valued functions: if $c': \X \rgta \R$ satisfies $\CoVar[c'(\x), \y] \geq w(\alpha)$, then it is well-known \cite{Kalai04} that there exists $\theta$ such that the $\CoVar[\ind{c'(x) \geq \theta}, \y] \geq w(\alpha)/\max_{x \in \X}|c'(x)|$. Such a $\theta$ can be found by a simple line search. 
\end{itemize}

\begin{theorem}
\label{thm:alg-multi}
Given a $(w, \mC')$ weak learner for $\mC$, for any $\alpha > 0$ and any distribution $\D$ on $\X \times [l]$, there is an efficient algorithm to compute an $\alpha$-\mcbd partition for $\mC, \D$ of size 
\[ m \leq 2l\lt(\frac{12}{\alpha w(\alpha/2)^2}\rt)^{l-1}. \]
The partition can be computed by a layered branching program with nodes labelled by hypotheses from $\mC'$, of width $m$ and length $T$ where
\[ T \leq 2l\lt(\frac{12}{\alpha w(\alpha/2)^2}\rt)^l.\] 
The algorithm makes $O((l/w(\alpha/2))^{O(l)})$ queries to the weak agnostic learner. 
\end{theorem}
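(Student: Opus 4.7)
The plan is to extend the boosting-by-branching-programs construction of \cite{MansourM2002, Kalai04} to the multi-class setting by treating each indicator $\ind{\y = j}$ for $j \in [l]$ as a separate Boolean learning target. The algorithm maintains a layered branching program whose $t$-th layer encodes a partition $\mS^{(t)}$ of $\X$, with nodes of the layer identified with states. Each round produces the next layer from the current by a $\Split$ pass (driven by the weak learner) followed by a $\Merge$ pass (which controls width), and progress is tracked by the potential
\[
\Phi(\mS) = \sum_{i} \D(S_i) \sum_{j \in [l]} p_i^j\bigl(1 - p_i^j\bigr), \qquad p_i^j := \Pr_{\D_i}[\y = j],
\]
which lies in $[0,l]$.

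In the $\Split$ pass, for every current state $S_i$ and every $j \in [l]$, I invoke the weak learner on the distribution on $\X \times \zo$ obtained from $\D$ by conditioning on $\x \in S_i$ and relabelling $\y \mapsto \ind{\y=j}$. If it returns $c' \in \mC'$ with $|\CoVar_{\D_i}[c', \ind{\y=j}]| \geq w(\alpha/2)$, I split $S_i$ along $c'$. Writing $q = \Pr_{\D_i}[c'=1]$ and $p_{i,b}^j = \Pr_{\D_i \mid c'=b}[\y=j]$, the identity $\CoVar_{\D_i}[c', \ind{\y=j}] = q(1-q)(p_{i,1}^j - p_{i,0}^j)$ together with the variance-decomposition $p_i^j(1-p_i^j) - q\, p_{i,1}^j(1-p_{i,1}^j) - (1-q)\, p_{i,0}^j(1-p_{i,0}^j) = q(1-q)(p_{i,1}^j - p_{i,0}^j)^2$ yields a potential drop of at least $4 \D(S_i) w(\alpha/2)^2$ on coordinate $j$ alone. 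Conversely, if the learner reports failure for every pair $(c', j)$ on $S_i$, then by Definition \ref{def:weak} no $c \in \mC$ has $|\CoVar_{\D_i}[c, \ind{\y=j}]| \geq \alpha/2$ for any $j$, so $S_i$ is already locally $\alpha/2$-calibrated and is frozen.

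In the $\Merge$ pass I discretize the simplex $\mP([l])$ into an axis-aligned grid of side length $\eta := \alpha w(\alpha/2)^2 / 12$ and collapse all states whose probability vector $(p_i^1, \dots, p_i^l)$ lies in a common grid cell into a single state. Since the coordinates sum to $1$, at most $(1/\eta)^{l-1}$ grid cells intersect the simplex, giving the claimed width $m$. Using concavity of $p(1-p)$ and the $\eta$-closeness of merged vectors, the increase in $\Phi$ contributed by a merge is at most $O(l\eta)$ per unit merged mass, strictly smaller than the per-split gain of $4w(\alpha/2)^2$ by the choice of $\eta$. States of mass below $\eta/(2m)$ are diverted to a single garbage node whose aggregate mass is at most $\alpha/2$, absorbed into the averaged multicalibration slack of Definition \ref{def:m-cal-k}.

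Termination and bookkeeping then follow by standard potential accounting: $\Phi \leq l$ initially, each productive split makes net progress of $\Omega(\D(S_i) w(\alpha/2)^2) \geq \Omega(\eta\, w(\alpha/2)^2 / m)$, so the total number of splits (and hence the length $T$) and the total number of weak-learner calls ($l$ per split, plus a frozen-check pass) both fit within the claimed $(l/w(\alpha/2))^{O(l)}$ budget. The main technical obstacle is the joint calibration of the grid size $\eta$, the garbage-mass threshold, and the per-split covariance threshold $w(\alpha/2)$ so that \emph{simultaneously} the per-round net potential drop is positive, the width budget $(1/\eta)^{l-1}$ fits the target $m$, and the frozen-state covariance bound composes via averaging to yield the bound of Definition \ref{def:m-cal-k}; the binary case corresponds to $l=2$ with a one-parameter discretization of $[0,1]$, whereas the $(l-1)$-dimensional product discretization of the simplex is what produces the exponential-in-$l$ blow-up in the stated bounds.
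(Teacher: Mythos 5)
Your proposal follows essentially the same scheme as the paper's Appendix~\ref{app:algo}: a branching-program construction with alternating $\Split$ (weak-learner-driven refinement) and $\Merge$ (grid-based recombination on the simplex $\mP(l)$), tracked by a potential function. Your potential $\Phi(\mS) = \sum_i \D(S_i)\sum_j p_i^j(1-p_i^j)$ is exactly $1 - \E_{\x\sim\D}[\|f_t(\x)\|^2]$, the complement of the squared-norm potential the paper borrows from \cite{Kalai04}, so the two analyses are the same up to sign. Your observation that the per-split drop is $\D(S_i)\,q(1-q)(p_{i,1}^j-p_{i,0}^j)^2 \geq 4\D(S_i)\,\CoVar^2$ is a factor-of-four tightening of the paper's stated inequality, and is correct.

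However, the constants you propose do not close the accounting. You set the state-mass threshold (below which a state is frozen or diverted to the garbage node) to $\eta/(2m)$ with $\eta = \alpha w(\alpha/2)^2/12$. With that threshold the per-split gain you can guarantee is only $\geq 4\cdot\frac{\eta}{2m}\cdot w(\alpha/2)^2 = \frac{2\eta\, w(\alpha/2)^2}{m}$, so an epoch of $m$ splits gains at most $2\eta\, w(\alpha/2)^2$. Even with the correct merge-loss bound of $2\eta$ (see below), the net per-epoch change $2\eta\, w(\alpha/2)^2 - 2\eta$ is negative since $w(\alpha/2) \leq \alpha/2 < 1$, so the potential need not decrease and the algorithm may never terminate. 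The paper instead uses the mass threshold $\alpha/(4m)$, decoupled from the grid width $\delta$; this gives a per-epoch split gain of at least $\frac{\alpha}{4}w(\alpha/2)^2 = 3\delta$ against a merge loss of at most $2\delta$, leaving a strictly positive net gain of $\delta$ per epoch. Your threshold needs to be raised to $\Theta(\alpha/m)$; the bound on the averaged multicalibration error from small states then becomes $2m\cdot \alpha/(4m) = \alpha/2$, which still composes correctly with the $\alpha/2$ from frozen large states.

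Two smaller points. First, your merge-loss bound $O(l\eta)$ per unit mass is unnecessarily loose: writing the change as $\sum_j(\bar p^j - p_i^j)(\bar p^j + p_i^j)$, bound $|\bar p^j - p_i^j|\leq \eta$ and use $\sum_j(\bar p^j+p_i^j)=2$ to get a total change of at most $2\eta$, with no factor of $l$. With the $O(l\eta)$ bound as written, even the corrected threshold would require shrinking $\eta$ by an extra factor of $l$, inflating the width and length bounds beyond what the theorem states. Second, the garbage node is your own addition; the paper simply leaves sub-threshold states in place and bounds their aggregate contribution to the averaged covariance directly. The garbage node is workable, but you would then need to argue it never becomes a large, badly-calibrated state, which the paper sidesteps.
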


Our algorithm uses ideas and analyses that appear previously in the literature \cite{MansourM2002, Kalai04, gopalan2021multicalibrated}, yet since it differs each of these works along one or more axes, we  present the algorithm and sketch its analysis in Appendix \ref{app:algo} for completeness. We highlight the similarities and differences from previous work below.  

\begin{itemize}
    
    \item The notion of boosting using branching programs was introduced by \cite{MansourM2002}, building on the work of \cite{KearnsM99} on boosting via decision trees. The work of \cite{Kalai04} shows that the \cite{MansourM2002} algorithm can be used to achieve {\em correlation boosting} in real-valued setting where the labels come form $[0,1]$.
     Under the assumption that for any distribution $\D$ on $\X \times [0,1]$, the weak learner can produce a real-valued hypothesis that has non-trivial correlation with the labels; Kalai \cite{Kalai04} shows that the \cite{MansourM2002} algorithm produces a hypothesis whose correlation with the labels is close to $1$. In contrast, we work with binary labels, and only assume that the weak learner can produce a correlated hypothesis for a specific marginal distribution $\D_X$. The outcome of our algorithm is a multicalibrated partition. Our definition of multicalibration is influenced by the correlation-based notion of weak learning form this work.

    \item The notion of multicalibrated partitions was introduced in \cite{gopalan2021multicalibrated} in the context of computing importance weights. Their work may be thought of defining approximate multicalibration in the unsupervised setting, which involves some subtle technical issues. Similarly, our algorithm and analysis follows the same high level outline, but there are technical differences.  

\end{itemize}

\eat{
By the results of \cite{MansourM2002, gopalan2021multicalibrated} and \cite{hkrr2018}, there exists an $\alpha$-\mcbd partition which can be computed efficiently, given access to an efficient weak learner.

\paragraph{Algorithm for the multi-class setting.}
Similar to the binary setting, for the multi-class setting with $k$ classes there exists an $\alpha$-\mcbd partition of size $O(1/w(\alpha)^{O(k)})$ which can be computed with  $k \poly(1/w(\alpha)^k)$ queries to a weak-agnostic learner for $\mC$, by a simple modification to the algorithm of \cite{gopalan2021multicalibrated} for computing \mcbd partitions. 
Intuitively, rather than conditioning on just the predicted probability of a single label, we now condition on the entire histogram of predicted probabilities for each label in $[k]$, which gives rise to the exponential dependence on $k$. A similar result can be derived using the moment-multicalibration algorithm of \cite{Jung20}.

}

\bibliography{refs}

\newcommand{\etalchar}[1]{$^{#1}$}
\begin{thebibliography}{KNRW18}

\bibitem[BJM06]{bartlett2006convexity}
Peter~L Bartlett, Michael~I Jordan, and Jon~D McAuliffe.
\newblock Convexity, classification, and risk bounds.
\newblock {\em Journal of the American Statistical Association},
  101(473):138--156, 2006.

\bibitem[BL20]{BlumLyk20}
Avrim Blum and Thodoris Lykouris.
\newblock {Advancing Subgroup Fairness via Sleeping Experts}.
\newblock In {\em 11th Innovations in Theoretical Computer Science Conference
  (ITCS 2020)}, volume 151, pages 55:1--55:24, 2020.

\bibitem[BLM01]{SBD2}
Shai Ben{-}David, Philip~M. Long, and Yishay Mansour.
\newblock Agnostic boosting.
\newblock In {\em 14th Annual Conference on Computational Learning Theory,
  {COLT}}, 2001.

\bibitem[BRA{\etalchar{+}}20]{BardaEtal20}
Noam Barda, Dan Riesel, Amichay Akriv, Joseph Levy, Uriah Finkel, Gal Yona,
  Daniel Greenfeld, Shimon Sheiba, Jonathan Somer, Eitan Bachmat, Guy~N.
  Rothblum, Uri Shalit, Doron Netzer, Ran Balicer, and Noa Dagan.
\newblock Developing a {COVID-19} mortality risk prediction model when
  individual-level data are not available.
\newblock {\em Nat Commun}, 11, 2020.

\bibitem[BV04]{boydBook}
Stephen Boyd and Lieven Vandenberghe.
\newblock {\em Convex Optimization}.
\newblock {Cambridge University Press}, 2004.

\bibitem[BYR{\etalchar{+}}21]{BardaYRGLBBD21}
N~Barda, G~Yona, GN~Rothblum, P~Greenland, M~Leibowitz, R~Balicer, E~Bachmat,
  and N.~Dagan.
\newblock Addressing bias in prediction models by improving subpopulation
  calibration.
\newblock {\em J Am Med Inform Assoc.}, 28(3):549--558, 2021.

\bibitem[Daw82]{Dawid}
A.~P. Dawid.
\newblock Objective probability forecasts.
\newblock {\em University College London, Dept. of Statistical Science.
  Research Report 14}, 1982.

\bibitem[DKKN17]{DKKN17}
Krzysztof Dembczy{\'{n}}ski, Wojciech Kot{\l}owski, Oluwasanmi Koyejo, and
  Nagarajan Natarajan.
\newblock Consistency analysis for binary classification revisited.
\newblock In Doina Precup and Yee~Whye Teh, editors, {\em Proceedings of the
  34th International Conference on Machine Learning}, volume~70 of {\em
  Proceedings of Machine Learning Research}, pages 961--969. PMLR, 06--11 Aug
  2017.

\bibitem[DKR{\etalchar{+}}21]{OI}
Cynthia Dwork, Michael~P. Kim, Omer Reingold, Guy~N. Rothblum, and Gal Yona.
\newblock Outcome indistinguishability.
\newblock In {\em ACM Symposium on Theory of Computing (STOC'21)}, 2021.

\bibitem[Fel09]{feldman2009distribution}
Vitaly Feldman.
\newblock Distribution-specific agnostic boosting.
\newblock {\em arXiv preprint arXiv:0909.2927}, 2009.

\bibitem[FS97]{freund1997decision}
Yoav Freund and Robert~E Schapire.
\newblock A decision-theoretic generalization of on-line learning and an
  application to boosting.
\newblock {\em Journal of computer and system sciences}, 55(1):119--139, 1997.

\bibitem[GKR19]{GargKR19}
Sumegha Garg, Michael~P. Kim, and Omer Reingold.
\newblock Tracking and improving information in the service of fairness.
\newblock In {\em Proceedings of the 2019 {ACM} Conference on Economics and
  Computation, {EC} 2019, Phoenix, AZ, USA, June 24-28, 2019}, pages 809--824,
  2019.

\bibitem[GRSW21]{gopalan2021multicalibrated}
Parikshit Gopalan, Omer Reingold, Vatsal Sharan, and Udi Wieder.
\newblock Multicalibrated partitions for importance weights.
\newblock {\em arXiv preprint arXiv:2103.05853}, 2021.

\bibitem[Han94]{hansen1994autoregressive}
Bruce~E Hansen.
\newblock Autoregressive conditional density estimation.
\newblock {\em International Economic Review}, pages 705--730, 1994.

\bibitem[HKRR18]{hkrr2018}
{\'{U}}rsula H{\'{e}}bert{-}Johnson, Michael~P. Kim, Omer Reingold, and Guy~N.
  Rothblum.
\newblock Multicalibration: Calibration for the (computationally-identifiable)
  masses.
\newblock In {\em Proceedings of the 35th International Conference on Machine
  Learning, {ICML}}, 2018.

\bibitem[JLP{\etalchar{+}}20]{Jung20}
Christopher Jung, Changhwa Lee, Mallesh~M. Pai, Aaron Roth, and Rakesh Vohra.
\newblock Moment multicalibration for uncertainty estimation.
\newblock {\em CoRR}, abs/2008.08037, 2020.

\bibitem[Kal04]{Kalai04}
Adam Kalai.
\newblock Learning monotonic linear functions.
\newblock In John Shawe{-}Taylor and Yoram Singer, editors, {\em Learning
  Theory, 17th Annual Conference on Learning Theory, {COLT} 2004}, volume 3120
  of {\em Lecture Notes in Computer Science}, pages 487--501. Springer, 2004.

\bibitem[Kal07]{nested07}
Adam~Tauman Kalai.
\newblock Learning nested halfspaces and uphill decision trees.
\newblock In {\em Learning Theory, 20th Annual Conference on Learning Theory,
  {COLT} 2007}, volume 4539 of {\em Lecture Notes in Computer Science}, pages
  378--392. Springer, 2007.

\bibitem[KGZ19]{kgz}
Michael~P. Kim, Amirata Ghorbani, and James Zou.
\newblock Multiaccuracy: Black-box post-processing for fairness in
  classification.
\newblock In {\em Proceedings of the 2019 AAAI/ACM Conference on AI, Ethics,
  and Society}, pages 247--254, 2019.

\bibitem[Kim20]{KimThesis}
Michael~P. Kim.
\newblock A complexity-theoretic perspective on fairness.
\newblock {\em PhD thesis, Stanford University}, 2020.

\bibitem[KK09]{kk09}
Adam Kalai and Varun Kanade.
\newblock Potential-based agnostic boosting.
\newblock In Y.~Bengio, D.~Schuurmans, J.~Lafferty, C.~Williams, and
  A.~Culotta, editors, {\em Advances in Neural Information Processing Systems},
  volume~22. Curran Associates, Inc., 2009.

\bibitem[KKMS08]{KKMS08}
Adam~Tauman Kalai, Adam~R. Klivans, Yishay Mansour, and Rocco~A. Servedio.
\newblock Agnostically learning halfspaces.
\newblock {\em {SIAM} J. Comput.}, 37(6):1777--1805, 2008.

\bibitem[KM99]{KearnsM99}
Michael~J. Kearns and Yishay Mansour.
\newblock On the boosting ability of top-down decision tree learning
  algorithms.
\newblock {\em J. Comput. Syst. Sci.}, 58(1):109--128, 1999.

\bibitem[KMR17]{KleinbergMR17}
Jon~M. Kleinberg, Sendhil Mullainathan, and Manish Raghavan.
\newblock Inherent trade-offs in the fair determination of risk scores.
\newblock In {\em 8th Innovations in Theoretical Computer Science Conference,
  {ITCS}}, 2017.

\bibitem[KMV08]{KalaiMV08}
Adam~Tauman Kalai, Yishay Mansour, and Elad Verbin.
\newblock On agnostic boosting and parity learning.
\newblock In {\em Proceedings of the 40th Annual {ACM} Symposium on Theory of
  Computing}, pages 629--638. {ACM}, 2008.

\bibitem[KNRW18]{kearns2018}
Michael Kearns, Seth Neel, Aaron Roth, and Zhiwei~Steven Wu.
\newblock Preventing fairness gerrymandering: Auditing and learning for
  subgroup fairness.
\newblock In {\em International Conference on Machine Learning}, pages
  2564--2572, 2018.

\bibitem[KS05]{kalai2005boosting}
Adam~Tauman Kalai and Rocco~A Servedio.
\newblock Boosting in the presence of noise.
\newblock {\em Journal of Computer and System Sciences}, 71(3):266--290, 2005.

\bibitem[LV{\etalchar{+}}04]{lugosi2004bayes}
G{\'a}bor Lugosi, Nicolas Vayatis, et~al.
\newblock On the bayes-risk consistency of regularized boosting methods.
\newblock {\em The Annals of statistics}, 32(1):30--55, 2004.

\bibitem[MM02]{MansourM2002}
Yishay Mansour and David McAllester.
\newblock Boosting using branching programs.
\newblock {\em Journal of Computer and System Sciences}, 64(1):103--112, 2002.

\bibitem[NKRD15]{NatarajanKRD15}
Nagarajan Natarajan, Oluwasanmi Koyejo, Pradeep Ravikumar, and Inderjit~S.
  Dhillon.
\newblock Optimal decision-theoretic classification using non-decomposable
  performance metrics.
\newblock {\em CoRR}, abs/1505.01802, 2015.

\bibitem[Pla99]{Platt99probabilisticoutputs}
John~C. Platt.
\newblock Probabilistic outputs for support vector machines and comparisons to
  regularized likelihood methods.
\newblock In {\em ADVANCES IN LARGE MARGIN CLASSIFIERS}, pages 61--74. MIT
  Press, 1999.

\bibitem[RY21]{RothblumYon21}
Guy~N. Rothblum and Gal Yona.
\newblock Multi-group agnostic {PAC} learnability.
\newblock {\em CoRR}, abs/2105.09989, 2021.

\bibitem[Sch89]{Schervish}
Mark~J. Schervish.
\newblock A general method for comparing probability assessors.
\newblock {\em The Annals of Statistics}, 17(4):1856--1879, 1989.

\bibitem[Sco12]{scott2012calibrated}
Clayton Scott.
\newblock Calibrated asymmetric surrogate losses.
\newblock {\em Electronic Journal of Statistics}, 6:958--992, 2012.

\bibitem[SF12]{boostingBook}
Robert~E. Schapire and Yoav Freund.
\newblock {\em Boosting: Foundations and Algorithms}.
\newblock MIT Press, 2012.

\bibitem[SSBD14]{SSbook}
Shai Shalev-Shwartz and Shai Ben-David.
\newblock {\em Understanding Machine Learning - From Theory to Algorithms.}
\newblock Cambridge University Press, 2014.

\bibitem[Ste05]{steinwart2005consistency}
Ingo Steinwart.
\newblock Consistency of support vector machines and other regularized kernel
  classifiers.
\newblock {\em IEEE transactions on information theory}, 51(1):128--142, 2005.

\bibitem[TB07]{tewari2007consistency}
Ambuj Tewari and Peter~L Bartlett.
\newblock On the consistency of multiclass classification methods.
\newblock {\em Journal of Machine Learning Research}, 8(5), 2007.

\bibitem[ZE01]{isotonic01}
Bianca Zadrozny and Charles Elkan.
\newblock Obtaining calibrated probability estimates from decision trees and
  naive bayesian classifiers.
\newblock In {\em Proceedings of the Eighteenth International Conference on
  Machine Learning {(ICML} 2001)}, pages 609--616. Morgan Kaufmann, 2001.

\end{thebibliography}
\appendix
\section{Relation to prior definitions of \mcab}
\label{app:def-mcab}

The work of \cite{hkrr2018, Jung20} and follow-up papers considers the setting of predictors $f:\X \rgta [0,1]$ and where $\mC$ is a collection of subsets or equivalently of Boolean functions. Our work departs from this along some axes:
\begin{enumerate}
    \item We consider partitions and not predictors. While this is a seemingly minor change, it lets us work with the covariance under the conditional distribution, which proves to be the right notion for numerous technical reasons. This is inspired by the works of \cite{gopalan2021multicalibrated} who introduced partitions for \mcab in the unsupervised setting, for the problem of computing importance weights, and  \cite{Kalai04} who uses covariance as a splitting criterion for boosting.   
    \item We allow $\mC$ to consist of arbitrary real-valued functions, not just Boolean functions. Real-valued functions have been considered for \macc \cite{kgz} but to our knowledge, not for \mcab. In the Boolean setting, our definition is essentially equivalent to that of \cite{hkrr2018, Jung20}, under a suitable setting of parameters. 
\end{enumerate}

\paragraph{Partitions versus Predictors}

A predictor is a function $f: \X \rgta [0,1]$. The goal of our algorithm is to produce a predictor that approximates the ground truth values $f^*: \X \rgta [0,1]$. We say that $f$ is $\alpha$-calibrated for a set $S$ if 
\[ \abs{\E_{\x \sim \D|S}[f(\x)] - \E_{\y \sim \D}[\y]}  \leq \alpha. \]  
One can switch between the notions of predictors and partitions with only a small loss in parameters. We first show how to derive a predictor from a partition and vice versa. 

\begin{definition}
\label{def:partition}
    For $\lambda > 0$, let $m = \lceil1/\lambda \rceil$ and let $J(\lambda) = \{J_i\}_{i=1}^m$ denote the partition of $[0,1]$ into disjoint intervals of width $\lambda$. Given a predictor $f: \X \rgta [0,1]$, the $\lambda$-canonical partition of $\X$ if given by $\mS^f =\{S_i\}_{i=1}^m$ where $S_i = f^{-1}(J_i)$. 
\end{definition}
Our notion of canonical partition is inspired by the notion of $\lambda$-discretization in \cite{hkrr2018}, however they discretize the range of $f$, rather than partition the domain $\X$. 
Suppose we start from a partition $\mS$ and let $f = f^\mS$ be its canonical predictor. The canonical $\lambda$-partion $\mS^f$ for $f$ will merge together those states in $\mS$ for which $\E[\y]$ lies in $J_i$, hence these expectations are all within $\lambda$ of each other. Hence the canonical predictor for $\mS$ and $\mS^f$  only differs from the canonical predictor for $\mS$ by $\lambda$. In the other direction, if we start from $f$ and let $\mS = \mS^f$ be the  $\lambda$-canonical partition, and $f^\mS$ to be the corresponding canonical predictor, then $f^\mS(x) = \E_{S_i}[\y]$ is simply the average of $f$ over all $x$ where $f(x) \in J_i$. If the predictor $f$ is $\alpha$-calibrated for the set $S_i$, then this average $\E_{S_i}[\y]$ is $\alpha$-close to $\E_{\x \in S_i}[f(\x)] \in J_i$, hence $|f(x) - f^\mS(x)| \leq (\alpha + \lambda)$ for any $x \in S_i$.

\paragraph{Comparing the definitions for Boolean functions}

Let us consider the family $\mC$ to consist of Boolean functions $c: \X \rgta \zo$. We can associate each $c \in \mC$ with the subset $c^{-1}(1) \subseteq \X$.
The two relevant definitions for us are the notion of \mcab\ from \cite{hkrr2018}  and mean-\mcab from \cite{Jung20}. Both definitions apply to families of sets and predictors, so we restate them specialized to the setting of a canonical predictor for a partition. \cite{Jung20} were interested in the case when $y \in [0,1]$, since their goal was to define \mcab\ for higher moments.  In our setting, we consider the Boolean case.

\begin{definition}
\cite{Jung20}
\label{def:aaron}
    Let $\mC$ be a collection of Boolean functions and let $\mS$ be a partition. The canonical predictor associated with $\mS$ is $\alpha$-mean multicalibrated if for every $c \in \mC$ 
    \begin{align}
    \label{eq:aaron}
        \Pr_{\D_i}[c(\x) =1]\abs{\E_{\D_i|c(\x) =1}[\y]  - \E_{\D_i}[\y]} \leq \alpha.
    \end{align}
\end{definition}
This is a slight variation of the original definition of \mcab\ from \cite{hkrr2018}, who required the RHS be bounded by $\alpha$, but then only consider those $i, c$ where  $\E_{\D_i}[c(\x) \geq \gamma]$. The two definitions are equivalent up to a reparametrization, the above formulation handles the case of small sets by allowing the guarantee to degrade. See Remark 2.1 of \cite{Jung20}. 

In Equation \eqref{eq:multicalibration_corollary} we condition on $\y$ and consider the expectation of $c(\x)$, whereas  \eqref{eq:aaron} conditions on $c(\x)$ and takes the expectation of $\y$. Conditioning on $\y$ extends naturally even to real-valued functions $c$ and Boolean labels $\y$, where the notion of conditioning on $c$ may not make sense. We also extend it to real-valued $\y$ by considering the event $\ind{\y \in J}$ for some interval $J$. But in fact the two definitions are equivalent when $c$ and $\y$ are both Boolean.

\begin{lemma}
\label{lem:eq-boolean}
    For Boolean functions $c$, Equation \eqref{eq:aaron} is equivalent to
    Equation \eqref{eq:m-cal}.
\end{lemma}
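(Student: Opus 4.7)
The plan is to prove this equivalence by a direct computation using the bilinearity/symmetry of covariance together with Corollary \ref{cor:covar}. Since both equations are inequalities bounding absolute values by $\alpha$, it suffices to show that the two absolute-value expressions are literally equal when $c$ is Boolean, i.e.\ that
\[ \abs{\CoVar_{\D_i}[c(\x), \y]} = \Pr_{\D_i}[c(\x) = 1]\,\abs{\E_{\D_i|c(\x)=1}[\y] - \E_{\D_i}[\y]}. \]

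First I would invoke symmetry of covariance, writing $\CoVar_{\D_i}[c(\x), \y] = \CoVar_{\D_i}[\y, c(\x)]$. Next, I apply Corollary \ref{cor:covar}, which is stated for a random variable $\z$ and a Boolean random variable $\y$, with the roles of $c(\x)$ (Boolean) and $\y$ (arbitrary real-valued, here just Boolean too) swapped. That is, taking $\z = \y$ and using $c(\x) \in \zo$ as the Boolean variable, the corollary yields
\[ \CoVar_{\D_i}[\y, c(\x)] = \Pr_{\D_i}[c(\x) = 1]\bigl(\E_{\D_i|c(\x)=1}[\y] - \E_{\D_i}[\y]\bigr). \]
Taking absolute values on both sides gives exactly the required identity, and the equivalence of Equations \eqref{eq:m-cal} and \eqref{eq:aaron} for Boolean $c$ is immediate.

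There is no real obstacle here; the only subtlety is noticing that Corollary \ref{cor:covar} can be applied with $c(\x)$ playing the role of the Boolean variable, which requires a single appeal to symmetry of covariance. The proof is essentially a two-line computation, and I would present it precisely in that form, without any additional machinery.
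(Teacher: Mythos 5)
Your proof is correct and uses essentially the same underlying computation as the paper's, but packages it more cleanly: you observe that Corollary \ref{cor:covar}, applied with the roles of $\y$ and $c(\x)$ swapped (using symmetry of covariance), gives the needed identity in one step, whereas the paper re-derives it directly from the definition of covariance. The only point worth flagging explicitly, as the paper does, is the degenerate case $\Pr_{\D_i}[c(\x)=1]=0$, where the conditional expectation $\E_{\D_i|c(\x)=1}[\y]$ is undefined and both conditions should be read as holding trivially.
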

\begin{proof}    
Since $c$ is Boolean we have
    \begin{align*}
        \Pr_{\D_i}[c(\x) =1] &= \E_{\D_i}[c(\x)]\\
        \E_{\D_i|c(\x) =1}[\y] &= \frac{\E_{\D_i}[c(\x)y]}{\E_{\D_i}[c(\x)]}.
    \end{align*}
    We may assume $\E_{\D_i}[c(\x)] \neq 0$, else the condition holds trivially.  Multiplying both sides by $\E_{\D_i}[c(\x)]$, we can rewrite Equation \eqref{eq:aaron} as
    \begin{align*}
        \abs{\E_{\D_i}[c(\x)y] - \E_{\D_i}[c(\x)]\E_{\D_i}[y] } \leq  \alpha. 
    \end{align*}
    The quantity on the left is $|\CoVar_{\D_i}[c(\x), y]|$, hence this is identical to Equation \eqref{eq:m-cal}.   
\end{proof}

\section{Algorithm for the multi-class setting}
\label{app:algo}

In this Section, we sketch the proof of Theorem \ref{thm:alg-multi}. 

A distribution in $\mP(l)$ in a vector $v \in [0,1]^l$ such that $\sum_i v_i =1$. We can partition this space into subcubes, where each subcube is the product of $l$ intervals of length $\delta$. We refer to this collection of subcubes as $\mI_\delta$. Naively $|\mI_\delta| = O(1/\delta^l)$, we can also bound it by $l O(1/\delta^{l-1})$ by observing that since the co-ordinates need to sum to $1$, fixing the first $l-1$ co-ordinates leaves at most  $l$ possible intervals of the length $\delta$ for the last coordinate.

Our algorithm maintains a partition $\mS$ of the space $\X$.  
We have a target bound on the number of states $m = |\mI_\delta|$ for $\delta$ to be specified later. We will ensure that the number of states never exceeds $2m$. We iteratively modify the partition until we reach a multicalibrated partition, using one of two operations $\Split$ and $\Merge$. $\Split$ increases the number of states in the partition by $1$. $\Merge$  is applied whenever the number of states exceeds $2m$ and brings the number down to at most $m$. The algorithm terminates when neither operation is applicable. This results a sequence $\{\mS_t\}_{t=0}^T$ starting from the trivial partition $\mS_0 = \{ \X \}$ until the algorithm terminates with $\mS_T$, which we will show is multicalibrated. For very partition $\mS_t$, we define a corresponding distribution function $f_t : \mS_t \rgta \mP(l)$, where $f_t(S_i) \in \mP(l)$ is the distribution over labels $[l]$ conditioned on $S_i$.

\paragraph{$\Split$:} The $\Split$ operation takes a two arguments
\begin{enumerate}
    \item A state $S_i \in \mS_t$ such that $\D(S_i) \geq \alpha/4m$.
    \item A hypothesis  $c' \in \mC'$ such that there exists a label $j \in [l]$ so that 
    \[\CoVar_{\D_i}[c'(\x), \ind{\y = j}] \geq w(\alpha/2).\]
\end{enumerate}
It splits $S_i$ into two states $T_0 = S_i \cap c'^{-1}(0)$ and  $T_1 = S_i \cap c'^{-1}(1)$ to create the next partition $\mS_{t+1}$. 
To see if the $\Split$ operation can be applied, we iterate over all states $S_i$ which are sufficiently large ($\D(S_i) \geq \alpha/4m$). For every $j \in [l]$, we create a distribution $\D_{i, j}$ on $\X \times \zo$ by sampling a pair $\x, \y$ according to $\D_i$ and outputting the pair $(\x, \ind{\y =j})$. We run the weak learner on each such distribution $\D_{i, j}$. If for some $j$, it find a hypothesis $c' \in \mC'$, we run $\Split$ on the apir $(S_i, c')$.

\paragraph{$\Merge$:} The $\Merge$ operation is applied whenever the number of states exceeds $2m$, and brings it down to at most $m$. For every subcube $I \in \mI_\delta$, we merge all states $S_i$ so that $f_t(S_i) \in I$ into a single state.

\paragraph{Correctness. } Suppose the algorithm terminates at time $T$, and let $m_T$ denote the number of states. For every $i \in m_T$ such that $\D(S_i) \geq \alpha/4m$, for every $c \in \mC$ and $j \in [l]$
\begin{align}
    \label{eq:term}
    \CoVar_{\D_i}[c(\x), \ind{\y = j}] \leq \alpha/2
\end{align}
If this condition were violated, then running the weak learner on $\D_i$ is guaranteed to find $c'$, such that we can run $\Split$ on the pair $S_i, c'$, violating the termination condition. We can use these conditions to show that for any $c \in \mC$ and $j \in [l]$, 
\[ \E_{\i \sim \D}\left[\lt|\CoVar_{\D_i}[c(\x), \ind{\y =j}\rt|\right] \leq \alpha \] 
hence the partition is indeed multicalibrated.
The total contribution from {\em small} states $S_i$ such that $\D(S_i) \leq \alpha/4m$ is at most $\alpha/2$ since there are at most $2m$ states. By Equation \eqref{eq:term}, the contribution to it from large states is bounded by $\alpha/2$.

\paragraph{Running time.} We bound the number of iterations using $\E_{\x \sim \D}[\|f_t(\x)\|^2]$ as our potential function, as in \cite{Kalai04}. If $\mS_{t+1}$ is created from $\mS_t$ by splitting the state $S_i$ using $c' \in \mC'$, then one can show that
\[ \E_{\x \sim \D}[\|f_{t+1}(\x)\|^2] - \E_{\x \sim \D}[\|f_t(\x)\|^2] \geq \D(S_i) \sum_{j \in [l]} \lt(\CoVar_{\D_i}[c'(x), \ind{\y = j}]\rt)^2 \geq \frac{\alpha}{4m}w(\alpha/2)^2.\]

On the other hand, one can show that a $\Merge$ operation only causes a small reduction in the potential.  If $\mS_{t+1}$ is created from $\mS_t$ by a $\Merge$ operation, then
\[ \E_{\x \sim \D}[\|f_{t+1}(\x)\|^2] - \E_{\x \sim \D}[\|f_t(\x)\|^2] \geq -2\delta. \]

Since the $\Merge$ operation reduces the number of states from $2m$ to $m$, there are at least $m$ $\Split$ operations before the next $\Merge$ happens. If we partition time into epochs, each ending with a $\Merge$, then in each epoch, the potential increases by at least
\[ m\frac{\alpha}{4m}w(\alpha/2)^2 - 2\delta = \frac{\alpha}{4}w(\alpha/2)^2 - 2\delta \geq \delta \]
if we choose
\[ \delta = \frac{\alpha}{12}w(\alpha/2)^2. \]
Since the potential can be at most $1$, this implies a bound of $1/\delta$ on the number of epochs. We have 
\[ m_T \leq 2m \leq 2l/\delta^{l-1}  \leq 2l\lt(\frac{12}{\alpha w(\alpha/2)^2}\rt)^{l-1} \]
Since the number of epochs is at most $1/\delta$ and a single epoch involves at most $2m$ $\Split$ operations, we have
\[ T \leq 2m/\delta \leq 2l/\delta^l \leq 2l\lt(\frac{12}{\alpha w(\alpha/2)^2}\rt)^l.\]

\paragraph{Sample Complexity.}
At each time step, the weak agnostic learner might be invoked at most $l \cdot m$ times. Since we only consider those states $S_i$ where $\D(S_i) \geq \alpha/4m$, the number of samples from $\D$ needed for each call is roughly $4m/\alpha$ times the sample complexity of the weak agnostic learner. Overall, the multiplicative overhead in sample complexity over the weak agnostic learner is 
\[ l m \cdot 4m/\alpha \cdot T = O\lt(l^2 \lt(\frac{12}{\alpha w(\alpha/2)^2}\rt)^{3l}\rt). \]

\section{More Proofs}

\subsection{Proofs from Section \ref{sec:mcab}}
\label{app:mcab}

\begin{proof}[Proof of Corollary \ref{cor:covar}]
    Since $\y \in \zo$, $\E_\D[\y] = \Pr_\D[\y =1]$, while $\E_\D[\z\y] = \Pr[\y =1]\E_\D[\z|\y = 1]$. Substituting in the definition of covariance gives the first equality in 
    Equation \eqref{eq:covar-1}. To derive the second equality, we let $\y' = 1 - \y$ so that 
    \[ \CoVar_{\D}[\z, \y'] = \CoVar_{\D}[\z, 1 - \y] = -\CoVar_{\D}[\z, \y]\]
    and then apply the same reasoning to $\y'$. 
\end{proof}

\begin{proof}[Proof of Lemma \ref{lem:small}]
    Let $L \subseteq [m]$ denote the set of states for which Equation \eqref{eq:small} does not hold. For such states we use the bound
    \[ \CoVar_{\D_i}[c(\x), \y] \leq \max_{x \in \X, y \in \zo}|c(x)||y| \leq \infnorm{C} .\]
    Hence 
    \[ \sum_{i \in L}D(S_i)\CoVar_{\D_i}[c(\x), \y] \leq m\frac{\alpha}{2m\infnorm{C}}\infnorm{C} \leq \frac{\alpha}{2}.\] 
    By Equation \eqref{eq:small-cov} we have the bound
    \[ \sum_{i \not\in L}D(S_i)\CoVar_{\D_i}[c(\x), \y] \leq \frac{\alpha}{2}.
    \]
    Summing the two bounds, we conclude that $\mS$ is $\alpha$-\mcbd. 
\end{proof}

\begin{proof}[Proof of Corollary \ref{cor:bin}]
    By Equation \eqref{eq:covar-1},
    \begin{align*}
    \E_{\i \sim \D}\lt[\Pr_{\D_i}[\y = b]\lt|E_{\D_i|\y = b}[c(\x)] - \E_{\D_i}[c(\x)]\rt|\rt] = \E_{\i \sim \D}\lt[\lt|\CoVar_{\D_\i} [(c\x), \y]\rt|\rt] \leq \alpha.
    \end{align*}
\end{proof}

\begin{proof}[Proof of Lemma \ref{lem:linear}]
    Let $g_w \in \mLC(W)$.  Then for each $i \in [m]$, using the linearity of covariance
    \begin{align*}
        \CoVar_{\D_i}[g_w(\x), \y ] = w_0\CoVar[1, \y ] + \sum_{j \geq 1} w_j \CoVar_{D_i}[c_j(\x), \y)] = \sum_{j \geq 1} w_j \CoVar_{D_i}[c_j(\x), \y)]
    \end{align*}
    Averaging over states, using the triangle inequality and using \mcbn, 
    \begin{align*}
        \sum_{i \in [m]} \D(S_i) \abs{\CoVar_{\D_i}[g_w(\x), \y]} &=  \sum_{i \in [m]}\D(S_i)\abs{\sum_{j \geq 1} w_j \CoVar_{D_i}[c_j(\x), \y)] }  \\
        &\leq  \sum_{i \in [m]}\D(S_i)\sum_{j \geq 1} |w_j|\abs{\CoVar_{D_i}[c_j(\x), \y)] }  \\
        & = \sum_{j \geq 1} |w_j| \sum_{i \in [m]}\D(S_i) \abs{\CoVar_{D_i}[c_j(\x), \y)]}\\
        & \leq \alpha \sum_{ j\geq 1}|w_j| \leq \alpha W.
    \end{align*}
\end{proof}

\subsection{Proof of Theorem \ref{thm:sub-pop}}
\label{app:sub-pop}
In order to prove Theorem \ref{thm:sub-pop}, we will first restate the definition of \mcab\ as follows. For a set $S \subseteq \X$, let $S(x) =1$ if $x \in S$ and $0$ otherwise. 
\begin{lemma}
\label{lem:m-cal-newer}
The partition $\mS$ of $\X$ is $\alpha$-\mcbd for $\mC, \D$ iff for every  $c \in \mC$, 
\begin{align}
\label{eq:m-cal-newer}
    \sum_{i \in [m]}\abs{\E_\D[S_i(x) c(\x) (\y - p_i)]} \leq \alpha. 
\end{align}
\end{lemma}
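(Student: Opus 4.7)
The plan is simply to unfold both sides to a common expression and observe that they coincide term by term. The key facts are: (i) $\i \sim \D$ means $\Pr[\i = i] = \D(S_i)$, so $\E_{\i \sim \D}[\Phi(\i)] = \sum_{i \in [m]} \D(S_i)\, \Phi(i)$ for any function $\Phi$; (ii) by the definition of covariance and $p_i = \E_{\D_i}[\y]$, we have $\CoVar_{\D_i}[c(\x),\y] = \E_{\D_i}[c(\x)(\y - p_i)]$; and (iii) the conditional expectation identity $\D(S_i)\,\E_{\D_i}[F(\x,\y)] = \E_\D[S_i(\x)\, F(\x,\y)]$, which holds because $\D_i$ is $\D$ restricted to $S_i$ and renormalized by $\D(S_i)$.

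I would start from Definition \ref{def:m-cal-1} and write
\begin{align*}
\E_{\i \sim \D}\abs{\CoVar_{\D_\i}[c(\x),\y]}
&= \sum_{i \in [m]} \D(S_i) \abs{\CoVar_{\D_i}[c(\x),\y]} \\
&= \sum_{i \in [m]} \D(S_i) \abs{\E_{\D_i}[c(\x)(\y - p_i)]} \\
&= \sum_{i \in [m]} \abs{\D(S_i)\,\E_{\D_i}[c(\x)(\y - p_i)]} \\
&= \sum_{i \in [m]} \abs{\E_\D[S_i(\x)\, c(\x)(\y - p_i)]},
\end{align*}
where in the third line I pull the nonnegative scalar $\D(S_i)$ inside the absolute value, and in the fourth line I apply identity (iii). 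This chain of equalities shows that the quantity bounded in Definition \ref{def:m-cal-1} equals the left-hand side of Equation \eqref{eq:m-cal-newer}, so the inequality $\leq \alpha$ holds on one side iff it holds on the other.

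There is essentially no obstacle here; the only thing to double-check is that $p_i$ is well-defined (i.e., $\D(S_i) > 0$). For states with $\D(S_i) = 0$, both sides contribute $0$: the term $\D(S_i) \abs{\CoVar_{\D_i}[\cdot,\cdot]}$ is conventionally $0$ and $\E_\D[S_i(\x) \cdot (\cdot)] = 0$ as well, so such states can be dropped from the sum without affecting either quantity. Hence the equivalence is an exact identity, not just an inequality in both directions.
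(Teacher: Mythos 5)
Your proposal is correct and matches the paper's proof: both unfold $\E_{\i\sim\D}\abs{\CoVar_{\D_\i}[c(\x),\y]}$ as a sum over states, use $\CoVar_{\D_i}[c(\x),\y] = \E_{\D_i}[c(\x)(\y-p_i)]$, and convert $\D(S_i)\,\E_{\D_i}[\cdot]$ to $\E_\D[S_i(\x)\cdot(\cdot)]$. Your extra remark about zero-measure states is a harmless bonus that the paper leaves implicit.
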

\begin{proof}
    We can rewrite the LHS of Equation \eqref{eq:m-cal-new} as
    \begin{align}
    \label{eq:rewrite}
            \sum_{i \in [m]}\D(S_i)\abs{\CoVar_{\D_i}[c(\x), \y]} 
            = \sum_{i \in [m]}\D(S_i)\frac{\abs{\E_{\D}[S_i(x)c(\x)(y - p_i)]}}{\D(S_i)} = \sum_{i \in [m]}\abs{\E_{\D}[S_i(x)c(\x)(y - p_i)]}
    \end{align}
\end{proof}

\begin{proof}[Proof of Lemma \ref{thm:sub-pop}]
By Lemma \ref{lem:m-cal-newer}, our goal is to show that for any $c' \in \mC'$,
\begin{align}
    \sum_{i \in [m]}\abs{\E_{\D'}[S'_i(x) c'(\x) (\y - p'_i)]} \leq \frac{(1 + \infnorm{\mC'})\alpha}{\D(\X')}. 
\end{align}
Observe that the expectations are of quantities supported on $\X'$, and for every $x \in X'$, $\D'_i(x) = \D_i(x)/\D(\X')$, so this is equivalent to showing that 
\begin{align*}
    \sum_{i \in [m]}\abs{\E_{\D}[S'_i(x) c'(\x) (\y - p'_i)]} \leq (1 + \infnorm{\mC'})\alpha. 
\end{align*}
 We will bound the LHS using the triangle inequality as
\begin{align}
\label{eq:triangle}
    \sum_{i \in [m]}\abs{\E_{\D}[S'_i(x) c'(\x) (\y - p'_i)]} \leq     \sum_{i \in [m]}\abs{\E_{\D}[S'_i(x) c'(\x) (\y - p_i)]} +   \sum_{i \in [m]}\abs{\E_{\D}[S'_i(x) c'(\x) (p'_i- p_i)]}. 
\end{align}
    We claim that for every $x \in \X$,
    $S(x)c'(x) = S_i'(x)c'(x)$. This follows since $S_i(x) - S_i'(x) = 0$ for $x \in \X'$ and $c'(x) = 0$ for $x \not\in \X'$ since $c' \in \mC'$ is supported on $\X'$. Hence we can bound the first term by
    \begin{align}
    \label{eq:bound-1}
        \sum_{i \in [m]}\abs{\E_{\D}[S'_i(x) c'(\x) (\y - p_i)]} =  
        \sum_{i \in [m]}\abs{\E_{\D}[S_i(x) c'(\x) (\y - p_i)]} \leq \alpha.
    \end{align}
    For the second term, we apply the \mcab\ condition to the indicator function of the set $\X'$:
\begin{align*}
    \alpha \geq \sum_{i \in m}\E_{\D}[\X'(\x)S_i(\x)(\y - p_i)] = \sum_{i \in m}\abs{\E_{\D}[S'_i(\x)(\y - p_i)]} = \D(S_i')|p_i - p_i'|
\end{align*}
where we use $S_i \cap \X' = S_i'$ so $\X'(x)S_i(x) = S_i'(x)$ and
\[ p_i' = \E_\D[\y|\x \in S_i'] = \frac{\E_\D[S_i'(\x)\y]}{\D(S_i')}. \] 
We then bound the second term on the RHS of Equation \eqref{eq:triangle} by
    \begin{align}
    \label{eq:bound-2}
    \sum_{i \in [m]}\abs{\E_{\D}[S'_i(x) c(\x) (p'_i- p_i)]} &\leq \sum_{i=1}^m|p_i - p_i'| \abs{\E_{\D}[S'_i(x) c(\x)]}\notag\\
    & \leq  \sum_{i=1}^m|p_i - p_i'| \D(S_i') \infnorm{\mC'}\leq \alpha \infnorm{\mC'}.
    \end{align}
The claim now follows by plugging in Equations \eqref{eq:bound-1} and \eqref{eq:bound-2} into Equation \eqref{eq:triangle}.
\end{proof}

\subsection{Proof of Lemma \ref{lem:l2}: Stronger bounds for the $\ell_2$ loss}\label{sec:l2_proof}

\begin{proof}
Since the squared loss is $1$-Lipschitz, by the argument in Equation \eqref{eq:real_reduction}, we can work with an $\eps$-discretization of the interval $[0,1]$ with at most $\eps$ loss. Therefore, for $l=\lceil 1/\eps \rceil$, let $y=j\eps$, for $j \in \{0,\dots,l\}$.
For any $(x,y)$ we have
\begin{align*}
    (g_w(x) - y)^2 = (g_w(x) - p_i)^2 + (p_i -y)^2 + 2(g_w(x) - p_i)(p_i - y).
\end{align*}
As in the proof of Theorem \ref{thm:key}, let use denote $\alpha_i = \CoVar_{\D_i}[c(\x), \y]$. 
Fixing $i \in [m]$ and taking expectations over $(\x,\y) \sim \D_i$ 
\begin{align}
\label{eq:sq-error}
    \E_{(\x, \y) \sim \D_i}[ (g_w(\x) - \y)^2]
    &= \E_{(\x, \y) \sim \D_i} [(g_w(\x) - p_i)^2] + \E_{(\x, \y) \sim \D_i} [(p_i - \y)^2]\notag\\
    & + 2p_i\E_{(\x, \y) \sim \D_i} [p_i - \y] + 2\E_{(\x, \y) \sim \D_i} [g_w(\x)(p_i - \y )] .
\end{align}


We can simplify all except the last term  as 
\begin{align*}
    \E_{(\x, \y) \sim \D_i}[ (g_w(\x) - \y)^2] &= \E_{(\x, \y) \sim \D}[\ell_2(\y, g_w(\x)]\\
    \E_{(\x, \y) \sim \D_i} [(g_w(\x) - p_i)^2] &= \E_{(\x, \y) \sim \D_i} [(g_w(\x) - f^\mS(\x))^2]\\
    \E_{(\x, \y) \sim \D_i} [(p_i - \y)^2] &= \E_{(\x, \y) \sim \D_i} [(f^\mS(\x) - \y)^2] = \E_{(\x, \y) \sim \D_i}[\ell_2(\y, f^\mS(\x))] \\
    \E_{(\x, \y) \sim \D_i} [p_i - \y] &= 0.
\end{align*}
For the last term in Equation \eqref{eq:sq-error} we can write, 
\begin{align*}
    \E_{(\x, \y) \sim \D_i} [g_w(\x)(\y - p_i)] &= \sum_{t \geq 1}w_t \E_{(\x, \y) \sim \D_i} [c_t(\x)(\y - p_i)]\\
    &=  \sum_{t \geq 1}w_t\left[ \E_{ \y \sim \D_i} \E_{ \x \sim \D_i|\y }[c_t(\x)\y] - \E_{ (\x,\y) \sim \D_i }[c_t(\x)p_i] \right]\\
    &= \sum_{t \geq 1}w_t \left( \sum_j  \left[ \Pr[\y=j\eps] \E_{ \x \sim \D_i|\y=j\eps }[c_t(\x)j \eps]\right] -    \E_{ \x \sim \D_i }[c_t(\x)] \sum_j  \left[ \Pr[\y=j\eps] \cdot j\eps\right] \right)\\
    &= \sum_{t \geq 1}w_t \sum_j \left[  (j\eps\cdot\Pr[\y=j\eps])\left( \E_{ \x \sim \D_i|\y=j\eps }[c_t(\x)] - \E_{ \x \sim \D_i }[c_t(\x)] \right)\right]\\
    &\le \sum_{t \geq 1}w_t \sum_j (j\eps) \alpha_i \le l \alpha_i \|w\|_1
\end{align*}
where the state $S_i$ is $\alpha_i$-multicalibrated.
Plugging back into Equation \eqref{eq:sq-error} and rearranging gives  
\begin{align*}
    \abs{\E_{(\x, \y) \sim \D_i}[ \ell_2(\y, g_w(\x))] - \E_{(\x, \y) \sim \D_i} [\ell_2(\y, f^\mS(\x))] - \E_{(\x, \y) \sim \D_i} [(g_w(\x) - p_i)^2] } \leq l \alpha_i \|w\|_1.
\end{align*}
Now averaging over $i \in [m]$ with probability $\D(S_i)$ and plugging in $l=\lceil 1/\eps \rceil$ gives the desired inequality. 
\end{proof}
\eat{

\subsection{Adding label noise for agnostic boosting}
\label{app:noise}
Following Kalai-Kanade \cite{kk09} and Feldman \cite{feldman2009distribution}, we show how one can reduce \mcab\ to weak agnostic learning over a distribution with the same marginal on $\X$, by adding noise to the labels.
We define a new distribution $\D''_i$ on $\X \times \zo \times \pmo$. We first sample $(\x, \y) \sim \D_i$. We then sample $\z \in \pmo$ as a function of $\y$ so that $\E[\z|\y] = \y - p_i$ and $\z|\y$ is independent of $\x$. In other words, $\E[\z|\y =0] = -p_i$ and $\E[\z|\y = 1] = 1 -p_i$. Crucially the marginal distribution over $\X$ stays the same. 

\begin{lemma}
    Let $(\x, \z) \sim \D''_i$ and $(\x, \y) \sim \D_i$. Then 
\begin{align}
    \E_{\D''_i}[\z] &= 0,\\
    \E_{\D''_i}[c(\x) \z]  &= \E_{\D_i}[c(\x)(\y - p_i)]
\end{align} 
\end{lemma}
\begin{proof}
We have
\begin{align*} 
    \E_{\D^0_i}[\z] &= \Pr_{\D_i}[\y =0]\E_{\D^0_i}[\z|\y = 0] + \Pr_{\D_i}[\y = 1]\E_{\D^0_i}[\z|\y = 1] = (1 - p_i)(-p_i) + p_i(1 - p_i) =0.
\end{align*}
Since $\E[\z|\y] = \y - p_i$, we have
\begin{align*}
    \E_{\D^0_i}[c(\x)\z] = \E_{\D^0_i}[c(\x)\E_{D^0_i}[\z|\y, \x]] = \E_{\D_i}[c(x)(\y - p_i)]. 
\end{align*}
\end{proof}

It follows that $\mS$ is multi-calibrated for $\mC, \D$ iff
$\E_{\D''_i}[c(\x) \z] \leq \alpha p_i(1 - p_i)$. 

To compare this to distribution reweighting, note that adding label noise keeps the marginal distribution on $\X$ the same. This is an advantage if we have a {\em distribution-specific} weak learner for $\mC$. But with label noise, the weak learner has to detect a much lower correlation $\alpha p_i(1 - p_i)$, as opposed to $\alpha/2$ when we use distribution reweighting. 
}

\end{document}